\documentclass{article}
\usepackage{amsmath}
\usepackage{amssymb}
\usepackage{bm}

\usepackage{amsthm}
\newtheorem{theorem}{Theorem}
\newtheorem{lemma}{Lemma}
\newtheorem{proposition}{Proposition}
\newtheorem{assumption}{Assumption}
\newtheorem{corollary}{Corollary}

\renewcommand{\theassumption}{(A\arabic{assumption})}

\renewenvironment{theorem}[1][]{
  \refstepcounter{theorem}% increment the theorem counter
  \noindent\textbf{Theorem \thetheorem\ifx#1\empty\else\ (#1)\fi:} \normalfont
}{\par}

\renewenvironment{proposition}[1][]{
  \refstepcounter{proposition}% increment the proposition counter
  \noindent\textbf{Proposition \theproposition\ifx#1\empty\else\ (#1)\fi:} \normalfont
}{\par}

\renewenvironment{assumption}[1][]{
  \refstepcounter{assumption}% increment the assumption counter
  \noindent\textbf{Assumption \theassumption\ifx#1\empty\else\ (#1)\fi:} \normalfont
}{\par}

\DeclareRobustCommand{\mb}[1]{\boldsymbol{#1}}

\DeclareRobustCommand{\KL}[2]{\ensuremath{\textrm{D}_{\textrm{KL}}\left(#1\;\|\;#2\right)}}

\newcommand{\mbb}{\mb{b}}

\newcommand{\mbd}{\mb{d}}
\newcommand{\mbe}{\mb{e}}
\newcommand{\mbf}{\mb{f}}

\newcommand{\mbh}{\mb{h}}

\newcommand{\mbm}{\mb{m}}

\newcommand{\mbr}{\mb{r}}
\newcommand{\mbs}{\mb{s}}

\newcommand{\mbu}{\mb{u}}
\newcommand{\mbv}{\mb{v}}
\newcommand{\mbw}{\mb{w}}
\newcommand{\mbx}{\mb{x}}
\newcommand{\mby}{\mb{y}}
\newcommand{\mbz}{\mb{z}}

\newcommand{\mbA}{\mb{A}}
\newcommand{\mbB}{\mb{B}}
\newcommand{\mbC}{\mb{C}}

\newcommand{\mbJ}{\mb{J}}
\newcommand{\mbK}{\mb{K}}
\newcommand{\mbL}{\mb{L}}
\newcommand{\mbM}{\mb{M}}

\newcommand{\mbQ}{\mb{Q}}
\newcommand{\mbR}{\mb{R}}
\newcommand{\mbS}{\mb{S}}

\newcommand{\mbU}{\mb{U}}
\newcommand{\mbV}{\mb{V}}

\newcommand{\mbalpha}{\mb{\alpha}}

\newcommand{\mbepsilon}{\mb{\epsilon}}

\newcommand{\mbmu}{\mb{\mu}}

\newcommand{\mbphi}{\mb{\phi}}

\newcommand{\mbrho}{\mb{\rho}}
\newcommand{\mbsigma}{\mb{\sigma}}
\newcommand{\mbtau}{\mb{\tau}}
\newcommand{\mbtheta}{\mb{\theta}}

\newcommand{\mbvarepsilon}{\mb{\varepsilon}}

\newcommand{\mbxi}{\mb{\xi}}

\newcommand{\mbDelta}{\mb{\Delta}}

\newcommand{\mbLambda}{\mb{\Lambda}}
\newcommand{\mbOmega}{\mb{\Omega}}
\newcommand{\mbPhi}{\mb{\Phi}}

\newcommand{\mbSigma}{\mb{\Sigma}}

\newcommand{\dif}{\mathop{}\!\mathrm{d}}
\newcommand{\diag}{\textrm{diag}}

\newcommand{\E}{\mathbb{E}}

\newcommand{\bbE}{\mathbb{E}}

\newcommand{\bbI}{\mathbb{I}}

\newcommand{\bbP}{\mathbb{P}}
\newcommand{\bbQ}{\mathbb{Q}}
\newcommand{\bbR}{\mathbb{R}}

\newcommand{\cA}{\mathcal{A}}

\newcommand{\cD}{\mathcal{D}}

\newcommand{\cF}{\mathcal{F}}

\newcommand{\cL}{\mathcal{L}}

\newcommand{\cN}{\mathcal{N}}

\newcommand{\cQ}{\mathcal{Q}}

\newcommand{\cS}{\mathcal{S}}
\newcommand{\cT}{\mathcal{T}}
\newcommand{\cU}{\mathcal{U}}

\newcommand{\trans}{\mathsf{T}}

\PassOptionsToPackage{numbers, compress}{natbib}
 \usepackage[preprint]{neurips_2026}

\usepackage[utf8]{inputenc} % allow utf-8 input
\usepackage[T1]{fontenc}    % use 8-bit T1 fonts
\usepackage{url}            % simple URL typesetting
\usepackage{booktabs}       % professional-quality tables
\usepackage{amsfonts}       % blackboard math symbols
\usepackage{nicefrac}       % compact symbols for 1/2, etc.
\usepackage{microtype}      % microtypography
\usepackage{nccmath}

\usepackage{makecell}

\usepackage[table,usenames,dvipsnames]{xcolor}
\usepackage[colorlinks,linktoc=all]{hyperref}
\hypersetup{citecolor=MidnightBlue}
\hypersetup{linkcolor=MidnightBlue}
\hypersetup{urlcolor=MidnightBlue}

\usepackage{cleveref}
\crefname{appendix}{appendix}{appendices}
\Crefname{appendix}{Appendix}{Appendices}
\crefname{assumption}{assumption}{assumptions}
\Crefname{assumption}{Assumption}{Assumptions}
\crefname{algocf}{Algorithm}{Algorithms}
\Crefname{algocf}{Algorithm}{Algorithms}
\crefname{table}{Table}{Tables}
\Crefname{table}{Table}{Tables}

\usepackage{enumitem}

\usepackage[most]{tcolorbox}
\newtheorem*{theorem*}{Theorem}
\tcolorboxenvironment{theorem}{
  enhanced,
  breakable,
  colback=white,
  colframe=black,
  boxrule=0.4pt,
  arc=4pt,
  left=5pt,
  right=5pt,
  top=4pt,
  bottom=4pt,
  before skip=6pt,
  after skip=6pt,
}
\tcolorboxenvironment{theoremnonum}{
  enhanced,
  breakable,
  colback=white,
  colframe=black,
  boxrule=0.4pt,
  arc=4pt,
  left=5pt,
  right=5pt,
  top=4pt,
  bottom=4pt,
  before skip=6pt,
  after skip=6pt,
}

\tcolorboxenvironment{proposition}{
  enhanced,
  breakable,
  colback=white,
  colframe=black,
  boxrule=0.4pt,
  arc=4pt,
  left=5pt,
  right=5pt,
  top=4pt,
  bottom=4pt,
  before skip=6pt,
  after skip=6pt,
}

\usepackage[ruled]{algorithm2e}

\DontPrintSemicolon
\SetAlgoLined
\SetKwInOut{Require}{Require}
\SetKwFor{For}{for}{do}{end for}

\SetCommentSty{mycommfont}

\SetKwComment{tcp}{\#\ }{}
\SetKwComment{tcpp}{\#\#\ }{}
\newcommand{\AlgStep}[1]{\(\triangleright\)\hspace{0.5em}#1\;}

\title{Closing the Approximation Gap in Simulation-free Latent SDEs}

\author{
\begin{tabular}{c}
Henry D.~Smith
\qquad Brian L.~Trippe\thanks{Equal advising}
\qquad Scott W.~Linderman\footnotemark[1] \\
{\normalfont Stanford University} \\
\texttt{\{smithhd,btrippe,swl1\}@stanford.edu}
\end{tabular}
}

\begin{document}

\maketitle

\begin{abstract}
Recovering dynamical systems from noisy observations is a recurring challenge across scientific domains, including neuroscience and physics. 
Latent stochastic differential equations (SDEs) address this by modeling the system as an unobserved state that evolves according to a learnable SDE and generates the observations.
Variational inference (VI) provides a tractable objective for fitting latent SDEs. 
Traditional VI algorithms evaluate this objective by numerical simulation over a time discretization, trading fidelity for computational cost.
A recent class of algorithms, simulation-free VI, sidesteps this tradeoff by parameterizing the posterior through its instantaneous marginals rather than its drift.
In this work, we show that the efficiency of existing simulation-free VI algorithms comes at a price: their parameterizations restrict the approximate posterior to a subset of the SDEs available to simulation-based methods, degrading posterior inference and parameter learning. 
We propose Helmholtz-SDE, a simulation-free VI algorithm that closes this gap by optimizing over path laws compatible with a prescribed collection of marginals. 
Helmholtz-SDE recovers dynamics more faithfully than prior simulation-free methods, with the largest gains under high posterior uncertainty.
It further matches the performance of simulation-based VI at a fraction of the runtime.
\end{abstract}
\section{Introduction}
Latent stochastic differential equations (SDEs) enable the unsupervised discovery of dynamical systems from noisy, irregularly sampled time series, a setting common across scientific domains. 
They posit an unobserved latent state that evolves continuously in time and generates observations at a finite set of time points \cite{archambeau2007gaussian, li2020scalable}. 
Latent SDEs have been applied to neuroscience \cite{elgazzar2024universal, hu2024modeling}, disease modeling \cite{durso2024probabilistic}, and reduced-order fluid dynamics \cite{course2023state}.
Fitting these models requires both learning the latent dynamics and approximating the posterior over continuous trajectories.

Variational inference (VI) translates the problem of fitting latent SDEs into an optimization problem. 
However, evaluating the variational objective (ELBO) requires computing an integral over the full time horizon (a path-space KL divergence).
Simulation-based VI algorithms estimate this term by sweeping across a time discretization at each gradient step, either by integrating an ODE to compute the posterior marginals or by simulating sample trajectories from the variational SDE.
A recent line of work, simulation-free VI  \cite{course2023state,course2023amortized,bartosh2025sde}, sidesteps sequential simulation using stochastic VI. 
Rather than integrating, they compute an unbiased estimate of the objective by sampling a random time together with a state at that time.
They make this estimate efficient by parameterizing the posterior through its one-time marginal distributions, i.e., the distribution of the latent state at each time, allowing states to be sampled directly without simulating the preceding trajectory.

In this work, we show that the family of approximate posterior distributions over which simulation-free VI optimizes is strictly smaller than that available to simulation-based VI.
One-time marginals do not determine a path law.
Existing simulation-free methods choose a single compatible drift from many possibilities, implicitly restricting the variational posterior. 
When observations are sparse or noisy, this arbitrary choice can substantially degrade both posterior inference and learned dynamics.

We propose \emph{Helmholtz-SDE}, a simulation-free VI algorithm that removes this restriction. 
Starting from any drift compatible with the prescribed marginals, we compute a Helmholtz correction that preserves those marginals while optimizing the ELBO. 
The resulting posterior family remains simulation-free but spans a larger class of path laws, closing the gap to simulation-based VI.
\section{Problem Statement}\label{sec:problem-statement}
\begin{figure*}[!t]
  \centering
  \includegraphics[width=\textwidth]{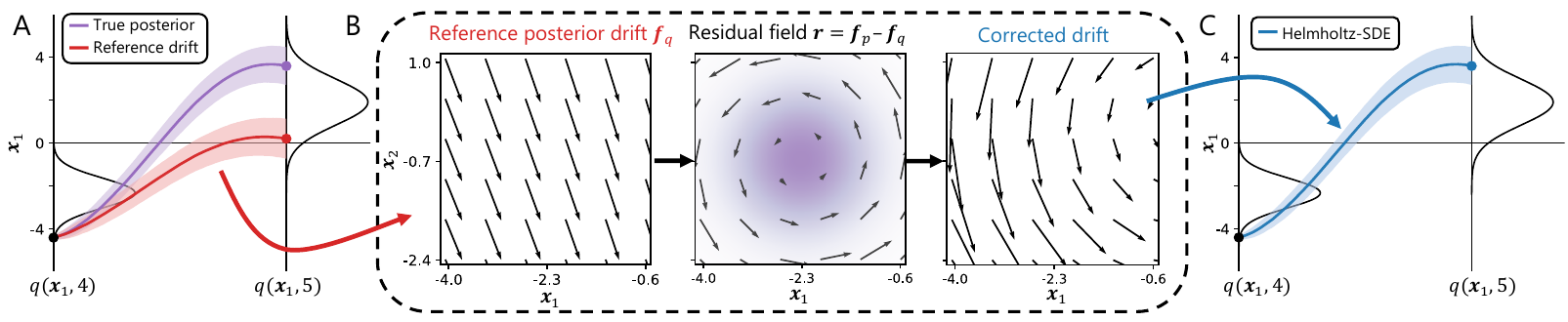}

  \caption{
  Overview of Helmholtz-SDE.
  \textbf{A}:
    Existing simulation-free VI algorithms commit to a single drift (red) for each set of one-time marginals (black). This choice induces transition distributions that can differ from those of the true posterior (purple), even when the one-time marginals agree.
    \textbf{B}:
    Helmholtz-SDE takes a drift consistent with a set of one-time marginals, then corrects it by a vector field that preserves the marginals.
    \textbf{C}: The resulting approximation (blue) better matches the posterior.
    }
  \label{fig:spiral-overview}
  \vspace{-2mm}
\end{figure*}
\paragraph{Model.}
We study \emph{latent stochastic differential equation} (SDE) models, in which a $K$-dimensional latent state $\mbx(t)$ evolves over the interval $0 \leq t \leq T$ and gives rise to $D$-dimensional observations $\mby(t_n)$ at times $\{t_n\}_{n=1}^N$. 
Let $\mbtheta$ denote the full collection of model parameters, including the parameters of the prior and the likelihood.
The prior over the latent state is the solution to an SDE
\begin{equation}\label{eqn:prior_sde} 
\bbP_{\mbtheta}\!\!:\; d\mbx(t) = \mbf_p(\mbx(t)| \mbtheta) dt + \mbSigma(\mbtheta)^{1/2} d\mbw(t), \quad  \mbx(0) \sim p(\mbx, 0 | \mbtheta), \quad 0 \leq t \leq T,
\end{equation}
where $\mbf_p(\cdot | \mbtheta)\!: \bbR^K \to \bbR^K$ is the drift function, 
$\mbSigma(\mbtheta)^{1/2}$ is the diffusion coefficient,
$(\mbw(t))_{0 \leq t \leq T}$ is a standard $K$-dimensional Brownian motion, 
and $p(\mbx, 0 | \mbtheta)$ is the initial state distribution. 
We denote the marginal density of $\bbP_{\mbtheta}$ at time $t$ by $p(\mbx, t | \mbtheta)$.
For simplicity of presentation, we restrict notation to a time-independent drift and a state- and time-independent diffusion coefficient.

We observe data $\cD = \{(t_n, \mby(t_n))\}_{n=1}^N$ representing noisy readouts of the latent state. 
These observations are conditionally independent given the latent state:  $p(\mby | (\mbx(t))_{0 \leq t \leq T}, \mbtheta) = \prod_{n=1}^N p(\mby(t_n) | \mbx(t_n), \mbtheta)$, where $\mby = \{\mby(t_n)\}_{n=1}^N$.
We describe our technical assumptions on the generative model in \Cref{app:subsec:assumptions}.

There are two goals in latent SDE models: (i) to learn the parameters $\mbtheta$ that maximize $\log p(\mby | \mbtheta)$; and (ii) to infer an approximate posterior over the latent trajectory $(\mbx(t))_{0 \leq t \leq T}$ given the data $\cD$ and parameters $\mbtheta$.
In practice, one may have multiple trials $\{\mby^{(m)}\}_{m=1}^{M}$, each representing an independent sample from the probabilistic model. 
In this case, the model parameters $\mbtheta$ are learned using all trials, and the posterior factorizes over trials. For simplicity, we suppress the trial index $m$.

\paragraph{Intractability.}
Both goals of learning and inference are generally intractable. 
For learning, evaluating the marginal likelihood $p(\mby | \mbtheta)$
involves marginalizing over the prior on latent trajectories \eqref{eqn:prior_sde}; this integral cannot be evaluated in closed form.
For inference, the posterior law is the solution to an SDE with the same diffusion coefficient $\mbSigma^{1/2}$ as the prior and drift obtained by solving a linear partial differential equation \cite{kushner1964differential, stratonovich1965conditional, pardoux1982equations,archambeau2011approximate}. 
In nonlinear or non-Gaussian settings, the posterior drift is unavailable in closed form, and solving the associated PDE can be computationally prohibitive even for small $K$.

\section{Background}\label{sec:background}

Variational inference (VI) provides a tractable framework for joint learning and inference in latent SDEs. We first describe the variational objective and survey existing VI algorithms. We then identify a class of stochastic VI algorithms that avoid sequential simulation, the setting of our method.

\subsection{Variational inference for latent SDE models}\label{subsec:variational_inference}
Variational inference (VI) approximates the intractable posterior with a distribution belonging to a tractable family, known as a  \emph{variational family} \cite{jordan1999introduction, blei2017variational}. 
For latent SDEs, we consider the family $\cQ$ comprised of diffusion processes
\begin{equation}\label{eqn:variational_family}
    \bbQ_{\mbphi}\!\!:\; d \mbx(t) = \mbf_q(\mbx(t), t | \mbphi) dt + \mbSigma(\mbtheta)^{1/2} d\mbw(t), \quad \mbx(0) \sim q(\mbx, 0 | \mbphi), \quad 0 \leq t \leq T,
\end{equation}
where the variational parameters $\mbphi$ specify the posterior drift $\mbf_q(\cdot | \mbphi)$ and the initial state distribution $q(\mbx, 0 | \mbphi)$.
This family is chosen because, when $\mbf_q(\cdot | \mbphi)$ and $q(\mbx, 0 | \mbphi)$ are sufficiently expressive,
the exact posterior will be in $\cQ$ \citep[see, e.g.,][]{hu2025sing}.

We minimize the negative evidence lower bound (ELBO) over distributions in $\cQ$,
\begin{equation}\label{eqn:elbo}
\begin{aligned}
   \cL(\mbtheta,\mbphi) &= \underbrace{\sum_{n=1}^N {-}\bbE_{\bbQ_{\mbphi}} \log p(\mby(t_n) | \mbx(t_n), \mbtheta)}_{\cL_{\mathrm{rec}}} 
   + \underbrace{\KL{q(\mbx, 0 | \mbphi)}{p(\mbx, 0 | \mbtheta)}}_{\cL_{\mathrm{init}}}  + \\
   &\underbrace{\frac{1}{2}\int_0^{T} \bbE_{\bbQ_{\mbphi}} \|\mbSigma(\mbtheta)^{-1/2}\{\mbf_q(\mbx(t), t | \mbphi) - \mbf_p(\mbx(t)| \mbtheta)\}\|^2 dt}_{\cL_{\mathrm{path}}},
\end{aligned}
\end{equation}
where $\cL_{\mathrm{rec}}$ is the reconstruction loss, $\cL_{\mathrm{init}}$ is the KL divergence to the prior initial distribution, and $\cL_{\mathrm{path}}$ is the path-space KL divergence to the prior.

The negative ELBO is an upper bound on the negative marginal log likelihood $-\log p(\mby | \mbtheta)$, and the gap is the KL divergence between $\bbQ_{\mbphi}$ and the exact posterior.
Therefore, when the exact posterior is contained in $\cQ$ it is the unique minimizer of \cref{eqn:elbo}.
\Cref{app:subsec:elbo} gives a derivation.
However, unlike the negative log likelihood, the ELBO provides a tractable objective for learning  model parameters, which we seek to estimate by jointly optimizing with respect to $\mbphi$ and $\mbtheta$.

Existing VI algorithms for latent SDEs fall into two broad classes based on how they evaluate the negative ELBO \cref{eqn:elbo}: simulation-based and simulation-free.
We describe each in turn.

\subsection{Simulation-based VI}\label{subsec:sim-based}

\emph{Simulation-based} algorithms evaluate the path KL $\cL_{\mathrm{path}}$ over a discretization of the interval $[0, T]$ at each gradient step. 
When $\bbQ_{\mbphi}$ is restricted to a tractable family, such as a Gaussian process with affine $\mbf_q$ and state-independent $\mbSigma$, the marginals of $\bbQ_{\mbphi}$ can be computed by integrating an ODE \cite{archambeau2007gaussian, hu2025sing,archambeau2007variational, duncker2019learning, verma2024variational, kohs2021variational, wildner2021moment,park2024amortized, parkstochastic2026}.
When $\mbf_q$ is parameterized more flexibly, e.g., by a neural network \cite{li2020scalable, ryder2018black, tzen2019neural, kidger2021efficient, oh2024stable}, samples $\mbx(t) \sim \bbQ_{\mbphi}$ are obtained by simulating the variational SDE \eqref{eqn:variational_family}. 
In both cases, the time complexity per gradient step scales with the discretization of $[0, T]$, introducing a tradeoff between approximation fidelity and computational cost.

\subsection{Simulation-free VI}\label{subsec:sim-free}
\emph{Simulation-free} VI algorithms introduce two ideas to avoid discretizing the variational SDE and accelerate optimization of $\cL(\mbtheta, \mbphi).$
The first is to optimize
a Monte Carlo estimate of \cref{eqn:elbo}
\begin{equation*}
    \widehat{\cL}(\mbtheta, \mbphi) = -N \log p(\mby(t_n) | \mbx(t_n)) + \log\frac{q(\mbx(0), 0 | \mbphi)}{p(\mbx(0), 0 | \mbtheta)} + \frac{T}{2} \|\mbSigma(\mbtheta)^{-1/2}(\mbf_q(\mbx(t), t | \mbphi) - \mbf_p(\mbx(t) | \mbtheta))\|^2.
\end{equation*}
Here, $n \sim \cU(\{1, \dots, N\})$ is a random observation index, $t \sim \cU([0, T])$ is a random time, and $\mbx(0), \mbx(t_n), \mbx(t) \sim \bbQ_{\mbphi}$ are posterior samples.
The estimate $\widehat{\cL}(\mbtheta, \mbphi)$ avoids the sum over all observations in $\cL_{\mathrm{rec}}$, the integral over the full trajectory in $\cL_{\mathrm{path}}$, and the expectations over $\bbQ_{\mbphi}$ in all three terms of $\cL(\mbtheta, \mbphi)$. 
Compared to optimizing the negative ELBO, this approach introduces variance that is managed by averaging across Monte Carlo samples.
Because the estimate samples random times rather than full paths, optimization amounts to a stochastic VI algorithm \cite{hoffman2013stochastic}.

The second idea is to parameterize the drift $\mbf_q$ that defines $\bbQ_{\mbphi}$ implicitly through its one-time marginal distributions.
In order for optimizing $\widehat{\cL}(\mbtheta, \mbphi)$ to provide runtime benefits over optimizing the negative ELBO, we must be able to draw samples $\mbx(t) \sim \bbQ_{\mbphi}$ without simulating the variational SDE \cref{eqn:variational_family}.
This is achieved by reversing the usual construction of $\bbQ_{\mbphi}$: we first prescribe the marginal distribution of the posterior at any single time $\{q(\mbx,t|\mbphi)\}_{0 \leq t \leq T}$ and then choose a drift $\mbf_q$ compatible with those marginals.
One can sample $\mbx(t)$ from the posterior with time complexity that does not depend on $t$.

For example, SDE Matching (\citet{bartosh2025sde}) parameterizes the posterior as having Gaussian marginals, $q(\mbx,t|\mbphi)=\cN(\mbm(t|\mbphi),\mbS(t|\mbphi))$ and show that a time-dependent, affine drift $\mbf_q(\mbx, t | \mbphi) = \mbA_q(t | \mbphi) \mbx + \mbb_q(t | \mbphi)$ defines a variational SDE with precisely these marginals.\footnote{The framework proposed by \citet{bartosh2025sde} can in principle support non-Gaussian one-time marginals, but the method is instantiated and evaluated only with Gaussian marginals. See \Cref{app:subsec:svise-sde-compare} for further discussion.}
A compatible drift is readily computed from $\mbm(t|\mbphi)$ and $\mbS(t|\mbphi))$ as
{\small
\begin{equation}\label{eqn:sym-sqrt-reference}
    \mbf_q(\mbx,t|\mbphi) = \left[\left(\frac{d}{dt}\mbS(t|\mbphi)^{1/2} \right) \mbS(t|\mbphi)^{-1/2} - \frac{1}{2}\mbSigma(\mbtheta)\mbS(t|\mbphi)^{-1} \right][\mbx(t)-\mbm(t|\mbphi)] + \frac{d}{dt}\mbm(t|\mbphi),
\end{equation} 
}
where $\mbS(t|\mbphi)^{1/2}$ is the symmetric square root of $\mbS(t|\mbphi)$.
SVISE (\citet{course2023state,course2023amortized}) also consider Gaussian marginals and a time-dependent affine drift, but they instead choose $ \mbA_q(t | \mbphi)$ to be symmetric.
We provide background on both algorithms in \Cref{app:subsec:svise-sde-compare}.
In both cases, unbiased gradient estimates of $\widehat{\cL}(\mbtheta, \mbphi)$ with respect to $\mbtheta$ and $\mbphi$ can be obtained via the reparameterization trick,
without backpropagating through a simulated trajectory \cite{kingma2013auto}.
\section{Helmholtz-SDE: Augmented simulation-free variational inference}\label{sec:method}

The simulation-free VI framework we described in \Cref{subsec:sim-free} leaves an important ambiguity unresolved: the one-time marginal distributions
$\{q(\mbx,t|\mbphi)\}_{0 \leq t \leq T}$ do not uniquely determine the variational path law $\bbQ_{\mbphi}$. 
In general, many drifts $\mbf_q$ realize the same collection of marginals, and existing simulation-free VI algorithms resolve this ambiguity by selecting a particular compatible drift for each set of marginals, independently of the variational objective in \cref{eqn:elbo}. 
This choice is not a benign implementation detail: drifts with identical one-time marginals can induce different path laws, including different multi-time joint distributions $(\mbx(s), \mbx(t))$ for $s \neq t$ (\Cref{fig:spiral-overview}). 
When posterior uncertainty is high, fixing a single compatible drift can therefore substantially degrade both posterior inference and learned dynamics.

We introduce Helmholtz-SDE, a simulation-free VI algorithm that removes this restriction by optimizing over path laws sharing the same one-time marginals. 
We first characterize the path-law ambiguity left unresolved by marginal-based parameterizations: drifts realizing the same marginals differ by divergence-free vector fields, which can be added without altering those marginals (\Cref{subsec:fpk}). 
We then show that the variational objective selects an optimal drift among this class (\Cref{subsec:helmholtz-decomp}).
Finally, we turn this characterization into a practical algorithm for Gaussian one-time marginals and state-independent diffusion (\Cref{subsec:gaussian-marginals}).

\subsection{The Fokker-Planck equation and divergence-free vector fields}\label{subsec:fpk}

The Fokker-Planck equation characterizes the ambiguity in the posterior path-law that arises from parameterizing the one-time marginals. 
Let $\bbQ_{\mbphi} \in \cQ$ be a variational distribution with drift $\mbf_q$ and one-time marginals $\{q(\mbx,t | \mbphi)\}_{0 \le t \le T}$. The marginals evolve according to the PDE
\begin{equation}\label{eqn:fpk}
    \frac{\partial}{\partial t} q(\mbx, t | \mbphi)
    =
    - \nabla_{\mbx} \cdot \bigl[q(\mbx,t | \mbphi)\mbf_q(\mbx,t | \mbphi)\bigr]
    + \frac{1}{2}\sum_{i,j}
    \mbSigma_{ij}(\mbtheta) \partial_{\mbx_i \mbx_j}
    q(\mbx,t | \mbphi),
    \quad 0 \le t \le T,
\end{equation}
known as the \emph{Fokker-Planck equation}, where $\nabla_{\mbx} \cdot \mbf = \sum_i \partial_{\mbx_i} \mbf_i(\mbx)$ is the divergence operator.

\Cref{eqn:fpk} depends on the variational drift $\mbf_q$ only through the divergence of $q(\mbx, t | \mbphi) \mbf_q(\mbx, t | \mbphi)$. 
Consequently, any vector field $\mbv$ satisfying $\nabla_{\mbx} \cdot [q(\mbx, t | \mbphi) \mbv(\mbx, t)] = 0$ for all  $(\mbx, t)$ leaves the marginals unchanged when added to $\mbf_q$. 
We call such vector fields \emph{$q$-divergence-free}. The following 
proposition shows that divergence-free vector fields are the only source of non-uniqueness.
\begin{proposition}\label{prop:div_free}
Consider two diffusions having drifts $\mbf$ and $\tilde{\mbf}$ and common diffusion coefficient. They induce the same one-time marginals $\{q(\mbx, t)\}_{0 \leq t \leq T}$ iff $\mbf - \tilde{\mbf}$ is $q$-divergence-free.   
\end{proposition}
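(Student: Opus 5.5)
The argument rests on the Fokker--Planck equation \eqref{eqn:fpk} and splits into the two directions, using the assumptions in \Cref{app:subsec:assumptions}. These assumptions give sufficient regularity of drifts and densities, and uniqueness of solutions to the Fokker--Planck equation in the relevant class of probability densities. Throughout, I read ``$\mbf-\tilde{\mbf}$ is $q$-divergence-free'' as $\nabla_{\mbx}\cdot[q(\mbx,t)(\mbf-\tilde{\mbf})(\mbx,t)]=0$ for all $(\mbx,t)$, where $q$ is the common marginal family. I also take the two diffusions to share the same initial law $q(\mbx,0)$; this is implicit in ``same one-time marginals'' and must be added as a hypothesis for the converse direction.

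\emph{Necessity.} Suppose both diffusions have marginals $\{q(\mbx,t)\}$. Each drift then satisfies \eqref{eqn:fpk} with the same $q$ and the same diffusion matrix $\mbSigma$. Subtracting the two equations cancels the time derivative and the second-order term, since both depend only on $q$ and $\mbSigma$. What remains is $\nabla_{\mbx}\cdot[q(\mbf-\tilde{\mbf})]=0$, which is exactly the $q$-divergence-free condition. The only care needed is that \eqref{eqn:fpk} holds classically, or at least in the weak sense against test functions. The conclusion holds in whichever sense the equation holds, and is pointwise under smoothness.

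\emph{Sufficiency.} Suppose the diffusion with drift $\mbf$ has marginals $q$, and that $\mbv=\tilde{\mbf}-\mbf$ satisfies $\nabla_{\mbx}\cdot[q\mbv]=0$. Then $q$ also solves the Fokker--Planck equation with drift $\tilde{\mbf}$:
\[
-\nabla_{\mbx}\cdot[q\tilde{\mbf}] = -\nabla_{\mbx}\cdot[q\mbf]-\nabla_{\mbx}\cdot[q\mbv] = -\nabla_{\mbx}\cdot[q\mbf].
\]
The marginals $\tilde q$ of the $\tilde{\mbf}$-diffusion solve the same linear PDE with the same initial condition $q(\cdot,0)$. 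To conclude $\tilde q=q$, I would invoke uniqueness of solutions to the Fokker--Planck equation. One route is the superposition principle or uniqueness for linear Fokker--Planck equations with locally Lipschitz drift of at most linear growth and nondegenerate constant diffusion. The other is a duality argument: solve the backward Kolmogorov equation $\partial_t u+\tilde{\mbf}\cdot\nabla u+\tfrac12\mathrm{tr}(\mbSigma\nabla^2u)=0$ with smooth terminal data, then show that $\int u\,(q-\tilde q)\,d\mbx$ is constant in time and vanishes at $t=0$.

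\emph{Main obstacle.} The uniqueness step is the hard part; the algebra in both directions is trivial. I would discharge it by appealing to the regularity assumptions in \Cref{app:subsec:assumptions}, namely Lipschitz or linear-growth drift and positive definite $\mbSigma$. Under these, the backward Kolmogorov equation has classical solutions with controlled growth, and the duality argument goes through. In the paper's Gaussian setting with affine drift, uniqueness is even more direct: the marginals are determined by the mean and covariance ODEs.
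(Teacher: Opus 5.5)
Your proposal is correct and follows the same route as the paper: subtract the (weak) Fokker--Planck equations of the two diffusions, which share $q$ and $\mbSigma$, and read off the weak $q$-divergence-free condition $\int (\mbf-\tilde{\mbf})\cdot\nabla\varphi\, q\, d\mbx = 0$. The one difference is that you explicitly raise and discharge the uniqueness of the Fokker--Planck equation needed for the converse direction, via superposition or backward-Kolmogorov duality. The paper's proof leaves this implicit in its claim that sharing marginals is equivalent to satisfying the same weak Fokker--Planck equation, so your version is the more complete of the two.
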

\noindent\emph{Proof.}
One analyzes the Fokker-Planck equations for the two diffusions. See \Cref{app:subsec:div-free}.\hfill$\square$

\subsection{Optimizing over path laws at fixed marginals}\label{subsec:helmholtz-decomp}

In \Cref{subsec:fpk}, we identified the family of drifts compatible with a given set of one-time marginal distributions. 
Existing simulation-free VI algorithms commit to a single member, independent of the ELBO. 
We show next that optimizing the ELBO over this family produces an optimal drift.

\paragraph{Problem formulation.} 
Fix a collection of one-time marginal distributions $\{q(\mbx, t | \mbphi)\}_{0 \le t \le T}$. 
We call a drift $\mbf_q(\mbx, t | \mbphi)$ a \emph{reference drift} for these marginals if the SDE with drift $\mbf_q$ and diffusion coefficient $\mbSigma^{1/2}$ induces one-time marginals $\{q(\mbx, t | \mbphi)\}_{0 \leq t \leq T}$.
For Gaussian one-time marginals, \cref{eqn:sym-sqrt-reference} provides an example of a reference drift. 
By \Cref{prop:div_free}, every other drift with the same one-time marginals can be written as $\mbf_q + \mbv$, where $\mbv(\mbx,t)$ is $q$-divergence-free. Thus, optimizing over all path laws compatible with these marginals is equivalent to optimizing over all divergence-free perturbations to the reference drift. 

In the ELBO, the reconstruction loss and the initial-time KL are unchanged by $\mbv$, so the only dependence on the path law is through the path KL. The optimal compatible drift solves
\begin{equation}\label{eqn:optimize}
    \inf_{\mbv} \; \bbE_\bbQ \| \mbSigma(\mbtheta)^{-1/2}\{\mbf_p(\mbx(t)| \mbtheta) -  \mbf_q(\mbx(t), t | \mbphi) - \mbv(\mbx(t), t) \}\|^2, 
\end{equation}
subject to $\nabla_{\mbx} \cdot [\mbv(\mbx, t) q(\mbx, t | \mbphi)] = 0, \ (\mbx, t) \in \bbR^K \times [0, T]$.
Since modifying $\mbv$ at time $s$ does not affect the marginal distribution at time $t$, the problem decouples over times $0 \leq t \leq T$.

\paragraph{Solution via the Helmholtz decomposition.}
Define the \emph{residual vector field} $\mbr(\mbx,t | \mbtheta, \mbphi) = \mbf_p(\mbx | \mbtheta) - \mbf_q(\mbx, t | \mbphi)$ which measures the gap between the prior and the reference posterior drifts. 
Intuitively, we would like to choose $\mbv = \mbr$ so that \cref{eqn:optimize} becomes zero. 
However, the divergence-free constraint  prevents us from adding the part of $\mbr$ that would alter the marginals. 

To isolate the part of $\mbr$ that alters the marginals, we appeal to the \emph{$q$-weighted Helmholtz decomposition}
\begin{equation}\label{eqn:helmholtz-decomp}
\mbr(\mbx, t | \mbtheta, \mbphi) = \mbSigma(\mbtheta)\nabla_{\mbx} \psi(\mbx,t) + \mbh(\mbx,t),
\end{equation}
where $\nabla_{\mbx} \psi$ is a gradient field and $\mbh$ is a $q$-divergence-free vector field \cite{bhatia2012helmholtz}.
$\mbSigma \nabla_{\mbx} \psi$ points in directions that alter the one-time marginals, while $\mbh$ points in directions that leave them invariant.
The $q$-weighted Helmholtz decomposition thus separates $\mbr$ into exactly the two components we care about: the part we must leave alone to preserve marginals, and the part we are free to add. 

Substituting $\mbr = \mbf_p - \mbf_q$ into \cref{eqn:optimize} and using the $q$-weighted Helmholtz decomposition  yields
\begin{equation}\label{eqn:helm-pythagorean}
    \bbE_\bbQ \| \mbSigma(\mbtheta)^{-1/2}\{\mbh(\mbx(t), t) - \mbv(\mbx(t), t)\}\|^2 + \bbE_\bbQ\|\mbSigma(\mbtheta)^{1/2}  \nabla_{\mbx} \psi(\mbx(t), t)\|^2.
\end{equation}
The cross term vanishes by integration by parts, using that $\mbh$ and $\mbv$ are $q$-divergence-free. The second term in \cref{eqn:helm-pythagorean} does not depend on $\mbv$, so the optimum of  \cref{eqn:optimize} is achieved by $\mbv^\star = \mbh$. 

In other words, among all diffusions sharing the prescribed marginals, the negative ELBO is minimized by adding the $q$-divergence-free component of $\mbr(\mbx, t | \mbtheta, \mbphi)$ to the reference drift.
While previous simulation-free VI algorithms commit to a single drift consistent with a set of marginals, we have shown that it is possible to optimize over \emph{all} such drifts.

\paragraph{Computing the Helmholtz decomposition.}
The remaining question is how to compute the $q$-weighted Helmholtz decomposition of the residual vector field, \cref{eqn:helmholtz-decomp}. 
Multiplying both sides of \cref{eqn:helmholtz-decomp} by $q(\mbx, t | \mbphi)$ and computing the divergence yields
\begin{equation}\label{eqn:poisson-eq}
    \nabla_{\mbx} \cdot [q(\mbx, t | \mbphi) \mbr(\mbx, t | \mbtheta, \mbphi)] = \nabla_{\mbx} \cdot[q(\mbx, t | \mbphi)\mbSigma(\mbtheta)\nabla_{\mbx} \psi(\mbx,t)] 
\end{equation}
since $\mbh$ is $q$-divergence-free. In \Cref{app:subsec:helmholtz-compute}, we show that \cref{eqn:poisson-eq} can be further rewritten in terms of a \emph{Stein operator} \cite{barbour1990stein,gorham2015measuring}, and by inverting this operator we obtain a probabilistic representation for the Helmholtz decomposition.
In general, this expression must be approximated via Monte Carlo.

\begin{algorithm}[t]
\caption{Helmholtz-SDE training}
\label{alg:helmholtz_sde_main}
\Require{data $\mathcal D$; prior drift $\mbf_p$; Gaussian marginals $(\mbm,\mbS)$; reference drift $\mbf_q$; order $\ell$}

\For{gradient iterations}{
    \AlgStep{Sample observation times, states to estimate $\widehat{\cL}_{\mathrm{rec}}$; compute closed-form Gaussian KL $\cL_{\mathrm{init}}$}

    \smallskip
    \AlgStep{Sample KL times $s_j \sim \mathrm{Unif}[0,T]$ and states $\mbx_{j,k} \sim \cN(\mbm(s_j),\mbS(s_j))$}

    \smallskip
    \AlgStep{Compute reference drift 
    $(\mbf_q)_{j,k} \leftarrow \mbf_q(\mbx_{j,k})$ and order-$\ell$ Helmholtz correction $\mbh_{j,k} \leftarrow \mbh(\mbx_{j,k})$ via \Cref{app:alg:helm_correction}}

    \smallskip
    \AlgStep{Set $(\mbf_{q,\mathrm{full}})_{j,k} \leftarrow (\mbf_q)_{j,k} + \mbh_{j,k}$ and estimate $\widehat{\cL}_{\mathrm{path}}$} 

    \smallskip
    \AlgStep{Update $(\mbtheta,\mbphi)$ by a gradient step}
}
\end{algorithm}
%\vspace{-2em}

\subsection{A variational inference algorithm for Gaussian one-time marginals}\label{subsec:gaussian-marginals}

Next, we translate our insights into a simulation-free VI algorithm that optimizes over multiple path laws for each collection of one-time marginals.
We follow prior works \cite{course2023state,course2023amortized,bartosh2025sde} and take the variational family to have Gaussian one-time marginals $q(\mbx, t | \mbphi) = \cN(\mbm(t | \mbphi), \mbS(t | \mbphi))$ with $\mbm(t | \mbphi)$ and $\mbS(t | \mbphi)$ continuously differentiable in $t$ and $\mbS(t | \mbphi)$ positive definite. 
We further restrict to state- and time-independent diffusion $\mbSigma(\mbtheta)^{1/2}$.

To leverage the simulation-free stochastic VI framework from \Cref{subsec:sim-free}, we must compute the variational drift for each collection of marginals $\{q(\mbx, t | \mbphi)\}$. By \Cref{subsec:helmholtz-decomp}, the optimal such drift is given by the $q$-weighted Helmholtz decomposition of the residual $\mbr = \mbf_p - \mbf_q$, where we choose the reference drift to be that in \cref{eqn:sym-sqrt-reference}. We now describe a tractable approximation to this decomposition and then state the resulting theoretical guarantee. \Cref{alg:helmholtz_sde_main} summarizes Helmholtz-SDE.

\paragraph{Polynomial approximation.}
As we discussed in \Cref{subsec:helmholtz-decomp}, the $q$-weighted Helmholtz decomposition admits a  probabilistic solution that can be approximated via Monte Carlo. However, in our preliminary experiments, we observed that this approximation produced high variance gradient estimates. We therefore approximate $\mbr(\mbx, t | \mbtheta, \mbphi)$ by a low-order Taylor expansion about the posterior mean $\mbm(t | \mbphi)$ and compute the $q$-weighted Helmholtz decomposition of the polynomial exactly.

As a simple example, let $\tilde{\mbr}^{(1)}(\mbx, t | \mbtheta, \mbphi)$ denote the linearization of $\mbr$ about $\mbm(t | \mbphi)$:
\begin{equation}\label{eqn:helmholtz-quadratic}
    \tilde{\mbr}^{(1)}(\mbx, t |\mbtheta, \mbphi)
    = \mbr(\mbm(t | \mbphi), t | \mbtheta, \mbphi)
    + D_{\mbx}\mbr(\mbx, t | \mbtheta, \mbphi)\vert_{\mbx = \mbm(t | \mbphi)} (\mbx - \mbm(t | \mbphi)),
\end{equation}
where $ D_{\mbx}\mbr$ is the Jacobian of $\mbr$.
In the setting of Gaussian marginals, the $q$-weighted Helmholtz decomposition of a polynomial produces gradient and divergence-free components of the same degree.
The decomposition of $\tilde{\mbr}^{(1)}$ is therefore linear, and its coefficients can be obtained by solving a linear system.
The computation scales cubically in the latent dimension. 
Since the reference drift $\mbf_q$ is also linear, the corrected SDE has linear drift, and its path law is a Gaussian process. 

More generally, for an order-$\ell$ polynomial approximation $\tilde{\mbr}^{(\ell)}(\mbx,t)$, the Helmholtz decomposition incurs time complexity $O(K^{2+\ell})$, which is tractable for linear and quadratic approximations in moderate latent dimensions; see \Cref{app:subsec:resid-approx} for details. Corrections with $\ell \geq 2$ leave the Gaussian process family, allowing non-Gaussian multi-time joint distributions while preserving Gaussian marginals. All experiments in \Cref{sec:experiments} use $\ell = 1$; we observed no notable improvements with $\ell > 1$.

\paragraph{Optimality of Helmholtz-SDE.}
When the prior is linear and the observation model is linear-Gaussian, the true posterior is a Gaussian process given by the Kalman--Bucy smoother \cite{kalman1961new}. 
Existing simulation-free VI algorithms commit to a single posterior drift for each $(\mbm(t | \mbphi), \mbS(t | \mbphi))$ and, as we show in \Cref{app:subsec:ou-theory}, can incur an \emph{unbounded} KL gap to the true posterior even in this conjugate setting.
Helmholtz-SDE provably closes the gap:
\begin{theorem}[Optimality of Helmholtz-SDE in the conjugate setting]\label{thm:helm-optimal}
    Suppose the prior $\bbP_{\mbtheta}$ is linear and the observation model is linear and Gaussian. Let $\cQ$ be the family of path laws induced by Helmholtz-SDE applied to all continuously differentiable $\mbm(t)$ and positive-definite $\mbS(t)$. Then $\inf_{\bbQ \in \cQ} \KL{\bbQ}{\bbQ^\star} = 0$, where $\bbQ^\star$ is the true posterior.
\end{theorem}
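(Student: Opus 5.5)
The plan is to exhibit a member of $\cQ$ that coincides with $\bbQ^\star$. In the conjugate setting the true posterior $\bbQ^\star$ is a Gaussian process given by the Kalman--Bucy smoother. By the characterization recalled in \Cref{sec:problem-statement}, it is the law of an SDE with diffusion coefficient $\mbSigma^{1/2}$ and drift $\mbf^\star(\mbx,t) = \mbA^\star(t)\mbx + \mbb^\star(t)$. This drift is affine because the prior drift is affine and the smoothing correction $\mbSigma\nabla_{\mbx}\log\beta(\mbx,t)$ is affine: the backward filter $\beta$ is a Gaussian-shaped function between observation times. I will choose $\mbm(t)$ and $\mbS(t)$ to be the smoother mean and covariance. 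These are continuously differentiable (they solve linear ODEs with jumps only in the filter, not the smoother) and positive definite, provided the initial distribution is nondegenerate. The initial distribution then matches, $q(\mbx,0)=q^\star(\mbx,0)$, and the one-time marginals of $\bbQ^\star$ and of the Helmholtz-SDE candidate agree.

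Next I would identify the Helmholtz-SDE drift. The reference drift $\mbf_q$ from \cref{eqn:sym-sqrt-reference} is affine, and the prior drift $\mbf_p$ is affine. Hence $\mbr=\mbf_p-\mbf_q$ is affine, and the order-$\ell$ Taylor approximation is exact for every $\ell\ge 1$. By \Cref{subsec:helmholtz-decomp}, the corrected drift $\mbf_q+\mbh$ is then the exact minimizer of \cref{eqn:optimize} over all $q$-divergence-free perturbations $\mbv$, pointwise in $t$. Since $\mbf^\star$ and $\mbf_q$ share marginals, \Cref{prop:div_free} shows that $\mbv^\star:=\mbf^\star-\mbf_q$ is $q$-divergence-free, so $\mbf^\star$ is an admissible competitor. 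It therefore suffices to show that $\mbf^\star$ is itself the minimizer, that is, that $\mbf^\star-\mbf_q$ equals the $q$-divergence-free part $\mbh$ of $\mbr$.

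This last identification is the main obstacle. I would argue it through the ELBO. With the marginals fixed, $\cL_{\mathrm{rec}}$ and $\cL_{\mathrm{init}}$ do not depend on the choice of compatible drift, so minimizing $\cL_{\mathrm{path}}$ over compatible drifts is the same as minimizing $\KL{\bbQ}{\bbQ^\star}$ over compatible laws, by the identity $\cL = -\log p(\mby) + \KL{\bbQ}{\bbQ^\star}$ from \Cref{app:subsec:elbo}. The value $0$ is attained by $\mbf^\star$, so $\mbf^\star$ minimizes \cref{eqn:optimize}. The decomposition \cref{eqn:helm-pythagorean} shows the minimizer is unique $q$-a.e.: the objective is $\bbE\|\mbSigma^{-1/2}(\mbh-\mbv)\|^2$ plus a constant, and $q$ has full support. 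Hence $\mbh=\mbf^\star-\mbf_q$. It remains to check that the Helmholtz decomposition exists and is unique for an affine $\mbr$ under Gaussian $q$. I would take $\psi$ quadratic and solve the linear system from \cref{eqn:poisson-eq}, which is solvable because the weighted Laplacian $\nabla\cdot(q\mbSigma\nabla\,\cdot)$ is invertible on mean-zero quadratics.

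Finally, with identical initial laws and drifts equal a.e. along $\bbQ$, Girsanov gives $\KL{\bbQ}{\bbQ^\star}=\cL_{\mathrm{init}}$-type term $+\tfrac12\int\bbE\|\mbSigma^{-1/2}(\mbf_q+\mbh-\mbf^\star)\|^2dt=0$. Therefore the infimum over $\cQ$ is $0$ and is attained. One technicality to address is the behavior at observation times: the smoother drift has discontinuities in $t$ there, so I would apply the argument piecewise on each $[t_n,t_{n+1}]$. Only time-measurability of the drift is needed for Girsanov, and $\mbm$, $\mbS$ remain $C^1$ because the smoothed marginals evolve continuously.
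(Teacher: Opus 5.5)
There is a genuine gap: your claim that the smoother mean and covariance are continuously differentiable is false. They are continuous but not $C^1$. The smoother ODEs are driven by the filter mean and covariance, which jump at each observation time. Equivalently, $\beta$ picks up the factor $p(\mby(t_n)\,|\,\mbx)$ at $t_n$, so the affine coefficients of $\mbf^\star=\mbf_p+\mbSigma\nabla_{\mbx}\log\beta$ jump there. Hence $\dot{\mbm}^\star$ and $\dot{\mbS}^\star$ jump at interior observation times even though $\mbm^\star,\mbS^\star$ do not.

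Concretely, take $dx=dw$, $x(0)\sim\cN(0,s_0)$, and one observation $y=x(1)+\varepsilon$ with $\varepsilon\sim\cN(0,\sigma^2)$. The posterior mean is $(s_0+t)\,y/(s_0+1+\sigma^2)$ on $[0,1]$ and constant afterwards. Also, $\dot{S}^\star$ jumps by $2(s_0+1)/(s_0+1+\sigma^2)$ at $t=1$.

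Every element of $\cQ$ has $C^1$ marginal parameters, and $\KL{\bbQ}{\bbQ^\star}=0$ forces $\bbQ=\bbQ^\star$. So in general $\bbQ^\star\notin\cQ$, the infimum is \emph{not} attained, and your conclusion is false as stated. The piecewise remedy in your last paragraph does not help. The pieces do not glue into a single pair $(\mbm,\mbS)$ that is $C^1$ on all of $[0,T]$, which is exactly what membership in $\cQ$ requires.

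The missing step is an approximation argument, which is how the paper proceeds. The rest of your argument carries over unchanged: exactness of the Helmholtz decomposition for an affine residual, and identifying the correction via uniqueness of the fixed-marginal minimizer through the ELBO identity. Smooth $(\mbm^\star,\mbS^\star)$ inside neighborhoods of the interior observation times of total length at most $\epsilon$. This gives $C^1$ functions $(\mbm^\epsilon,\mbS^\epsilon)$ that agree with $(\mbm^\star,\mbS^\star)$ outside those neighborhoods, stay positive definite, and have derivatives bounded uniformly in $\epsilon$.

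Apply your uniqueness argument to $q^\star$, whose reference drift is defined for $t\neq t_n$, using that \cref{eqn:optimize} decouples over $t$. This gives $\mbf_q^\star+\mbh^\star=\mbf^\star$ for a.e.\ $t$. The reference drift and the correction at time $t$ depend only on $(\mbm,\mbS,\dot{\mbm},\dot{\mbS})$ at $t$. Hence $\mbf_q^\epsilon+\mbh^\epsilon=\mbf^\star$ for a.e.\ $t$ outside the neighborhoods.

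Inside the neighborhoods, the affine coefficients of both drifts and the second moments under $\bbQ^\epsilon$ are bounded uniformly in $\epsilon$. So the Girsanov integrand is bounded, and the path KL is $O(\epsilon)$. The initial KL tends to zero, and is exactly zero if the neighborhoods avoid $t=0$. This yields $\inf_{\bbQ\in\cQ}\KL{\bbQ}{\bbQ^\star}=0$ without attainment, which is all the theorem claims.
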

\noindent\emph{Proof.}
Linearity of the residual makes the Helmholtz decomposition exact; see \Cref{app:subsec:helmholtz-gaussian}. \hfill$\square$

\begin{figure*}[!t]
  \centering
\includegraphics[width=\textwidth]{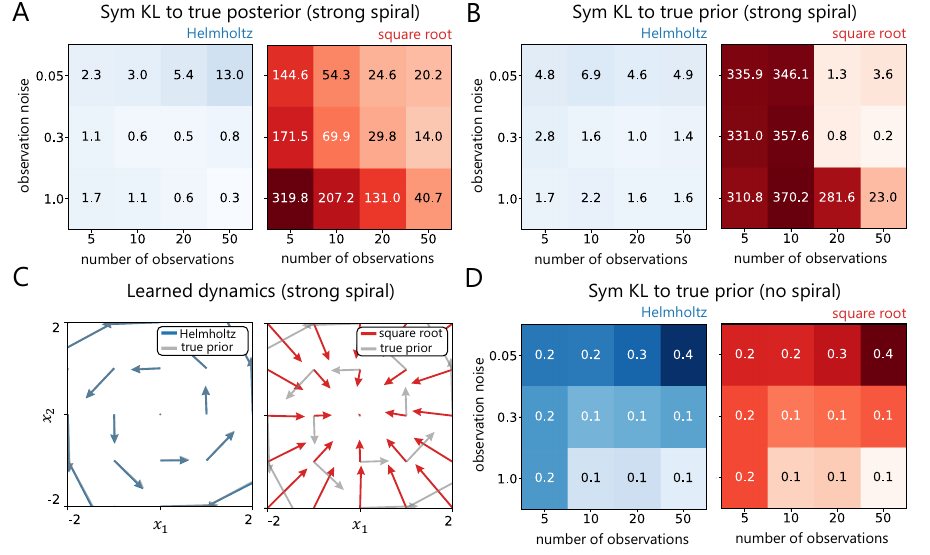}
  \caption{
    Comparison of Helmholtz-SDE and the square root gauge on two OU spiral datasets: strong spiral, $\omega{=}2\pi$ (\textbf{A}-\textbf{C}), and no spiral, $\omega{=}0$ (\textbf{D}). 
    \textbf{A}: The symmetric KL distance (nats) to the posterior across observation noise and number of observations.
    \textbf{B}: The symmetric KL distance to the true prior. 
    \textbf{C}: The dynamic learned by Helmholtz-SDE and the square root gauge against the true prior drift at noise $\sigma = 0.3$ and $N = 10$ observations.
    \textbf{D}: Same as \textbf{B}, but for the no spiral dataset.
  } 
  \label{fig:spiral-results}
  \vspace{-1em}
\end{figure*}

\section{Related Work}

\paragraph{Dynamical systems models.}
Beyond the VI algorithms for latent SDEs discussed in \Cref{sec:background}, several adjacent works model continuous-time, irregularly sampled data. \citet{neklyudov2023action} and \citet{zhang2024trajectory} propose simulation-free methods inspired by score-based generative modeling \cite{hyvarinen2005estimation,songscore2021,lipman2023flow,albergo2025stochastic}, but model the process directly in observation space rather than via a latent SDE. \citet{daemsvariational2024} study latent SDEs driven by fractional Brownian motion. \citet{kiyohara2025neural} develop a VI algorithm permitting simulation-free sampling from both prior and posterior but with a mean-field variational family. A separate line of work replaces the SDE with a latent ODE \cite{heinonen2018learning,chen2018neural,rubanova2019latent,yildiz2019ode2vae}. None of these works address the path-law non-uniqueness arising from marginal-based posterior parameterizations.

\paragraph{Divergence-free constraints.}
Divergence-free vector fields and Helmholtz decompositions appear across machine learning in problems with physical or geometric structure: as parameterizations encoding continuity \cite{richter2022neural} or incompressibility \cite{tompson2017accelerating,li2026project}, and in methods that model or decompose non-gradient vector-field structure \cite{berlinghieri2023gaussian,xu2024hhd,qi2024hdnet,horvatgauge2024,petrovic2025curly}.
Our use is different: rather than imposing a divergence-free condition on the dynamics, we use the Helmholtz decomposition to characterize, at fixed one-time marginals, the family of compatible path laws and the optimal correction to a reference drift.
\vspace{-1em}
\section{Experiments}\label{sec:experiments}

\begin{figure*}[!t]
  \centering
  \includegraphics[width=\textwidth]{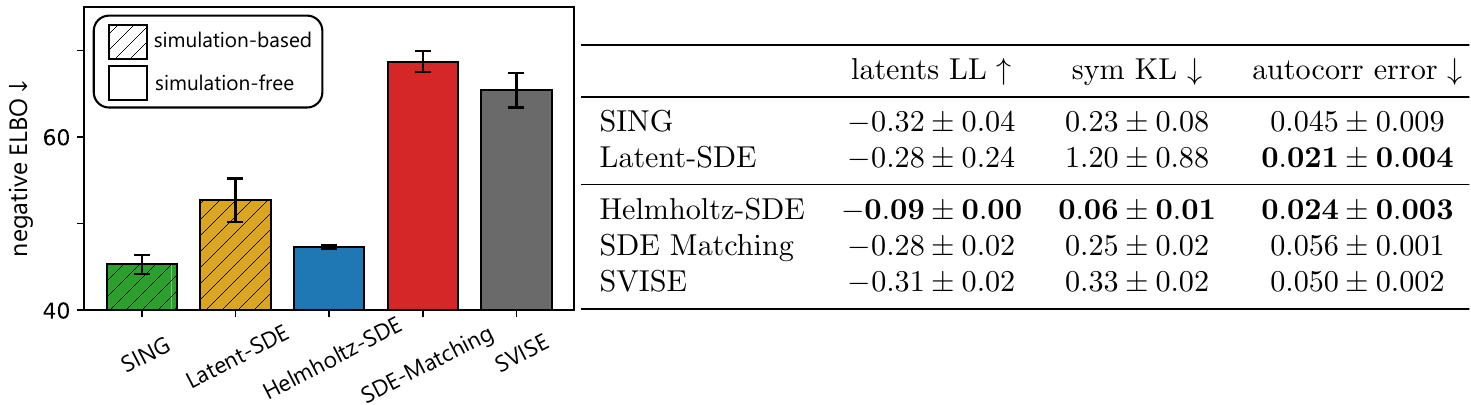}

    \caption{Helmholtz-SDE closes the gap between simulation-based and simulation-free VI algorithms on the noisy Lorenz attractor. Negative ELBO and LL of true latents under the posterior are averaged over trials; sym KL is averaged over one-time marginals. Autocorr error is the entrywise RMSE of the global $3{\times}3$ autocorrelation matrix, averaged over horizon $t{=}2$. Errors are $\pm2$SE over $5$ replicates.}
  \label{fig:lorenz}
  \vspace{-2mm}
\end{figure*}

We validate experimentally that Helmholtz-SDE shrinks the expressivity gap between simulation-free and simulation-based VI, with downstream improvements in dynamics learning. 
We begin with a linear, conjugate setting where the posterior is tractable and the gap can be quantified directly (\Cref{subsec:ou}), then move to nonlinear synthetic dynamics (\Cref{subsec:lorenz}) and two real-world systems: predator-prey cycles (\Cref{subsec:predator-prey}) and reduced-order fluid dynamics (\Cref{subsec:rom}).

Throughout, we match implementation choices across methods.
In particular, existing VI algorithms differ in how they amortize the posterior approximation: across time, by representing it as a function of $t$, and across trials, by using an encoder to produce trial-specific parameters. 
Both strategies reduce memory at some cost to expressivity. 
We use the same amortization scheme whenever a method supports it.
Full experimental details are in \Cref{app:sec:ou-spiral,app:sec:nonlinear-sde}.

\subsection{Ornstein-Uhlenbeck (OU) spiral}\label{subsec:ou}
We first study the impact of augmenting the variational family in the conjugate setting, where the suboptimality of both the variational posterior and the learned prior can be analyzed directly. We consider the two-dimensional OU spiral
\begin{equation}
\label{eqn:ou-spiral}
    \bbP\!: \; d \mbx(t) = (-\alpha \bbI_2 + \omega \mbJ)\mbx(t)dt + d\mbw(t), 
    \quad
    p(\mbx,0) = \cN\left(\mb0, \tfrac{1}{2\alpha}\bbI_2\right),
    \quad
    \mbJ = \bigl(\begin{smallmatrix} 0 & -1 \\ 1 & 0 \end{smallmatrix}\bigr)
\end{equation}
whose drift decomposes into a contracting gradient field $-\alpha \mbx$ and a $p$-divergence-free rotational field $\omega \mbJ \mbx$. Observations follow the linear-Gaussian model $p(\mby(t_i) | \mbx(t_i)) = \cN(\mbC\mbx(t_i)+\mbd, \sigma^2 \mbC\mbC^\top)$.

\paragraph{Suboptimality of inference.}
\Cref{thm:helm-optimal} establishes that Helmholtz-SDE has zero KL gap to the posterior in this setting. In \Cref{app:subsec:ou-theory}, we show that the choices of variational drift made by SDE Matching \cite{bartosh2025sde} and SVISE \cite{course2023state,course2023amortized}, referred to as the ``square root'' and  ``symmetric'' gauges, respectively, incur KL gap that grows linearly in $\omega$. We validate this in simulation. Fixing $T_{\max} = 5$ and $\alpha = 0.2$, we vary observation noise $\sigma$, number of observations $N$, and rotational speed $\omega$. 
\Cref{fig:spiral-results}A shows that when circulatory probability current is strong under the prior, Helmholtz-SDE achieves a variational approximation orders of magnitude closer in symmetric KL to the true posterior than the square root gauge, with the largest gains at sparse observations and large noise. 
When $\omega = 0$ (\Cref{fig:spiral-2pi}), the posterior covariance is isotropic, in which case the square root gauge can represent the posterior drift (zero gap). Results for the symmetric gauge and $\omega \in \{\pi/4, \pi/2, \pi\}$ are in \Cref{app:subsec:ou-spiral-supplement}.

\paragraph{Degradation in learning.}
We next test whether posterior misspecification degrades learning. 
Using the same grid of $(\sigma, N, \omega)$ values, we jointly learn the prior (parameterized by a neural network drift) and observation model from $1024$ independent trials. 
\Cref{fig:spiral-results}B and D show that the regimes where the square root gauge yields the worst posterior approximation are also where prior learning degrades most. In many cases, the divergence-free correction is the difference between recovering the correct drift and learning an incorrect one (\Cref{fig:spiral-results}C).
However, consistent with previous work \cite{turner2011two}, better variational approximations do not translate monotonically to improvements in learning.

\begin{figure*}[!t]
  \centering
  \includegraphics[width=\textwidth]{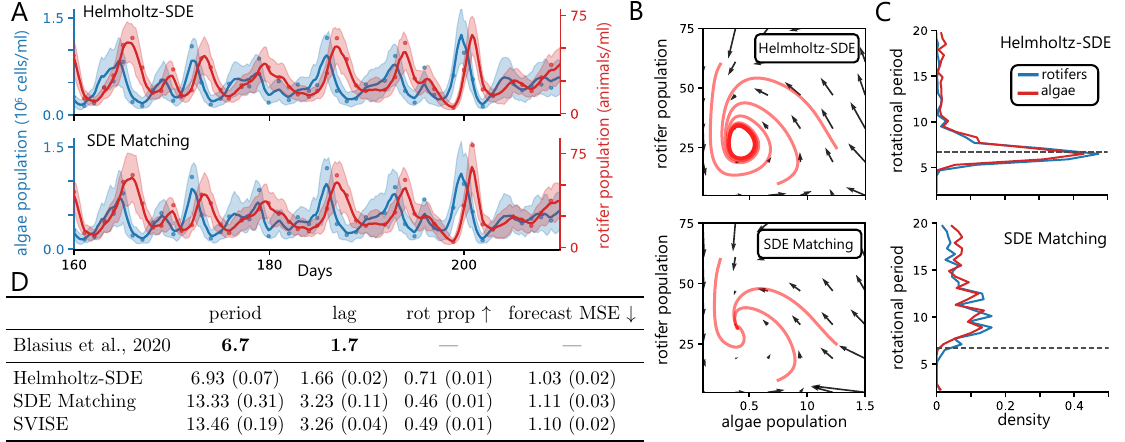}
  \caption{Helmholtz-SDE recovers predator-prey dynamics more faithfully than SDE Matching. \textbf{A}: Posterior mean $\pm 2$ SD over a representative window. 
  \textbf{B}: Phase portraits of learned dynamics: Helmholtz-SDE recovers a limit cycle, SDE Matching decays to a fixed point. 
  \textbf{C}: Peak-period distribution across 100-day forward simulations from the learned prior; dashed line marks the empirical period of 6.7 days.
  \textbf{D}: Rotation period, predator lag, and proportion of time in coherent rotation computed per replicate as the median across forward simulations using the wavelet analysis of \citet{blasius2020long}; forecast MSE, in log coordinates. Values are median (IQR) across 10 replicates.
 }
  \label{fig:rotifer-algae}
  \vspace{-1.5em}
\end{figure*}
\subsection{Noisy Lorenz attractor}\label{subsec:lorenz}

In the following three subsections, we demonstrate that the advantages from \Cref{subsec:ou} extend to nonlinear dynamics. 
We first consider Lorenz attractor dynamics, a classical chaotic system. 
The standard setup of \citet{li2020scalable} uses small process noise and dense, low-noise observations; in this regime the posterior concentrates tightly around the true latent trajectory (\Cref{fig:lorenz-compare}A). 
% making methods that differ in posterior representation indistinguishable. 
To simulate a setting with non-trivial posterior uncertainty, we sample latents on $[0, 5]$ with increased process noise and observe them every $\Delta t = 0.25$ (\Cref{fig:lorenz-compare}B). 
We perform joint inference and learning using $1024$ trials with $K{=}4$ latent dimensions, evaluating both one-time (symmetric KL to the true prior) and multi-time (lagged autocorrelation error) statistics of the prior.

We compare Helmholtz-SDE to simulation-free (SDE Matching \cite{bartosh2025sde}, SVISE \cite{course2023state,course2023amortized}) and simulation-based (SING \cite{hu2025sing}, Latent-SDE \cite{li2020scalable}) baselines (\Cref{fig:lorenz}). 
Among simulation-free methods, Helmholtz-SDE achieves a ${\sim}20$\,nat smaller path KL than the alternatives, reflecting the inability of a pre-specified drift to represent the posterior path law. 
As a result, only Helmholtz-SDE accurately learns the outward-spiraling dynamics within each lobe of the attractor (\Cref{fig:lorenz-samples}). 
Helmholtz-SDE also matches or exceeds simulation-based methods in dynamics recovery. 
SING attains a slightly tighter nELBO but at substantially higher cost: on an H100 GPU, Helmholtz-SDE reaches per-trial nELBO below $50$\,nat in $157.7 \pm 9.6$\,s, whereas SING requires $3525.7 \pm 697.5$\,s, over $20\times$ slower.

\vspace{-1em}
\subsection{Predator-prey system}\label{subsec:predator-prey}

Predator-prey systems exhibit cyclic dynamics in which the predator population lags the prey, and recovering this lag structure is important to ecological interpretation \cite{lotka1925elements,volterra1926fluctuations,elton1942ten,utida1957cyclic,berryman2002population}.
Because the lag is a multi-time statistic, it directly probes the path law approximation made by simulation-free algorithms.

We study the dataset of \citet{blasius2020long}, who cultivate rotifers (\textit{B.~calyciflorus}, a freshwater zooplankton) feeding on algae in a controlled environment and measure both populations daily for up to 374 days. 
The system exhibits persistent cycles with a mean period of 6.7 days, in which the rotifer population lags the algae by approximately 1.7 days. 
Rather than imposing a biological model on dynamics \cite{mccauley1996structured}, we aim to learn a generative model of the dynamics directly from data.

To this end, we consider the longest trial and restrict to windows of coherent oscillation following \citet{blasius2020long}, yielding $255$ days of data across $4$ subtrials. We hold out the last $8$ days of each (${\sim}1$ cycle) for forecasting. 
We then fit Helmholtz-SDE, SDE Matching, and SVISE on the log-transformed rotifer and algae populations with $K{=}2$ latent dimensions.

All methods produce plausible posterior approximations over the observed populations (\Cref{fig:rotifer-algae}A).
However, the learned generative dynamics differ qualitatively: Helmholtz-SDE yields a stable limit cycle, while SDE Matching decays to a fixed point (\Cref{fig:rotifer-algae}B).
Moreover, the rotational dynamics learned by Helmholtz-SDE closely match the true biological system, with median period $6.93$ days and predator-prey lag $1.66$ days, both close to the estimates from \citet{blasius2020long} (\Cref{fig:rotifer-algae}C,D). 
SDE Matching, in contrast, recovers a period twice as long ($13.33$ days) and a correspondingly inflated lag ($3.23$ days). 
We provide details on our wavelet analysis in \Cref{app:subsec:predator-prey}.

\subsection{Reduced order models of fluid dynamics}\label{subsec:rom}

\begin{figure*}[!t]
  \centering
  \includegraphics[width=\textwidth]{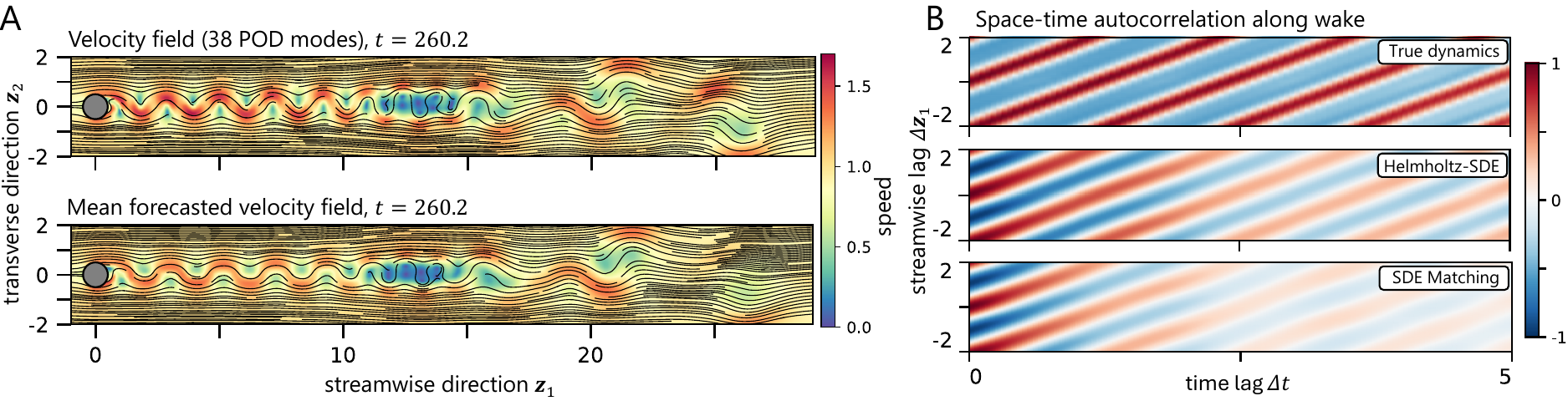}
    \caption{
    Helmholtz-SDE learns coherent wake dynamics in high-dimensional fluid flow past a 2D cylinder.
    \textbf{A}: True and forecasted mean velocity fields at $4$ time units into the forecast window.
    \textbf{B}: Joint space–time autocorrelation along the streamwise direction in the vortex region $[1, 10] \times [-0.1, 0.1]$, from prior samples at the true $t{=}0$ initial condition.
    Diagonal banding indicates vortices being advected downstream at a coherent speed; rapid decay reflects loss of temporal coherence. 
    }
  \label{fig:fluid-dynamics}
  \vspace{-1.5em}
\end{figure*}

As a final case study, we apply Helmholtz-SDE to a problem in computational fluid dynamics: modeling a two-dimensional incompressible flow past a circular cylinder at Reynolds number $2 \cdot 10^3$, a regime with strong, persistent vortices in the wake (\Cref{fig:fluid-dynamics}A). 
We use the dataset of \citet{course2023state}, in which the Navier--Stokes equations are discretized and solved on a grid of nearly $6 \cdot 10^5$ points using the immersed boundary projection method \cite{taira2007immersed}.

Reduced-order modeling (ROM) provides a toolkit for representing such high-dimensional systems with low-dimensional surrogates. 
Classical projection-based approaches require explicit access to the discretized PDE \cite{brunton2020machine}.
Data-driven approaches instead learn reduced dynamics directly from snapshots of the time series, sidestepping projection of the PDE.
Following \citet{course2023state}, we apply dimensionality reduction (POD, \cite{sirovich1987turbulence, berkooz1993proper}) to velocity snapshots and retain $K = 38$ modes capturing $90\%$ of the kinetic energy.
We observe each POD coefficient, perturbed by Gaussian noise, every $\Delta t = 0.3$ time units over an interval of length $195.2$. 
We hold out $10$ time units for forecasting.

Helmholtz-SDE successfully learns a ROM of the cylinder dynamics (\Cref{fig:fluid-dynamics}A).
The space--time autocorrelation along the wake reveals that, compared to SDE Matching, Helmholtz-SDE better recovers the traveling-wave structure of vortex advection (\Cref{fig:fluid-dynamics}B). 
This translates to a $20.4\%$ reduction in integrated MSE ($18.9\%$ for SVISE) over the 10-unit forecast window, at modest additional training cost (approx.~3$\times$ larger per-iteration cost on H100 GPU). 
This example demonstrates that even for moderate latent dimensions $K$, our approximation to the Helmholtz decomposition remains tractable and yields improvements in dynamics learning.
\vspace{-1em}
\section{Discussion}\label{sec:discussion}
Simulation-free variational inference promises the accuracy of simulation-based algorithms at a fraction of the cost, but only when the variational family is rich enough to represent the posterior. We have shown that under high posterior uncertainty, existing simulation-free algorithms fall short: by committing to an arbitrary drift consistent with their one-time marginals, they restrict the posterior to a subset of the path laws available to simulation-based methods. Helmholtz-SDE removes this restriction. Starting from any reference drift, it adds a closed-form correction that preserves the prescribed marginals while optimizing over a larger family of path laws. The result is a simulation-free algorithm that closes the expressivity gap to simulation-based VI and yields more faithful recovery of the underlying dynamics.

Two limitations point to natural extensions. 
First, the analytical form for the Helmholtz correction depends on the state independence of the diffusion coefficient and Gaussianity of the marginals. Relaxing either assumption, for example by learning the divergence-free correction, would broaden the algorithm's applicability.
Second, higher-order polynomial corrections to the residual scale unfavorably as the latent dimension grows; in our experiments we did not observe gains beyond the linear approximation, but more efficient parameterizations may be beneficial in settings we have not explored. 
More broadly, we view simulation-free VI as a promising path toward scalable unsupervised discovery of dynamical systems, and Helmholtz-SDE as a step toward making that path competitive with, and ultimately preferable to, its simulation-based counterpart.

\begin{ack}
HDS is supported by the NSF Graduate Research Fellowship (DGE-2146755) and the Knight-Hennessy graduate fellowship. 
HDS and SWL are supported by grants from the NIH BRAIN Initiative (U19NS113201, R01NS131987, R01NS113119, \& RF1MH133778), the NSF/NIH CRCNS Program (R01NS130789), and the Sloan, Simons, and McKnight Foundations.
The authors have no competing interests to declare.

We thank Amber Hu, Cole Citrenbaum, Hannah Lee, Joshua Lunger, and Renato Berlinghieri for their helpful feedback on the manuscript. 
We thank Peter Pao-Huang for discussions at the beginning of the project regarding divergence-free vector fields. 
\end{ack}

\clearpage
% \nocite{*}
\bibliographystyle{unsrtnat}
\bibliography{refs}
\clearpage
\appendix
\crefalias{section}{appendix}
\crefalias{subsection}{appendix}
\crefalias{subsubsection}{appendix}

\section*{Appendix}
\begin{itemize}
    \item[~] \Cref{app:sec:background}: Background on variational inference for latent SDEs
    \begin{itemize}
        \item \ref{app:subsec:assumptions}: Technical assumptions
        \item \ref{app:subsec:elbo}: ELBO derivation
        \item \ref{app:subsec:div-free}: Divergence-free vector fields
        \item \ref{app:subsec:gaussian}: Gaussian one-time marginals
        \item \ref{app:subsec:svise-sde-compare}: SVISE (\citet{course2023state}) and SDE-Matching (\citet{bartosh2025sde})
    \end{itemize}

    \item[~] \Cref{app:sec:helm-sde}: Helmholtz-SDE
    \begin{itemize}
        \item \ref{app:subsec:helmholtz-compute}: Computing the Helmholtz decomposition
        \item \ref{app:subsec:helmholtz-gaussian}: The Helmholtz decomposition for Gaussian marginals
        \item \ref{app:subsec:resid-approx}: Polynomial approximation to residual vector field
        \item \ref{app:subsec:alg-summary}: Algorithm summary
    \end{itemize}
    \item[~] \Cref{app:sec:ou-spiral}: Linear SDE experiments: Ornstein-Uhlenbeck spiral
    \begin{itemize}
        \item \ref{app:subsec:ou-theory}: Theoretical results for OU spiral
        \item \ref{app:subsec:ou-spiral-supplement}: Simulation experiment details
    \end{itemize}
    \item[~] \Cref{app:sec:nonlinear-sde}: Nonlinear SDE experiments
    \begin{itemize}
        \item \ref{app:subsec:lorenz}: Noisy Lorenz attractor
        \item \ref{app:subsec:predator-prey}: Experimental predator-prey system
        \item \ref{app:subsec:rom}: Reduced order models for fluid dynamics 
    \end{itemize}
\end{itemize}

\clearpage
\section{Background on variational inference for latent SDEs}\label{app:sec:background}

In this section, we provide additional background on variational inference for latent SDEs. 
First, in \Cref{app:subsec:assumptions} we describe our formal assumptions necessary for the probabilistic model and variational inference objective to be well-posed.   
Thereafter in \Cref{app:subsec:elbo} we derive the evidence lower bound (ELBO) to the observed data log likelihood.
In \Cref{app:subsec:div-free} we provide further discussion of divergence-free vector fields.
In particular, we prove  \Cref{prop:div_free} as well as a characterization of divergence-free vector fields for the setting of Gaussian one-time marginals (\Cref{prop:lyapunov}).
We conclude in \Cref{app:subsec:svise-sde-compare} by discussing SVISE (\citet{course2023state, course2023amortized}) and SDE Matching (\citet{bartosh2025sde}). For each algorithm we derive the choice of posterior drift, and we discuss how this choice relates to Helmholtz-SDE.

For full generality, we allow the prior drift $\mbf_p$ to be time-dependent and for the diffusion coefficient $\mbSigma$ to be both space- and time-dependent. 
For ease of notation, we drop the dependence of the prior and likelihood on the model parameters $\mbtheta$, as well as the dependence of the variational posterior on its parameters $\mbphi$.

\subsection{Technical Assumptions}\label{app:subsec:assumptions}
 
In this subsection, we detail our formal assumptions made on the prior \eqref{eqn:prior_sde} and posterior \eqref{eqn:variational_family} processes.

\paragraph{Existence and uniqueness.}
For the SDEs \cref{eqn:prior_sde} and \cref{eqn:variational_family} to admit unique strong solutions, we follow the classical route of imposing Lipschitz continuity and linear growth on the drifts and diffusion (see, e.g., \citet[Theorem 5.2.1]{oksendal2013stochastic}).
 
\begin{assumption}[Drift and Diffusion Regularity]\label{app:assum:regularity}
There exist constants $C_1, C_2 \geq 0$ such that, for all $t \in [0,T]$ and all $\mbx, \mby \in \bbR^K$,
\begin{equation*}
\begin{aligned}
    &\|\mbf(\mbx,t)- \mbf(\mby,t)\| + \| \mbSigma(\mbx, t)^{1/2} -  \mbSigma(\mby, t)^{1/2} \|_F \leq  C_1\|\mbx - \mby\|, \quad &\text{(Lipschitz)}\\ 
    &\|\mbf(\mbx,t)\| + \|\mbSigma(\mbx, t)\|_F \leq  C_2(1+\|\mbx\|). \quad &\text{(linear growth)}
\end{aligned}
\end{equation*}
The condition is required of every $\mbf_p$ and $\mbSigma$ associated with $\mbtheta$, and of every $\mbf_q$ associated with $\mbphi$.
\end{assumption}
 
\paragraph{Densities.}
We additionally require that, for every $0 \leq t \leq T$, the one-time marginals of $\bbP$ and of any $\bbQ \in \cQ$ are absolutely continuous with respect to the $K$-dimensional Lebesgue measure. The corresponding densities are written $\{p(\mbx, t)\}_{0 \leq t \leq T}$ and $\{q(\mbx, t)\}_{0 \leq t \leq T}$. Uniform ellipticity of $\mbSigma$ provides a sufficient condition (see \citet[Theorem 2.3.1]{nualart2006malliavin}). 
 
\begin{assumption}[Uniform ellipticity]\label{app:assum:uniform-elliptic}
The initial laws $p(\mbx, 0)$ and $q(\mbx, 0)$ admit Lebesgue densities, and there exists $\lambda > 0$ such that $\mbSigma(\mbx, t) \succeq \lambda \bbI_K$ for every $(\mbx, t) \in \bbR^K \times [0, T]$. We impose this condition for all $\mbtheta$.
\end{assumption}
 
\paragraph{Absolute continuity.}
Finally, the prior and posterior path laws are taken to be mutually absolutely continuous, i.e., they admit densities with respect to each other.
 
\begin{assumption}[Absolute continuity]
The prior SDE \cref{eqn:prior_sde} and posterior SDE \cref{eqn:variational_family} are mutually absolutely continuous. The condition is imposed for every $\mbtheta$ and every $\mbphi$.
\end{assumption}
 
Mutual absolute continuity is a result of Girsanov's theorem (\citet[Theorem 8.6.4]{oksendal2013stochastic}) whenever its hypotheses hold.
Novikov's criterion (\citet[Corollary 3.5.13]{karatzas2014brownian}) provides one sufficient condition. 
We restate the SDE form of Girsanov's theorem below for later reference.
 
\begin{theorem}[Girsanov's theorem for SDEs]\label{app:thm:girsanov}
Let the SDEs
\begin{equation*}
\begin{aligned}
    &\nu_1: d\mbx(t) = \mbb_1(\mbx(t), t) dt + \mbSigma(\mbx, t)^{1/2}d\mbw(t), \quad 0 \leq t \leq T\label{eq:girsanov-sde}\\
    &\nu_2: d\mbx(t) = (\mbb_1(\mbx(t), t) + \mbb_2(\mbx(t), t))  dt + \mbSigma(\mbx, t)^{1/2}d\mbw(t), \quad 0 \leq t \leq T
\end{aligned}
\end{equation*}
share an initial law, $\nu_1(\mbx(0)) = \nu_2(\mbx(0))$, and satisfy \Cref{app:assum:regularity}. Then $\nu_1$ and $\nu_2$ are mutually absolutely continuous, with Radon--Nikodym derivative
{\small
\begin{equation*}
\frac{d\nu_2}{d\nu_1}(\mbx) = \exp\left\{ \int_0^{T} \sum_{i=1}^{K} \left[\mbSigma^{-1/2} \mbb_2(\mbx(t), t)\right]_i d\mbw^{\nu_1}_i(t) - \frac{1}{2} \int_0^{T} \sum_{i=1}^{K} \left[\mbSigma^{-1/2} \mbb_2(\mbx(t), t)\right]_i^2 dt \right\},
\end{equation*}
}
where $(\mbw^{\nu_1}(t))_{0 \leq t \leq T}$ denotes a Brownian motion under $\nu_1$. Consequently, for every bounded functional $\Phi$ defined on continuous functions,
\begin{equation*}
    \bbE_{\nu_2}[\Phi(\mbx)] = \bbE_{\nu_1}\!\left[\Phi(\mbx)\,\cfrac{d\nu_2}{d\nu_1}(\mbx)\right].
\end{equation*}    
\end{theorem}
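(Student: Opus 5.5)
The plan is the classical change-of-measure argument, transported from the driving Brownian motion to path space. Work on a filtered probability space carrying a standard $K$-dimensional Brownian motion $\mbw = \mbw^{\nu_1}$. Let $\mbx$ be the unique strong solution of the $\nu_1$ equation; it exists by \Cref{app:assum:regularity}. Define
\begin{equation*}
\mbu(t) = \mbSigma(\mbx(t),t)^{-1/2}\mbb_2(\mbx(t),t),
\qquad
Z_t = \exp\Big\{\int_0^t \mbu(s)^\trans d\mbw(s) - \tfrac12\int_0^t \|\mbu(s)\|^2 ds\Big\}.
\end{equation*}
By uniform ellipticity (\Cref{app:assum:uniform-elliptic}), $\|\mbSigma^{-1/2}\| \le \lambda^{-1/2}$, and $\mbb_2$ has linear growth since $\mbb_1$ and $\mbb_1+\mbb_2$ both do. Hence $\|\mbu(t)\| \le C(1+\|\mbx(t)\|)$. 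The process $Z$ is a positive local martingale, and the density in the statement is $Z_T$.

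\textbf{Step 1: $Z$ is a true martingale, i.e.\ $\bbE Z_T = 1$.} I expect this to be the main obstacle. Novikov's condition $\bbE\exp\{\frac12\int_0^T\|\mbu\|^2dt\}<\infty$ need not hold globally when $\mbu$ grows linearly, because $\sup_t\|\mbx(t)\|^2$ only has Gaussian-type tails with a possibly insufficient constant. I would try a Bene\v{s}-type localization first.
\begin{itemize}
\item Let $\tau_n = \inf\{t : \|\mbx(t)\| \ge n\}\wedge T$. The stopped process $Z_{\cdot\wedge\tau_n}$ is a true martingale, since its integrand is bounded.
\item Define $\bbQ_n = Z_{\tau_n}\,d\bbP$. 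Under $\bbQ_n$, Girsanov's theorem for bounded integrands shows that $\mbx$ solves the $\nu_2$ SDE up to $\tau_n$.
\item The linear-growth bound on $\mbb_1+\mbb_2$ and the Gr\"onwall-type moment estimate give $\bbQ_n(\tau_n<T) \le C/n^2$, uniformly in $n$.
\item Then $\bbE Z_T \ge \bbE[Z_{\tau_n}\mathbf 1\{\tau_n = T\}] = \bbQ_n(\tau_n=T) \to 1$, while $\bbE Z_T \le 1$ by Fatou's lemma.
\end{itemize}
An alternative route is Novikov's criterion applied on a fine partition of $[0,T]$, as in Karatzas--Shreve (Cor.~3.5.16).

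\textbf{Step 2: identify the law.} Define $\bbQ = Z_T\,d\bbP$. By the Girsanov theorem for Brownian motion, $\tilde\mbw(t) = \mbw(t) - \int_0^t \mbu(s)\,ds$ is a $\bbQ$-Brownian motion. Substituting,
\begin{equation*}
d\mbx = (\mbb_1+\mbb_2)\,dt + \mbSigma^{1/2}\,d\tilde\mbw .
\end{equation*}
The initial law is unchanged, because $Z_0 = 1$ and $\mbx(0)$ is $\cF_0$-measurable. Pathwise uniqueness under the Lipschitz condition, combined with Yamada--Watanabe, gives uniqueness in law. Therefore the law of $\mbx$ under $\bbQ$ is $\nu_2$.

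\textbf{Step 3: express the density on path space.} By ellipticity, $d\mbw = \mbSigma^{-1/2}(d\mbx - \mbb_1\,dt)$, so $Z_T$ is (a version of) a measurable functional of the path $\mbx$ alone. This is exactly the stated $d\nu_2/d\nu_1$, with $\mbw^{\nu_1}$ read as this functional. Then, for any bounded $\Phi$,
\begin{equation*}
\bbE_{\nu_2}\Phi = \bbE_{\bbQ}\Phi(\mbx) = \bbE_{\bbP}[\Phi(\mbx)Z_T] = \bbE_{\nu_1}\Big[\Phi\,\frac{d\nu_2}{d\nu_1}\Big].
\end{equation*}
Mutual absolute continuity follows because $Z_T>0$ almost surely. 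Equivalently, the same argument with the roles of $\nu_1$ and $\nu_2$ swapped, using drift perturbation $-\mbb_2$, gives the reverse density $1/Z_T$.
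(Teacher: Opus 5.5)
Your proof is correct, but it does not follow the paper, because the paper gives no proof of this statement. It restates Girsanov's theorem with a pointer to \O{}ksendal's Theorem 8.6.4, whose version requires the stochastic exponential of $\mbu=\mbSigma^{-1/2}\mbb_2$ to be a true martingale (ensured there by Novikov's condition). The paper only remarks that Novikov is ``one sufficient condition.'' As you note, Novikov can fail globally under the Lipschitz/linear-growth assumption (A1), so the citation alone does not cover the statement in the generality in which it is written. Your Step~1 supplies exactly the missing ingredient. Localizing at $\tau_n$, applying Girsanov for bounded integrands, and controlling $\bbQ_n(\tau_n<T)$ by non-explosion of the $\nu_2$ dynamics gives $\bbE Z_T=1$ directly from linear growth of $\mbb_1+\mbb_2$ and $\mbSigma$. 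Steps~2--3 (uniqueness in law via Yamada--Watanabe, and reading $\mbw^{\nu_1}$ as a path functional via invertibility of $\mbSigma$) are the standard transfer to path space. The citation buys brevity; your route buys a self-contained proof under the stated hypotheses. The cost is invoking uniform ellipticity from (A2), which the statement does not list but which is implicit in the appearance of $\mbSigma^{-1/2}$. Your argument in fact only uses it to make $\mbu$ locally bounded.

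Two small repairs are needed. First, the bound $\bbQ_n(\tau_n<T)\le C/n^2$ presupposes $\bbE\|\mbx(0)\|^2<\infty$, which is not assumed. Run the Gr\"onwall estimate conditionally on $\cF_0$ instead. Since the $\cF_0$-marginal of $\bbQ_n$ is the common initial law, this gives $\bbQ_n(\tau_n<T)\le\bbE[\min\{1,C(1+\|\mbx(0)\|^2)/n^2\}]\to 0$, which is all you use. Second, your fallback of applying Novikov on a fine partition is not automatic here. It needs $\bbE\exp\{\delta\int\|\mbu\|^2dt\}<\infty$ on short windows, and (A1) does not guarantee this once $\mbSigma$ is unbounded, since the solution then need not have Gaussian tails. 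Keep the localization argument as the main route.
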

 
A direct corollary of \Cref{app:thm:girsanov} expresses the KL divergence between $\bbQ \in \cQ$ and $\bbP$ as
\begin{align}
    &\KL{\bbQ}{\bbP} = \bbE_{\bbQ}\!\left[\log\!\left(\frac{q(\mbx(0), 0)}{p(\mbx(0), 0)} \cdot \frac{d\bbQ(\cdot | \mbx(0))}{d\bbP(\cdot | \mbx(0))}(\mbx)\right)\right]\nonumber\\
    &= \KL{q(\mbx, 0)}{p(\mbx, 0)} + \frac{1}{2} \int_0^{T} \bbE_{\bbQ}\| \mbSigma(\mbx(t), t)^{-1/2}(\mbf_p(\mbx(t),t) -  \mbf_q(\mbx(t), t))\|^2 dt. \label{app:eqn:path-kl}
\end{align}
The expectation of the stochastic integral term vanishes since it is a martingale.
The first term is the KL divergence between the initial distributions and the second is the ``path-space'' KL divergence.
 
\paragraph{Fokker-Planck equation.}
    Since $\bbP$ and $\bbQ \in \cQ$ are solutions to SDEs \cref{eqn:prior_sde} and \cref{eqn:variational_family}, they satisfy the Fokker-Planck equation. Without making further regularity assumptions on the drift function and diffusion coefficient, this equation can be stated in its weak formulation
    \begin{equation}\label{app:eqn:fpk-weak}
        \frac{d}{dt} \int \varphi(\mbx) p(\mbx, t) d\mbx = \int\left( \mbf(\mbx, t) \cdot \nabla_{\mbx} \varphi(\mbx) + \frac{1}{2} \mathrm{tr}(\mbSigma(\mbx, t) \nabla_{\mbx}^2 \varphi(\mbx)) \right) p(\mbx, t) d\mbx, \quad \text{a.e.~$t$},
    \end{equation}
    where $\varphi: \bbR^K \to \bbR$ is any infinitely differentiable function having compact support.
    
\subsection{ELBO derivation}\label{app:subsec:elbo}
In this subsection, we derive the evidence lower bound (ELBO) in the latent SDE model. 

Adopting the notation from \Cref{app:thm:girsanov}, we have
\begin{equation*}
\begin{aligned}
    &\log p(\mby) = \log \int p(\mby | \mbx) \bbP(d\mbx)\\
    &\overset{(i)}{=} \log \int p(\mby | \mbx)  \frac{p(\mbx(0), 0)}{q(\mbx(0), 0)} \cdot \frac{d \bbP(\cdot | \mbx(0))}{d \bbQ(\cdot | \mbx(0))} \bbQ(d\mbx)\\
    &\overset{(ii)}{\geq}  \int \log \left( p(\mby | \mbx)  \frac{p(\mbx(0), 0)}{q(\mbx(0), 0)} \cdot \frac{d \bbP(\cdot | \mbx(0))}{d \bbQ(\cdot | \mbx(0))} \right) \bbQ(d\mbx)
    \\
    &= \bbE_{\bbQ}[\log p(\mby | \mbx)] - \KL{\bbQ}{\bbP}\\
    &= \bbE_{\bbQ}[\log p(\mby | \mbx)] - \KL{q(\mbx, 0)}{p(\mbx, 0)} - \frac{1}{2} \int_0^{T} \bbE_{\bbQ}\| \mbSigma^{-1/2}\{\mbf_p(\mbx(t),t) -  \mbf_q(\mbx(t),t)\}\|^2 dt.
\end{aligned}
\end{equation*}
Equality (i) is a consequence of Girsanov's theorem, and inequality (ii) is an application of Jensen's inequality. Taking the negative of this expression yields the negative ELBO \cref{eqn:elbo}. 

Notice that each term in the negative ELBO depends both on the parameters of the generative model $\mbtheta$ and on the parameters of the variational posterior $\mbphi$.

\subsection{Divergence-free vector fields}\label{app:subsec:div-free}
In this subsection, we discuss the role of $q$-divergence-free vector fields in determining the expressivity of the variational family $\cQ$. We prove \Cref{prop:div_free}, which characterizes the non-uniqueness in path law that arises from marginal-based parameterizations of the variational posterior.

\paragraph{Divergence-free vector fields.} In \cref{eqn:fpk}, the Fokker-Planck equation is stated as a pointwise equality for $(\mbx, t) \in \bbR^K \times [0, T]$. As mentioned we in \Cref{app:subsec:assumptions}, this is only valid under additional regularity assumptions  on the posterior SDE. Nonetheless, we can still state the divergence-free condition using the weak form of the Fokker-Planck equation
\begin{equation}\label{app:eqn:div-free-weak}
    \int \mbv(\mbx, t) \cdot \nabla_{\mbx} \varphi(\mbx)  q( \mbx, t) d \mbx  = 0 \quad \text{a.e.~$t$}
\end{equation}
for all test functions $\varphi$ (infinitely differentiable, having compact support). Indeed, if $\mbv(\mbx, t)$ and $q(\mbx, t)$ are sufficiently smooth then the integration by parts identity implies
\begin{equation*}
    0 = \int \mbv(\mbx, t) \cdot \nabla_{\mbx} \varphi(\mbx)  q(\mbx, t) d \mbx = - \int \nabla_{\mbx} \cdot[ q( \mbx, t) \mbv(\mbx, t)] \varphi(\mbx)  d \mbx \quad \text{a.e.~$t$}
\end{equation*}
for all test functions $\varphi$. By Fubini's theorem, this implies $\nabla \cdot[ q( \mbx, t) \mbv(\mbx, t)] = 0$ a.e.~$(\mbx, t)$.

Having stated the divergence-free condition in its weak form, we are prepared to prove \Cref{prop:div_free}, which states that among diffusion processes having a shared diffusion coefficient $\mbSigma^{1/2}$, the only degree freedom is the choice of divergence-free vector field.

\begin{proof}[Proof of \Cref{prop:div_free}]
    Let $\bbQ$ and $\widetilde{\bbQ}$ be two diffusion processes having drifts $\mbf$ and $\tilde{\mbf}$, respectively, common diffusion coefficient $\mbSigma^{1/2}$, and common initial marginal density $q(\mbx, 0)$. 

   $\bbQ$ and $\widetilde{\bbQ}$ induce the same one-time marginals $\{q(\mbx, t)\}_{0 \leq t \leq T}$ if and only if the two diffusions satisfy the same weak Fokker–Planck equation \eqref{app:eqn:fpk-weak} for the common marginals. Since the diffusion coefficients are equal, this statement becomes
    \begin{align}
    &\int \mbf(\mbx, t) \cdot \nabla \varphi(\mbx)  q(\mbx, t) d \mbx = \int  \tilde{\mbf}(\mbx, t) \cdot \nabla \varphi(\mbx)  q(\mbx, t) d \mbx \quad \text{a.e.~$t$} \nonumber\\
    \implies &\int (\mbf(\mbx, t) - \tilde{\mbf}(\mbx, t)) \cdot \nabla \varphi(\mbx)  q( \mbx, t) d \mbx  = 0 \quad \text{a.e.~$t$} \label{app:eqn:div-free-weak-diff}
    \end{align}
    for all test functions $\varphi$.
    \Cref{app:eqn:div-free-weak-diff} is precisely the statement that $\mbf - \tilde{\mbf}$ is $q$-divergence free, in the weak formulation \cref{app:eqn:div-free-weak}.
\end{proof}

There are also diffusion processes with different diffusion coefficients that share the same one-time marginals. This fact is leveraged by \citet{bartosh2025sde} to learn the shared prior and posterior diffusion coefficient. 
However, for a fixed prior SDE \cref{eqn:prior_sde}, the diffusion coefficient of the posterior \cref{eqn:variational_family} must be equal to that of the prior. 
This is because, by Girsanov's Theorem (\Cref{app:thm:girsanov}), diffusion processes whose diffusion coefficients differ are generally not mutually absolutely continuous. 
In other words, the diffusion coefficient must be viewed as a parameter of the generative model, rather than a degree of freedom in the posterior process. On the other hand, any two diffusion processes with shared diffusion coefficient (and drifts that differ only by a divergence-free vector field) will be mutually absolutely continuous.

\subsection{Gaussian one-time marginals}\label{app:subsec:gaussian}
In the case of Gaussian one-time marginals, the non-uniqueness of the path law described in \Cref{app:subsec:div-free} can be made explicit. 
In this subsection, we construct an infinite collection of (distinct) path laws whose one-time marginals are identical. Throughout, we assume $q(\mbx, t) = \cN(\mbm(t), \mbS(t))$, where $\mbm(t), \mbS(t)$ are continuously differentiable on $[0, T]$ and $\mbS$ is positive definite.

We first characterize these path laws in terms of the solution to a matrix Lyapunov equation. 
\begin{proposition}\label{prop:lyapunov}
Let $q(\mbx, t) = \cN(\mbm(t), \mbS(t)), 0 \leq t \leq T$ be a family of Gaussian one-time marginal distributions, with $(\mbm(t), \mbS(t))_{0 \leq t \leq T}$ continuously differentiable in $t$ and $\mbS(t)$ positive definite. Let $(\mbK(t))_{0 \leq t \leq T}$ be any solution to the matrix Lyapunov equation
\begin{equation}\label{app:eqn:lyapunov}
    \frac{d}{dt} \mbS(t) = \mbK(t) \mbS(t) + \mbS(t) \mbK(t)^\top, \quad 0 \leq t \leq T.
\end{equation}
Then the SDE with drift $\mbf_q(\mbx, t) = \mbA_q(\mbx, t) \mbx + \mbb_q(\mbx, t) + \frac{1}{2}\sum_j \partial_{\mbx_j} \mbSigma_{\cdot, j}(\mbx, t)$,
\begin{equation}\label{app:eqn:lyapunov-drift}
    \mbA_q(\mbx, t) = \mbK(t) - \frac{1}{2}\mbSigma(\mbx, t | \mbtheta) \mbS(t)^{-1}, \quad \mbb_q(\mbx, t) = \frac{d}{dt}\mbm(t) - \mbA_q(\mbx, t)\mbm(t),
\end{equation}
and diffusion coefficient $\mbSigma(\mbx, t)^{1/2}$ has marginals $q(\mbx, t) = \cN(\mbm(t), \mbS(t))$, $0 \leq t \leq T$.
\end{proposition}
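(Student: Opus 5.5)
The plan is to verify directly that the Gaussian family $q(\mbx,t)=\cN(\mbm(t),\mbS(t))$ solves the Fokker--Planck equation of the proposed SDE. Then, since the initial law matches, we invoke uniqueness of that solution to conclude that the one-time marginals equal $q$. Because $\mbSigma$ may depend on $\mbx$ and $t$, I would use the general form of the forward equation,
\[
\partial_t q = -\nabla_{\mbx}\cdot[q\,\mbf_q] + \tfrac12\sum_{i,j}\partial_{\mbx_i}\partial_{\mbx_j}[\mbSigma_{ij} q],
\]
which reduces to \cref{eqn:fpk} when $\mbSigma$ is constant. Substituting $\mbb_q$ into the drift gives
\[
\mbf_q(\mbx,t)=\tfrac{d}{dt}\mbm(t)+\mbK(t)(\mbx-\mbm(t))-\tfrac12\mbSigma(\mbx,t)\mbS(t)^{-1}(\mbx-\mbm(t))+\tfrac12\sum_j\partial_{\mbx_j}\mbSigma_{\cdot,j}(\mbx,t).
\]

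The first step is to absorb the diffusion term into a divergence. Write the second-order term as $\tfrac12\partial_i[(\partial_j\mbSigma_{ij})q+\mbSigma_{ij}\partial_j q]$. For the Gaussian density, $\nabla_{\mbx} q=-q\,\mbS^{-1}(\mbx-\mbm)$, so
\[
\tfrac12\sum_{i,j}\partial_{ij}(\mbSigma_{ij}q)=\nabla_{\mbx}\cdot\Bigl[q\bigl(\tfrac12\textstyle\sum_j\partial_{\mbx_j}\mbSigma_{\cdot,j}-\tfrac12\mbSigma\mbS^{-1}(\mbx-\mbm)\bigr)\Bigr].
\]
This is exactly the negative of the divergence of $q$ times the last two pieces of $\mbf_q$, so those pieces cancel. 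The Fokker--Planck equation then reduces to the continuity equation
\[
\partial_t q=-\nabla_{\mbx}\cdot[q\,\mbg],\qquad \mbg(\mbx,t)=\tfrac{d}{dt}\mbm+\mbK(\mbx-\mbm).
\]
In this sense the choice $\mbA_q=\mbK-\tfrac12\mbSigma\mbS^{-1}$ is designed so that the $\mbSigma$-dependent parts of the drift cancel the diffusion exactly.

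The second step is a direct Gaussian computation, comparing both sides of the continuity equation after dividing by $q$.

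On the left side,
\[
\partial_t\log q=-\tfrac12\mathrm{tr}\bigl(\mbS^{-1}\tfrac{d}{dt}\mbS\bigr)+\bigl(\tfrac{d}{dt}\mbm\bigr)^\top\mbS^{-1}(\mbx-\mbm)+\tfrac12(\mbx-\mbm)^\top\mbS^{-1}\bigl(\tfrac{d}{dt}\mbS\bigr)\mbS^{-1}(\mbx-\mbm).
\]

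On the right side,
\[
-q^{-1}\nabla_{\mbx}\cdot(q\mbg)=-\mathrm{tr}\,\mbK+\mbg^\top\mbS^{-1}(\mbx-\mbm).
\]

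Now insert the Lyapunov equation \cref{app:eqn:lyapunov}, $\tfrac{d}{dt}\mbS=\mbK\mbS+\mbS\mbK^\top$, and match terms:
\begin{itemize}
\item Constant terms: $\mathrm{tr}(\mbS^{-1}\tfrac{d}{dt}\mbS)=2\,\mathrm{tr}\,\mbK$, so both sides contribute $-\mathrm{tr}\,\mbK$.
\item Linear terms: both sides contribute $(\tfrac{d}{dt}\mbm)^\top\mbS^{-1}(\mbx-\mbm)$.
\item Quadratic terms: $\tfrac12(\mbx-\mbm)^\top(\mbS^{-1}\mbK+\mbK^\top\mbS^{-1})(\mbx-\mbm)=(\mbx-\mbm)^\top\mbS^{-1}\mbK(\mbx-\mbm)$, which equals the quadratic part of $\mbg^\top\mbS^{-1}(\mbx-\mbm)$.
\end{itemize}
So $q$ is a classical solution of the Fokker--Planck equation, hence also a solution in the weak sense \cref{app:eqn:fpk-weak}.

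The final step, and the main obstacle, is concluding that the SDE's marginals coincide with $q$, rather than merely that $q$ is some solution of the forward equation. I would first try to appeal to uniqueness of weak solutions of the Fokker--Planck equation in the class of probability-measure-valued curves. This holds under \Cref{app:assum:regularity} and \Cref{app:assum:uniform-elliptic} (Lipschitz coefficients and uniform ellipticity), for example via the superposition principle or well-posedness of the martingale problem. Applying it requires checking that $\mbf_q$ inherits Lipschitz continuity and linear growth on $[0,T]$, which I expect to follow from continuity of $\mbK$, $\tfrac{d}{dt}\mbm$ and $\mbS^{-1}$ on the compact interval together with the regularity of $\mbSigma$. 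As a fallback in the constant-$\mbSigma$ case, the SDE is linear, so its marginals are Gaussian with mean and covariance solving the moment ODEs $\dot{\mbm}=\mbA_q\mbm+\mbb_q$ and $\dot{\mbS}=\mbA_q\mbS+\mbS\mbA_q^\top+\mbSigma$. One checks that these are satisfied by $(\mbm,\mbS)$ using \cref{app:eqn:lyapunov}, and uniqueness of ODE solutions then gives the result.
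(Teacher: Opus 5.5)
Your proposal is correct and follows the paper's proof essentially line for line: you rewrite the diffusion term as a divergence via $\nabla_{\mbx} q=-q\,\mbS^{-1}(\mbx-\mbm)$ so that it cancels the $\mbSigma$-dependent part of $\mbf_q$, reduce to the continuity equation, and match constant, linear, and quadratic terms using the Lyapunov equation. Your final uniqueness step (and the moment-ODE check for constant $\mbSigma$) makes explicit something the paper leaves implicit; note, however, that for state-dependent $\mbSigma$ the global Lipschitz/linear-growth property of $\mbf_q$ does not follow from continuity of $\mbK$, $\tfrac{d}{dt}\mbm$, $\mbS^{-1}$ alone, since the term $\tfrac12\mbSigma(\mbx,t)\mbS(t)^{-1}(\mbx-\mbm(t))$ can grow superlinearly, so there you must either invoke the paper's standing regularity assumption on $\mbf_q$ or use a uniqueness result for locally Lipschitz coefficients.
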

The Lyapunov equation \eqref{app:eqn:lyapunov} has infinitely many solutions $\mbK(t)$, so a single collection of  Gaussian one-time marginals $\{ q(\mbx, t) \}$ corresponds to infinitely many compatible diffusions. 
Note that \Cref{prop:lyapunov} does \emph{not} characterize the complete set of diffusions consistent with Gaussian one-time marginals. 
For example, when $\mbSigma$ is state-independent, \Cref{prop:lyapunov} only describes Gaussian processes i.e., stochastic process whose multi-time joint distributions are multivariate Gaussian. There exist stochastic processes with Gaussian one-time marginals that are not Gaussian processes.

The explicit construction is obtained by plugging the drift from \Cref{prop:lyapunov} into the Fokker-Planck equation, which yields the \emph{continuity equation} \cref{app:eqn:fpk-simple} for Gaussian marginals distributions and a linear vector field. 
Unlike the Fokker-Planck equation, the continuity equation does not depend on the diffusion coefficient $\mbSigma^{1/2}$, which is a consequence of the two terms in the posterior drift cancelling with the diffusion term in the Fokker-Planck equation. 
The same strategy is used to convert between ODE and SDE samplers for diffusion and flow matching models \cite{bartosh2025sde,songscore2021, albergo2025stochastic}.See also \Cref{app:subsec:svise-sde-compare}.

\begin{proof}[Proof of \Cref{prop:lyapunov}]
Note that since $q(\mbx, t)$ and $\mbf_q$ are sufficiently differentiable, we can work directly with the strong form of the Fokker--Planck equation \cref{eqn:fpk}.

Starting first with the diffusion term, we have by the chain rule and the formula for the Gaussian density
{\small
\begin{equation*}
    \begin{aligned}
    \frac{1}{2}\sum_{i, j} \partial_{\mbx_i, \mbx_j} [q(\mbx, t) \mbSigma_{ij}(\mbx,t)]
    &= \frac{1}{2}\sum_i \partial_{\mbx_i}
    \left[
    q(\mbx,t)\sum_j
    \left\{
    \partial_{\mbx_j} \mbSigma_{ij}(\mbx,t)
    -
    [\mbS(t)^{-1}(\mbx-\mbm(t))]_j \mbSigma_{ij}(\mbx,t)
    \right\}
    \right] \\
    &= \frac{1}{2}\nabla_{\mbx} \cdot \left[
    - q(\mbx, t)\mbSigma(\mbx,t) \mbS(t)^{-1}(\mbx-\mbm(t))
    + q(\mbx, t) \sum_j \partial_{\mbx_j} \mbSigma_{\cdot, j}(\mbx,t)
    \right].
    \end{aligned}
\end{equation*}
}
Notice that the same terms with the opposite sign appear in the drift term of the Fokker--Planck equation, by our definition of the drift $\mbf_q$. Indeed,
{\small
\begin{equation*}
\begin{aligned}
  &-\nabla_{\mbx} \cdot [\mbf_q(\mbx, t) q(\mbx, t)]
  =\\
  &-\nabla_{\mbx} \cdot \bigg[
   \bigg(
   \mbK(t)(\mbx-\mbm(t))
   + \frac{d}{dt}\mbm(t)
   - \frac{1}{2}\mbSigma(\mbx,t)\mbS(t)^{-1}(\mbx-\mbm(t))
   + \frac{1}{2} \sum_j \partial_{\mbx_j}\mbSigma_{\cdot,j}(\mbx,t)
   \bigg) q(\mbx,t)
   \bigg],
\end{aligned}
\end{equation*}
}
where we used
\begin{equation*}
    \mbA_q(\mbx,t)\mbx+\mbb_q(\mbx,t)
    =
    \mbA_q(\mbx,t)(\mbx-\mbm(t))+\frac{d}{dt}\mbm(t).
\end{equation*}
Canceling these terms, this shows that the Fokker--Planck equation reduces to the continuity equation
\begin{equation}\label{app:eqn:fpk-simple}
    \frac{\partial q(\mbx, t)}{\partial t}
    =
    -\nabla_{\mbx} \cdot
    \left[
    \left(
    \mbK(t)(\mbx-\mbm(t))
    + \frac{d}{dt}\mbm(t)
    \right) q(\mbx, t)
    \right],
    \quad 0 \leq t \leq T.
\end{equation}

Next, computing both sides of \Cref{app:eqn:fpk-simple} explicitly and dividing by $q(\mbx,t)$, we obtain
{\small
\begin{equation*}
\begin{aligned}
&-\frac{1}{2}\bigg[
\mathrm{tr}\left(\mbS(t)^{-1} \frac{d}{dt} \mbS(t)\right)
- (\mbx - \mbm(t))^\top  \mbS(t)^{-1} \frac{d}{dt} \mbS(t) \mbS(t)^{-1} (\mbx - \mbm(t))
- 2 \frac{d}{dt}\mbm(t)^\top  \mbS(t)^{-1}  (\mbx - \mbm(t))
\bigg]\\
&=  -\left[
\mathrm{tr}(\mbK(t))
-
\left(
\mbK(t)(\mbx-\mbm(t))
+ \frac{d}{dt}\mbm(t)
\right)^\top
\mbS(t)^{-1}(\mbx - \mbm(t))
\right].
\end{aligned}
\end{equation*}
}
The linear terms involving $\frac{d}{dt}\mbm(t)$ cancel, so this further simplifies to
\begin{equation}\label{app:eqn:fpk-equality}
\begin{aligned}
&-\frac{1}{2}\left[
\mathrm{tr}\left(\mbS(t)^{-1} \frac{d}{dt} \mbS(t)\right)
- (\mbx - \mbm(t))^\top  \mbS(t)^{-1} \frac{d}{dt} \mbS(t) \mbS(t)^{-1}(\mbx - \mbm(t))
\right]\\
=& -\left[
\mathrm{tr}(\mbK(t))
- (\mbx - \mbm(t))^\top \mbK(t)^\top \mbS(t)^{-1}(\mbx - \mbm(t))
\right].
\end{aligned}
\end{equation}

Now, we must determine conditions on $(\mbK(t))_{0 \leq t \leq T}$ under which equality \eqref{app:eqn:fpk-equality} holds. If $\mbK(t)$ solves the Lyapunov equation \eqref{app:eqn:lyapunov}, then
\begin{equation*}
\begin{aligned}
    &\mbS(t)^{-1} \frac{d}{dt} \mbS(t) \mbS(t)^{-1}
    =
    \mbS(t)^{-1}\mbK(t) + \mbK(t)^\top\mbS(t)^{-1} \\
    \implies&
    (\mbx - \mbm(t))^\top
    \mbS(t)^{-1} \frac{d}{dt} \mbS(t) \mbS(t)^{-1}
    (\mbx - \mbm(t)) =
    2(\mbx - \mbm(t))^\top
    \mbK(t)^\top\mbS(t)^{-1}
    (\mbx - \mbm(t)).
\end{aligned}
\end{equation*}
Moreover, $\mathrm{tr}(\mbS(t)^{-1} \frac{d}{dt}\mbS(t)) = 2\mathrm{tr}(\mbK(t))$. Hence, whenever $\mbK(t)$ solves the Lyapunov equation, the diffusion process with drift $\mbf_q$ and diffusion coefficient $\mbSigma^{1/2}$ satisfies the Fokker--Planck equation.
\end{proof}

To conclude this subsection, we provide an alternate (but equivalent) characterization of the Lyapunov equation \cref{app:eqn:lyapunov} for SDEs that induce the same one-time Gaussian marginals.
We do so via square root factorizations of $\mbS(t)$.
\begin{proposition}\label{app:prop:lyapunov-gauge}
Fix any continuously differentiable square root factorization $\mbL(t)$ satisfying $\mbL(t)\mbL(t)^\top = \mbS(t)$. Then $\mbK(t)$ solves the Lyapunov equation \eqref{app:eqn:lyapunov} if and only if
\begin{equation}\label{app:eqn:lyapunov-gauge}
\mbK(t) = \left(\frac{d}{dt}\mbL(t) \right)\mbL(t)^{-1} + \mbL(t)\mbOmega(t)\mbL(t)^{-1}, \quad 0 \leq t \leq T
\end{equation}
for some skew-symmetric matrix-valued function $\mbOmega(t) = -\mbOmega(t)^\top$.
\end{proposition}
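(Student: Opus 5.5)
The plan is to reduce the Lyapunov equation to a condition on a single matrix by conjugating with $\mbL(t)$. Since $\mbS(t)$ is positive definite and $\mbL(t)\mbL(t)^\top = \mbS(t)$, the factor $\mbL(t)$ is invertible for every $t$, so $\mbL(t)^{-1}$ is well defined and continuously differentiable. Given any $\mbK(t)$, define
\begin{equation*}
\mbOmega(t) := \mbL(t)^{-1}\mbK(t)\mbL(t) - \mbL(t)^{-1}\frac{d}{dt}\mbL(t).
\end{equation*}
This is the unique matrix for which \eqref{app:eqn:lyapunov-gauge} holds with this $\mbK(t)$. Both directions of the claim then amount to showing that \eqref{app:eqn:lyapunov} holds if and only if this $\mbOmega(t)$ is skew-symmetric.

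The first step is to differentiate the factorization. This gives
\begin{equation*}
\frac{d}{dt}\mbS = \dot{\mbL}\mbL^\top + \mbL\dot{\mbL}^\top = \mbL\bigl(\mbL^{-1}\dot{\mbL} + \dot{\mbL}^\top\mbL^{-\top}\bigr)\mbL^\top.
\end{equation*}
The second step is to rewrite the right-hand side of \eqref{app:eqn:lyapunov}. Writing $\mbK = \mbL\mbM\mbL^{-1}$ with $\mbM = \mbL^{-1}\mbK\mbL$, we get
\begin{equation*}
\mbK\mbS + \mbS\mbK^\top = \mbL\bigl(\mbM + \mbM^\top\bigr)\mbL^\top.
\end{equation*}
Because $\mbL$ is invertible, we can cancel the outer factors $\mbL(\cdot)\mbL^\top$ on both sides. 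Equation \eqref{app:eqn:lyapunov} is therefore equivalent to
\begin{equation*}
\mbM + \mbM^\top = \mbL^{-1}\dot{\mbL} + (\mbL^{-1}\dot{\mbL})^\top.
\end{equation*}
By definition, $\mbOmega = \mbM - \mbL^{-1}\dot{\mbL}$. The displayed equation is exactly $\mbOmega + \mbOmega^\top = 0$.

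Both implications follow from this equivalence. If $\mbK$ solves \eqref{app:eqn:lyapunov}, then $\mbOmega$ is skew-symmetric and $\mbK = \dot{\mbL}\mbL^{-1} + \mbL\mbOmega\mbL^{-1}$. Conversely, suppose $\mbK$ has the form \eqref{app:eqn:lyapunov-gauge} for some skew-symmetric $\mbOmega$. Then $\mbM = \mbL^{-1}\dot{\mbL} + \mbOmega$, and the symmetric parts match, so \eqref{app:eqn:lyapunov} holds.

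There is no real obstacle here. The only points needing care are the invertibility of $\mbL(t)$, which follows from positive definiteness of $\mbS(t)$, and keeping transposes straight, in particular that $\mbL^{-\top}$ sits on the correct side. Regularity in $t$ of $\mbOmega$ is inherited from that of $\mbK$ and $\mbL$. The statement only requires $\mbOmega$ to exist pointwise with the regularity that $\mbK$ has.
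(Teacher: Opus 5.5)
Your proof is correct and is essentially the paper's argument. For necessity, the paper defines the same $\mbOmega = \mbL^{-1}\bigl(\mbK - \dot{\mbL}\mbL^{-1}\bigr)\mbL$ and checks that $\mbOmega + \mbOmega^\top = \mbL^{-1}\bigl(\mbK\mbS + \mbS\mbK^\top - \frac{d}{dt}\mbS\bigr)\mbL^{-\top}$, and it handles sufficiency by direct substitution; you instead fold both directions into one equivalence by conjugating with $\mbL$, which is equally valid.
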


\begin{proof}
\textbf{Sufficiency.}
Suppose is $\mbK$ defined as in  \cref{app:eqn:lyapunov-gauge}. Substituting this expression into the right-hand side of the Lyapunov equation \cref{app:eqn:lyapunov} yields
\begin{equation*}
\begin{aligned}
\mbK\mbS + \mbS\mbK^\top 
&= \left( \left(\frac{d}{dt}\mbL\right)\mbL^{-1} + \mbL\mbOmega\mbL^{-1}\right)\mbL\mbL^\top + \mbL\mbL^\top\left(\mbL^{-\top}\left(\frac{d}{dt}\mbL\right)^\top + \mbL^{-\top}\mbOmega^\top \mbL^\top \right)\\
&= \left(\frac{d}{dt}\mbL\right)\mbL^\top + \mbL\mbOmega\mbL^\top + \mbL\left(\frac{d}{dt}\mbL\right)^\top + \mbL\mbOmega^\top\mbL^\top \\
&\overset{(\star)}{=} \left(\frac{d}{dt}\mbL\right)\mbL^\top + \mbL\left(\frac{d}{dt}\mbL\right)^\top\\
&= \frac{d}{dt}(\mbL\mbL^\top) = \frac{d}{dt}\mbS.
\end{aligned}
\end{equation*}
Equality $(\star)$ holds by skew-symmetry $\mbOmega + \mbOmega^\top = \mb{0}$.

\textbf{Necessity.} Now suppose $\mbK$ solves the Lyapunov equation. Solving \cref{app:eqn:lyapunov-gauge} for $\mbOmega$ yields  $\mbOmega = \mbL^{-1}\left(\mbK - \left(\frac{d}{dt} \mbL \right) \mbL^{-1} \right)\mbL$. 
Hence, we need to show $\mbOmega$ is skew-symmetric.
A direct calculation reveals
\begin{equation*}
\begin{aligned}
\mbOmega + \mbOmega^\top &= \mbL^{-1}\left(\mbK\mbS - 
\left(\frac{d}{dt}\mbL \right)\mbL^\top \right)\mbL^{-\top} + \mbL^{-1}\left(\mbS\mbK^\top - \mbL\left(\frac{d}{dt}\mbL \right)^\top\right)\mbL^{-\top}\\
&= \mbL^{-1}\left(\mbK\mbS + \mbS\mbK^\top - \frac{d}{dt} \mbS \right)\mbL^{-\top} = \mb{0}.
\end{aligned}
\end{equation*}
\end{proof}

The two terms in \eqref{app:eqn:lyapunov-gauge} have distinct roles. 
$(\frac{d}{dt} \mbL)\mbL^{-1}$ depends on the chosen square root and is itself a solution to the Lyapunov equation. 
Arbitrarily changing $\mbL$ will change $\mbL \mbL^\top$ and hence the one-time marginals.
On the other hand, $\mbL\mbOmega\mbL^{-1}$ is allowed to vary as $\mbOmega$ takes different values in the $\binom{K}{2}$-dimensional vector space of skew-symmetric matrices. For fixed $\mbL$, each choice of $\mbOmega$ yields the same one-time marginals.

\subsection{SVISE (\citet{course2023state}) and SDE Matching (\citet{bartosh2025sde})}\label{app:subsec:svise-sde-compare}

In this subsection, we explain how existing simulation-free VI methods SVISE \cite{course2023state, course2023amortized} and SDE Matching \cite{bartosh2025sde} select a single drift function that realizes a given collection of one-time Gaussian marginal distributions $q(\mbx, t) = \cN(\mbm(t), \mbS(t))$, where $\mbm(t), \mbS(t)$ are continuously differentiable on $[0, T]$ and $\mbS(t)$ is positive definite. 
We refer to these choices as the symmetric and square root gauges, respectively.

Before describing each choice of posterior drift, we recall \Cref{prop:lyapunov} from \Cref{app:subsec:gaussian}, which tells us that
\begin{equation*}
\begin{aligned}
    &\mbf_q(\mbx, t) = \mbA_q(\mbx, t) \mbx + \left( \frac{d}{dt}\mbm(t) - \mbA_q(\mbx, t)\mbm(t)\right) + \frac{1}{2}\sum_j \partial_{\mbx_j} \mbSigma_{\cdot, j}(\mbx, t)\\
    &\mbA_q(\mbx, t) = \mbK(t) - \frac{1}{2}\mbSigma(\mbx, t | \mbtheta) \mbS(t)^{-1} 
\end{aligned}
\end{equation*}
attains the marginals $q(\mbx, t)$ whenever $\mbK(t)$ is a solution to the Lyapunov equation \cref{app:eqn:lyapunov}.
As we will see, SVISE and SDE Matching select the drift $\mbf_q$ for different $\mbK$.

\paragraph{Symmetric gauge.}
Suppose $\mbSigma(t) = \mbSigma(\mbx, t)$ is state-independent and $\mbf_q$ is defined as in \Cref{prop:lyapunov}. 
Then  $\mbA_q(t) = \mbA_q(\mbx, t)$ also satisfies a Lyapunov equation
\begin{equation}\label{eqn:course-nair}
    \frac{d}{dt} \mbS(t) - \mbSigma(t)  = \mbA_q(t)\mbS(t) + \mbS(t) \mbA_q(t)^\top, \quad 0 \leq t \leq T.
\end{equation}

In SVISE \cite{course2023state, course2023amortized}, the authors choose the posterior drift $\mbA_q(t)$ to be unique symmetric solution to \cref{eqn:course-nair}. They write this solution in closed-form using the Kronecker sum.

Unlike the square root gauge, the symmetric gauge does not naturally extend to the case of a state-dependent diffusion coefficient. This is because, in general, the matrix $\mbK$ for which $\mbA_q(\mbx, t) = \mbK - \frac{1}{2}\mbSigma(\mbx, t) \mbS(t)^{-1}$ is symmetric cannot be chosen to be independent of the state $\mbx$.

\paragraph{Square root gauge.}
\citet{bartosh2025sde} instead identify the one-time Gaussian marginals with the drift function $\mbf_q$ defined as in \Cref{prop:lyapunov} for $\mbK(t) = \bigl(\frac{d}{dt}\mbS(t)^{1/2}\bigr)\mbS(t)^{-1/2}$. 
Here, $\mbS(t)^{1/2}$ denotes the symmetric square root of $\mbS(t)$.
It is straightforward to verify that this choice of $\mbK$ satisfies the matrix Lyapunov equation \eqref{app:eqn:lyapunov}. This choice of $\mbK$ is valid even for state-dependent diffusion coefficient.

The SDE Matching framework, in theory, allows for any square root $\mbsigma(t)$ such that $\mbS(t) = \mbsigma(t) \mbsigma(t)^\top$, to be chosen. 
However, in their implementation, $\mbsigma(t)$ is chosen to be diagonal, which yields the symmetric square root $\mbS(t)^{1/2}$.
At the end of this section, we discuss an extension of SDE Matching that represents all square roots simultaneously.

The authors arrive at the choice of variational drift by converting from a probability flow ODE \cite{songscore2021} to an SDE. In particular, suppose $\mbvarepsilon \sim \cN(\mb{0}, \bbI_K)$ and $F(\mbvarepsilon, t) = \mbm(t) + \mbS(t)^{1/2}\mbvarepsilon$.
$F$ can be viewed as a map that takes as input a sample $\mbvarepsilon$ as well as a time $t$ and produces a sample from $q(\mbx, t)$.
If $\tilde{\mbx}(t) = F(\mbvarepsilon, t)$, then each particle $\tilde{\mbx}(t)$ evolves according to the ODE
\begin{equation}\label{app:eqn:pf-ode}
    \begin{aligned}
    \frac{d}{dt} \tilde{\mbx}(t) &= \frac{d}{dt}\mbm(t) + \left(\frac{d}{dt}\mbS(t)^{1/2}\right)\mbvarepsilon\\
    &= \frac{d}{dt}\mbm(t) + \left(\frac{d}{dt}\mbS(t)^{1/2}\right)\{\mbS(t)^{-1/2}(\tilde{\mbx}(t) - \mbm(t))\}, \quad 0 \leq t \leq T
    \end{aligned}
\end{equation}
with $\tilde{\mbx}(0) \overset{d}{=} F(\mbvarepsilon, 0)$. 

From the ODE \eqref{app:eqn:pf-ode}, the authors construct an SDE with diffusion coefficient $\mbSigma(\mbx, t | \mbtheta)^{1/2}$. A straightforward manipulation of the Fokker-Planck equation (see e.g., \citet[][Appendix D.1]{songscore2021}) shows that the solution to the SDE
\begin{equation*}
\begin{aligned}
    d \mbx(t) &= \bigg( \frac{d}{dt}\mbm(t) + \left(\frac{d}{dt}\mbS(t)^{1/2}\right)\{\mbS(t)^{-1/2}(\mbx(t) - \mbm(t))\} + \frac{1}{2} \mbSigma(\mbx, t) \nabla_{\mbx} \log q(\mbx(t), t)\\
    &+ (1/2) \sum_{j} \partial_{\mbx_j} \mbSigma_{\cdot, j}(\mbx(t), t) \bigg)dt + \mbSigma(\mbx(t), t)^{1/2} d\mbw(t), \quad \mbx(0) \overset{d}{=} q(\mbx, 0)
\end{aligned}
\end{equation*}
has the same one-time marginal distributions as $\tilde{\mbx}(t)$. Substituting in the Gaussian score and grouping terms yields the drift
\begin{equation*}
\begin{aligned}
   &\left\{ \left(\frac{d}{dt}\mbS(t)^{1/2}\right)\mbS(t)^{-1/2} - \frac{1}{2}\mbSigma(\mbx(t), t)\mbS(t)^{-1}\right\}\mbx(t) +\\
    &\frac{d}{dt}\mbm(t) - \left\{ \left(\frac{d}{dt}\mbS(t)^{1/2}\right)\mbS(t)^{-1/2} - \frac{1}{2}\mbSigma(\mbx(t), t)\mbS(t)^{-1}\right\} \mbm(t).
\end{aligned}
\end{equation*}
Once we identify $\mbK(t) = \left(\frac{d}{dt}\mbS(t)^{1/2}\right)\mbS(t)^{-1/2}$, the resulting drift is exactly of the form given in \Cref{prop:lyapunov}.

Although the construction in \citet{bartosh2025sde} allows the marginals $q(\mbx, t)$ to be non-Gaussian (e.g., by parameterizing $F$ as a normalizing flow), the paper itself evaluates the method only in the Gaussian marginal setting. We therefore restrict our discussion to that case.

\paragraph{An extension of SDE Matching.}
The framework of SDE Matching \cite{bartosh2025sde} can in principle be extended to span every solution to the Lyapunov equation \eqref{app:eqn:lyapunov}.
This is not discussed or implemented by the authors. 
As described in the previous paragraph, their parameterization $\mbS(t) = \mbsigma(t)\mbsigma(t)^\top$ induces the linear coefficient $\mbK(t) = (\frac{d}{dt}\mbsigma(t)) \mbsigma(t)^{-1}$. 
Fixing $\mbsigma$ to a specific square root selects a single path law for each set of one-time marginals. 
By \Cref{app:prop:lyapunov-gauge}, the full set of solutions to the Lyapunov equation is parameterized by an additional skew-symmetric matrix $\mbOmega(t)$ via
\begin{equation*}
    \mbK(t) = \left(\frac{d}{dt}\mbsigma(t) \right) \mbsigma(t)^{-1} + \mbsigma(t) \mbOmega(t) \mbsigma(t)^{-1}, \quad 0 \leq t \leq T.
\end{equation*}
Augmenting SDE Matching with a learned $\mbOmega(t)$ would in principle allow the posterior to span the full $\binom{K}{2}$-dimensional space of compatible linear drifts.

This extension has two limitations. 
First, it requires introducing and optimizing over $\mbOmega(t)$, sacrificing the closed-form structure of SDE Matching's original parameterization. 
Helmholtz-SDE, by contrast, \emph{computes the optimal correction in closed form} via the Helmholtz decomposition of the residual vector field (\Cref{sec:method}). 
Second, for state-independent diffusion coefficient, the resulting path law remains a Gaussian process.
\Cref{prop:lyapunov} characterizes only diffusions with linear drift consistent with the prescribed Gaussian marginals; non-linear drifts can also yield Gaussian one-time marginals while inducing non-Gaussian multi-time joint distributions. 
Helmholtz-SDE's polynomial extension (\Cref{app:subsec:resid-approx}) leverages this freedom, expressing corrections outside the Gaussian-process family. 
\section{Helmholtz-SDE}\label{app:sec:helm-sde}

In this section, we provide further explanation of our algorithm, Helmholtz-SDE.

First, in \Cref{app:subsec:helmholtz-compute} we describe how the $q$-weighted Helmholtz decomposition can be obtained for arbitrary smooth, strictly positive marginal densities $q$ by inverting a Stein operator. This yields a probabilistic solution that can be estimated via Monte Carlo.
In \Cref{app:subsec:helmholtz-gaussian}, we provide sufficient conditions under which, for Gaussian one-time marginals, the Helmholtz decomposition exists uniquely. As a corollary, we obtain the optimality of Helmholtz-SDE.
In \Cref{app:subsec:resid-approx} we translate this theoretical result into a practical formula using a polynomial approximation to the residual vector field $\mbr = \mbf_p - \mbf_q$. 
Lastly, in \Cref{app:subsec:alg-summary} we provide pseudocode of our algorithm.

\subsection{Computing the Helmholtz decomposition}\label{app:subsec:helmholtz-compute}

We first discuss how the Helmholtz decomposition can be computed for an arbitrary smooth, strictly positive density $q$ as the inverse of a Stein operator. The resulting probabilistic expression can be approximated via Monte Carlo. For ease of notation, we consider state-independent diffusion coefficient, although the same construction holds in the state-dependent setting.

\paragraph{The Helmholtz decomposition and diffusion Stein operator.}
As described in \Cref{subsec:helmholtz-decomp}, the optimal correction to the reference drift $\mbf_q(\mbx, t | \mbphi)$ is obtained by computing the $q$-weighted Helmholtz decomposition of the residual vector field $\mbr(\mbx, t | \mbtheta, \mbphi) = \mbf_p(\mbx, t | \mbtheta) - \mbf_q(\mbx, t | \mbphi)$ and taking the $q$-divergence-free vector field.
In particular, we want to find a decomposition of $\mbr$ as
\begin{equation}\label{app:eqn:helm-decomp}
    \mbr(\mbx, t | \mbtheta, \mbphi) = \mbSigma(t | \mbtheta) \nabla_{\mbx} \psi(\mbx, t) + \mbh(\mbx, t), \quad (\mbx, t) \in \bbR^K \times [0, T],
\end{equation}
where $\nabla_{\mbx} \psi(\mbx, t)$ is a gradient field and $\mbh(\mbx, t)$ is a $q$-divergence-free vector field.

Suppose $q(\mbx, t|\mbphi)$ is a smooth, strictly positive density.
To obtain such a decomposition, observe that if \cref{app:eqn:helm-decomp} holds, then multiplying both sides by $q(\mbx, t | \mbphi)$ and computing the divergence yields
\begin{equation}
     \nabla_{\mbx} \cdot [q(\mbx, t | \mbphi) \mbSigma(t | \mbtheta) \nabla_{\mbx} \psi(\mbx, t)]
     =
     \nabla_{\mbx} \cdot [q(\mbx, t | \mbphi) \mbr(\mbx, t | \mbtheta, \mbphi)],
     \quad
     (\mbx, t) \in \bbR^K \times [0, T],
\end{equation}
since $\mbh$ is $q$-divergence-free. 
Dividing both sides by $q$ and expanding the left-hand side yields
\begin{equation}\label{app:eqn:stein-operator-eqn}
    \mathrm{tr}(\mbSigma(t | \mbtheta) \nabla_{\mbx}^2 \psi(\mbx, t))
    +
    \nabla_{\mbx} \log q(\mbx, t | \mbphi) \cdot
    \mbSigma(t | \mbtheta) \nabla_{\mbx} \psi(\mbx, t)
    =
    \frac{1}{q(\mbx, t | \mbphi)}
    \nabla_{\mbx} \cdot [q(\mbx, t | \mbphi) \mbr(\mbx, t | \mbtheta, \mbphi)] .
\end{equation}

From \cref{app:eqn:stein-operator-eqn}, we make two observations. First, the left-hand side can be written as $[\cL_t^{\mbSigma}\nabla_{\mbx}\psi(\cdot,t)](\mbx)$, where $\cL_t^{\mbSigma}$ is the differential operator
\begin{equation}\label{app:eqn:diffusion-stein}
    [\cL_t^{\mbSigma}\mbf](\mbx) = \mathrm{tr}(\mbSigma(t | \mbtheta) D_{\mbx} \mbf (\mbx, t))
    +
    \mbSigma(t | \mbtheta) \nabla_{\mbx} \log q(\mbx, t | \mbphi) \cdot
    \mbf(\mbx, t)
\end{equation}
defined for differentiable vector fields $\mbf: \bbR^K \to \bbR^K$, where $D_{\mbx} \mbf$ represents the Jacobian of $\mbf$.
This is the \emph{diffusion Stein operator} corresponding to the density $q(\mbx,t|\mbphi)$ and constant covariance coefficient $\mbSigma(t | \mbtheta)$ \citep{barbour1990stein,gorham2019measuring}. 
When $\mbSigma(t | \mbtheta)=\bbI_K$, it reduces to the \emph{Langevin Stein} operator
\begin{equation*}
    [\cL_t \mbf](\mbx) = \nabla_{\mbx} \cdot \mbf(\mbx)  + \nabla_{\mbx} \log q(\mbx, t | \mbphi) \cdot \mbf(\mbx).
\end{equation*}
We refer the reader to \citet{gorham2019measuring} for background on diffusion Stein operators.

Second, whenever $\mbr(\mbx, t | \mbtheta, \mbphi)$ and
$\cL_t^{\mbSigma}[\mbr(\cdot, t | \mbtheta, \mbphi)](\mbx)$ are integrable, the right-hand side of \cref{app:eqn:stein-operator-eqn} has expectation
\begin{equation*}
    \bbE_{\bbQ_{\mbphi}}\left[
    \frac{1}{q(\mbx, t|\mbphi)}
    \nabla_{\mbx} \cdot [q(\mbx, t | \mbphi) \mbr(\mbx, t | \mbtheta, \mbphi)]
    \right]
    =
    \int
    \nabla_{\mbx} \cdot [q(\mbx, t | \mbphi) \mbr(\mbx, t | \mbtheta, \mbphi)]
    d \mbx = 0
\end{equation*}
The final equality holds by the Divergence theorem \citep[][Theorem 3.1]{liu2026probabilistic}. 

Overall, we have reduced computing the Helmholtz decomposition to solving
\begin{equation}\label{app:eqn:helmholtz-stein-eqn}
\begin{aligned}
    &[\cL_t^{\mbSigma}\nabla_{\mbx} \psi(\cdot, t)](\mbx)
    =
    \eta(\mbx, t),
    \quad
    (\mbx, t) \in \bbR^K \times [0, T],\\
    &\eta(\mbx, t)
    =
    \frac{1}{q(\mbx, t|\mbphi)}
    \nabla_{\mbx} \cdot [q(\mbx, t | \mbphi) \mbr(\mbx, t | \mbtheta, \mbphi)]
\end{aligned}
\end{equation}
where $\eta(\mbx, t)$ has mean zero under $q(\mbx,t|\mbphi)$.
Intuitively, in order to solve \cref{app:eqn:helmholtz-stein-eqn}, we need to find an inverse of the diffusion Stein operator $\cL_t^{\mbSigma}$. 
Doing so has been the subject of prior literature, namely \citet{barbour1990stein,pardoux2005poisson,gorham2019measuring}.

\paragraph{Inverting the diffusion Stein operator.}

Fix $t \in [0,T]$. Consider the time-homogeneous diffusion in an auxiliary time variable $s$ with diffusion coefficient $\mbSigma(t | \mbtheta)^{1/2}$:
\begin{equation}\label{app:eqn:langevin-diffusion}
    d\mbz^{(t)}(s)
    =
    \frac{1}{2} \mbSigma(t | \mbtheta)\nabla_{\mbx}\log q(\mbz^{(t)}(s), t | \mbphi)ds
    +
    \mbSigma(t | \mbtheta)^{1/2}d\mbw(s). 
\end{equation}
When $\mbSigma(t | \mbtheta) = \bbI_K$, \cref{app:eqn:langevin-diffusion} is recognizable as the overdamped Langevin diffusion.
In particular, \cref{app:eqn:langevin-diffusion} has invariant density $q(\mbx, t | \mbphi)$.
Its \emph{infinitesimal generator} is
\begin{equation}
\begin{aligned}
    [\cA_t u](\mbz)
    &=
    \frac{1}{2q(\mbz, t | \mbphi)}
    \nabla_{\mbx}\cdot
    \left[
        q(\mbz, t | \mbphi)\mbSigma(t | \mbtheta)\nabla_{\mbz}u(\mbz)
    \right]\\
    &= \frac{1}{2} \mathrm{tr}(\mbSigma(t | \mbtheta)\nabla_{\mbz}^2 u(\mbz)) + \frac{1}{2} \mbSigma(t | \mbtheta) \nabla_{\mbz}\log q(\mbz, t | \mbphi) \cdot \nabla_{\mbz} u(\mbz)
\end{aligned}
\end{equation}
for $u: \bbR^K \to \bbR$ twice continuously differentiable and satisfying integrability conditions \cite{karatzas2014brownian}.
This implies, by \cref{app:eqn:diffusion-stein}, $\cA_t u = \cL_t^{\mbSigma}(\frac{1}{2}\nabla_{\mbx}u)$. 
Hence, solving \cref{app:eqn:helmholtz-stein-eqn} is equivalent to solving
\begin{equation}\label{app:eqn:poisson}
    [\cA_t u(\cdot, t)](\mbx) = \eta(\mbx, t), \quad  (\mbx, t) \in \bbR^K \times [0, T]
\end{equation}
and then setting $\nabla_{\mbx}\psi(\mbx,t) = \frac{1}{2} \nabla_{\mbx}u(\mbx, t)$.
\Cref{app:eqn:poisson} is known as the \emph{Poisson equation} \cite{pardoux2005poisson,gilbarg1998elliptic}.

Assume that, for each fixed $t$, $\eta(\mbx, t)$ and the coefficients of the above diffusion satisfy the regularity conditions of \citet[][Theorem 5]{gorham2019measuring}. Then \citet[][Theorem 5]{gorham2019measuring} implies that 
\begin{equation}\label{app:eqn:poisson-soln}
    u(\mbx, t) = -\int_0^\infty \bbE[\eta(\mbz^{(t)}(s), t) | \mbz^{(t)}(0) = \mbx]ds
\end{equation}
solves the Poisson equation. The integrand is computed by starting the diffusion \cref{app:eqn:langevin-diffusion} from $\mbx$ and then running it forward until time $s$.

Therefore,
\begin{equation}\label{app:eqn:helmholtz-soln}
    \nabla_{\mbx}\psi(\mbx,t) =  -\frac{1}{2}\nabla_{\mbx} \int_0^\infty \bbE[\eta(\mbz^{(t)}(s), t) | \mbz^{(t)}(0) = \mbx]ds
\end{equation}
solves \cref{app:eqn:helmholtz-stein-eqn}. Consequently, defining $\mbh(\mbx,t) = \mbr(\mbx,t|\mbtheta,\mbphi) - \mbSigma(t | \mbtheta)\nabla_{\mbx}\psi(\mbx,t)$
gives $\nabla_{\mbx} \cdot \left[ q(\mbx,t|\mbphi)\mbh(\mbx,t) \right] = 0$.
Thus $\mbh(\cdot,t)$ is the $q$-divergence-free component of the residual vector field.

\subsection{The Helmholtz decomposition for Gaussian marginals}\label{app:subsec:helmholtz-gaussian}

In \Cref{app:subsec:helmholtz-compute}, we discussed the general recipe for computing the $q$-weighted Helmholtz decomposition of a vector field. In this subsection, we make this recipe fully rigorous for the setting of Gaussian one-time marginals and state-independent diffusion coefficient. As a consequence of the existence of the Helmholtz decomposition, we obtain a proof of \Cref{thm:helm-optimal}. 

First, we describe conditions on the residual vector field under which we will show that the $q$-weighted Helmholtz decomposition exists uniquely.

\begin{assumption}[regularity of the residual vector field]\label{app:assum:resid}
    Assume that the following hold:
    \begin{enumerate}[label=(\roman*)]
        \item\label{item:ass:cont} $(\mbm(t), \mbS(t))_{0 \leq t \leq T}$ is continuously differentiable in $t$, and $\mbS(t)$ is positive definite.
        \item\label{item:ass:diff} $\mbr$ is twice differentiable in $\mbx$, and its derivatives are continuous in $(\mbx, t)$.
        \item\label{item:ass:poly} $\mbr$, $D_{\mbx} \mbr$, and $D_{\mbx}^{(2)}\mbr$ have polynomial growth in $\mbx$, uniformly on compact intervals in $t$.
        \item\label{item:ass:cont-sigma} $\mbSigma(t | \mbtheta)$ is continuous and positive definite.
    \end{enumerate}  
    $D_{\mbx} \mbr \in \bbR^{K \times K}$ and $D_{\mbx}^{(2)} \mbr \in \bbR^{K \times K \times K}$ represent the derivative tensors of the residual vector field $\mbr: \bbR^K \to \bbR^K$. 
\end{assumption}

We are now prepared to state and prove our main existence and uniqueness result.
\begin{theorem}[Existence and Uniqueness of Helmholtz decomposition]\label{app:thm:helm-gaussian}
    Assume that the residual vector field $\mbr(\mbx,t| \mbtheta, \mbphi) = \mbf_p(\mbx,t | \mbtheta)-\mbf_q(\mbx, t | \mbphi)$
    satisfies the smoothness and growth conditions in \Cref{app:assum:resid}.
    Then it admits a $q$-weighted Helmholtz decomposition $\mbr(\mbx, t | \mbtheta, \mbphi) = \mbSigma(t | \mbtheta) \nabla_{\mbx} \psi(\mbx, t) + \mbh(\mbx, t)$. 
    
    Moreover, $\nabla_{\mbx} \psi$ admits a closed-form probabilistic representation \cref{app:eq:prob-rep}.
    Among functions satisfying the same regularity conditions, the vector fields $\mbSigma\nabla_{\mbx}\psi$ and $\mbh$ are unique. 
\end{theorem}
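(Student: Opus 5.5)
Fix $t\in[0,T]$ throughout; all constructions are pointwise in $t$, and continuity in $t$ will follow from \ref{item:ass:cont}, \ref{item:ass:diff} and \ref{item:ass:cont-sigma}. The plan is to realize the recipe of \Cref{app:subsec:helmholtz-compute} concretely for $q(\cdot,t)=\cN(\mbm(t),\mbS(t))$. In this case the auxiliary diffusion \cref{app:eqn:langevin-diffusion} is an Ornstein--Uhlenbeck process,
\begin{equation*}
d\mbz(s) = -\tfrac12 \mbSigma\mbS^{-1}(\mbz(s)-\mbm)\,ds + \mbSigma^{1/2} d\mbw(s).
\end{equation*}
Its transition law is explicit: $\mbz(s)\mid \mbz(0)=\mbx$ is Gaussian with mean $\mbm + e^{-s\mbB}(\mbx-\mbm)$, where $\mbB=\tfrac12\mbSigma\mbS^{-1}$. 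Because $\mbB$ is similar to the positive definite matrix $\tfrac12\mbS^{-1/2}\mbSigma\mbS^{-1/2}$, it has positive spectrum, and $\|e^{-s\mbB}\|\le Ce^{-\lambda s}$ for some $\lambda>0$. The source term is $\eta = q^{-1}\nabla\cdot(q\mbr) = \nabla\cdot\mbr - \mbr\cdot\mbS^{-1}(\mbx-\mbm)$. By \ref{item:ass:diff} and \ref{item:ass:poly}, $\eta$ is $C^1$ and it and its first derivative have polynomial growth; by the divergence theorem (boundary terms vanish thanks to Gaussian tails and polynomial growth) it has $q$-mean zero.

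\textbf{Existence.} I would define $u(\mbx)=-\int_0^\infty P_s\eta(\mbx)\,ds$ and set $\nabla\psi=\tfrac12\nabla u$, giving the representation \cref{app:eq:prob-rep}. The key quantitative step is exponential decay of $P_s\eta$ and $\nabla P_s\eta$ with polynomial weights. To get it, I would use the synchronous coupling $\mbz^{\mbx}(s)-\mbz^{\mbx'}(s)=e^{-s\mbB}(\mbx-\mbx')$, which gives the commutation formula
\begin{equation*}
\nabla P_s\eta(\mbx) = (e^{-s\mbB})^\top \bbE\bigl[\nabla\eta(\mbz^{\mbx}(s))\bigr],
\end{equation*}
hence $|\nabla P_s\eta(\mbx)|\le Ce^{-\lambda s}(1+\|\mbx\|^k)$. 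Since $\eta$ has mean zero under the invariant law $q$, the identity $P_s\eta(\mbx)=\bbE[\eta(\mbz^{\mbx}(s))-\eta(\mbz^{\mbz_0}(s))]$ with $\mbz_0\sim q$, combined with the mean-value theorem and the same coupling, yields $|P_s\eta(\mbx)|\le Ce^{-\lambda s}(1+\|\mbx\|^k)$. These bounds justify convergence of the integral and differentiation under it. To check $\cA u=\eta$, I would use $\cA P_s\eta=\partial_s P_s\eta$, which gives $\cA u = -\int_0^\infty\partial_s P_s\eta\,ds=\eta-\lim_{s\to\infty}P_s\eta = \eta$, the limit being the mean $\int\eta\,q=0$. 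This needs $u\in C^2$, which I expect to be the \textbf{main obstacle}. Second derivatives of $P_s\eta$ would require $\nabla^2\eta$, which involves $D^{(3)}\mbr$ and is not assumed. I would try two routes. The first is a Bismut-type formula, using Gaussian smoothing of the OU transition density for small $s$: differentiate the explicit Gaussian kernel for $s\in(0,1]$, where the integrable singularity is $O(s^{-1/2})$ against $\nabla\eta$, and use the coupling bound for $s\ge1$. The second, as a fallback, is to interpret $\cA u=\eta$ weakly (in the sense of \cref{app:eqn:div-free-weak}) and upgrade it by elliptic regularity, since the coefficients are smooth. Either way, $\mbh:=\mbr-\mbSigma\nabla\psi$ is then $q$-divergence-free, and it inherits the same regularity and growth as $\mbr$.

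\textbf{Uniqueness.} Suppose two decompositions exist. Subtracting them gives $\mbSigma\nabla\chi=\mbh'-\mbh$ with $\nabla\cdot(q\mbSigma\nabla\chi)=0$ and $\nabla\chi$ of polynomial growth. I would test against $\chi$ itself and integrate by parts, where the boundary terms vanish by Gaussian tails. This gives $\int q\,\nabla\chi^\top\mbSigma\nabla\chi\,d\mbx=0$, and since $\mbSigma\succ0$ and $q>0$, we get $\nabla\chi\equiv0$. Hence both $\mbSigma\nabla\psi$ and $\mbh$ are unique.
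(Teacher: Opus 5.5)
Your proposal is correct and follows essentially the same route as the paper. It uses the semigroup representation $u=-\int_0^\infty P_s\eta\,ds$, gets exponential decay from the mean-zero property plus the mean value theorem, and reaches second derivatives through one Gaussian integration by parts: your first ``Bismut'' route, which the paper implements via Stein's identity, with the integrable $O(s^{-1/2})$ singularity and no need for $D^{(3)}_{\mbx}\mbr$. It then verifies $\cA u=\eta$ by the backward Kolmogorov equation and proves uniqueness by the energy identity. The differences are cosmetic: the paper first whitens to an OU process with invariant law $\cN(\mb0,\bbI_K)$ and also uses Stein's identity, rather than your synchronous-coupling formula, for the first derivative; your elliptic-regularity fallback is not needed.
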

\begin{proof}
Fix $t \in [0, T]$.
Since $\mbS(t)$ is positive definite, $q(\mbx, t)$ is a smooth, strictly positive density on $\bbR^K$. 
Recall from \Cref{app:subsec:helmholtz-compute} that solving the Stein operator equation \cref{app:eqn:helmholtz-stein-eqn} for $\nabla_{\mbx}\psi$ reduces, via the identity $\cA_t u = \cL_t^{\mbSigma}(\tfrac{1}{2}\nabla_{\mbx} u)$, to solving the Poisson equation \cref{app:eqn:poisson} for $u$:
\begin{equation}\label{app:eqn:weighted-poisson-theorem}
    [\cA_t u(\cdot,t)](\mbx) 
    \;=\;
    \frac{1}{2 q(\mbx, t)} \nabla_{\mbx} \cdot \left[q(\mbx, t)\mbSigma(t)\nabla_{\mbx} u(\mbx,t)\right]
    \;=\;
    \eta(\mbx, t),
\end{equation}
where $\cA_t$ is the infinitesimal generator of the Langevin diffusion \cref{app:eqn:langevin-diffusion} and $\eta(\mbx, t)$ is the source term in \cref{app:eqn:helmholtz-stein-eqn}.
The gradient component of the Helmholtz decomposition is then recovered as $\nabla_{\mbx}\psi(\mbx, t) = \tfrac{1}{2}\nabla_{\mbx} u(\mbx, t)$.

Our goal is to show that the candidate $u(\mbx, t)$ in \cref{app:eqn:poisson-soln} satisfies \cref{app:eqn:weighted-poisson-theorem} pointwise, and that $u$ is twice continuously differentiable in $\mbx$ with derivatives continuous in $(\mbx, t)$.
Once \cref{app:eqn:weighted-poisson-theorem} is established, defining
\begin{equation}\label{app:eqn:h-definition-theorem}
    \mbh(\mbx,t) = \mbr(\mbx,t) - \mbSigma(t)\nabla_{\mbx}\psi(\mbx,t)
\end{equation}
gives $\nabla_{\mbx} \cdot \left[q(\mbx, t)\mbh(\mbx,t)\right] = 0$, yielding the desired Helmholtz decomposition.

Our proof proceeds in four main steps:
\begin{enumerate}[label=(\roman*)]
    \item\label{item:whiten} Reexpress the candidate $u$ in whitened coordinates, where the associated Langevin diffusion becomes an Ornstein--Uhlenbeck (OU) process.
    \item\label{item:reg} Prove that the candidate $u$ is well-defined and satisfies the desired regularity conditions.
    \item\label{item:poisson} Verify that $u$ satisfies the Poisson equation \cref{app:eqn:weighted-poisson-theorem} pointwise.
    \item\label{item:unique} Show that $\mbSigma\nabla_{\mbx}\psi$ and $\mbh$ are unique.
\end{enumerate}

\medskip
\textbf{Step \ref{item:whiten}}:
The candidate $u$ proposed in \Cref{app:subsec:helmholtz-compute} (\cref{app:eqn:poisson-soln}) is expressed as an integral of expectations along the Langevin diffusion \cref{app:eqn:langevin-diffusion} with invariant law $q(\mbx, t)$. We rewrite this candidate in whitened coordinates, in which the Langevin diffusion becomes an Ornstein--Uhlenbeck (OU) process with invariant law $\cN(\mb{0}, \bbI_K)$. This whitened representation simplifies the subsequent regularity estimates and the verification of the Poisson equation.

Whiten the Gaussian marginals via the change of variables
\begin{equation*}
    \tilde\mbx = \mbS(t)^{-1/2}(\mbx-\mbm(t)),
    \qquad
    \chi(\tilde\mbx,t) = u(\mbm(t) + \mbS(t)^{1/2}\tilde\mbx,t),
\end{equation*}
and define
\begin{equation*}
    \mbrho(\tilde\mbx,t)
    =
    \mbS(t)^{-1/2} \mbr(\mbm(t)+\mbS(t)^{1/2}\tilde\mbx,t), 
    \qquad 
    \mbC(t) = \mbS(t)^{-1/2}\mbSigma(t)\mbS(t)^{-1/2}.
\end{equation*}
Since $\mbS(t)$ and $\mbSigma(t)$ are positive definite, $\mbC(t)$ is positive definite for each $t$.

Applying the change of variables to the Langevin diffusion \cref{app:eqn:langevin-diffusion}, the whitened diffusion is the Ornstein--Uhlenbeck (OU) process
\begin{equation}\label{app:eqn:ou-process-theorem}
    d\tilde\mbz^{(t)}(s)
    =
    -\tfrac{1}{2}\mbC(t)\,\tilde\mbz^{(t)}(s)\,ds + \mbC(t)^{1/2}\,d\mbw(s),
    \qquad
    \tilde\mbz^{(t)}(0) = \tilde\mbx,
\end{equation}
whose invariant law is $\cN(\mb0, \bbI_K)$ and whose infinitesimal generator is
\begin{equation*}
    (\tilde{\cA}_t f)(\tilde\mbx)
    = \tfrac{1}{2}\,\mathrm{tr}\!\left(\mbC(t)\nabla_{\tilde\mbx}^2 f(\tilde\mbx)\right)
    - \tfrac{1}{2}\,\tilde\mbx^\top \mbC(t)\nabla_{\tilde\mbx} f(\tilde\mbx)
\end{equation*}
for $f: \bbR^K \to \bbR$ twice continuously differentiable with derivatives having polynomial growth. Define also the whitened source
\begin{equation}\label{app:eqn:g-definition}
    g(\tilde\mbx,t)
    =
    \nabla_{\tilde\mbx}\cdot \mbrho(\tilde\mbx,t) - \tilde\mbx^\top \mbrho(\tilde\mbx,t).
\end{equation}
A direct computation shows that $g(\tilde\mbx, t) = \eta(\mbm(t) + \mbS(t)^{1/2}\tilde\mbx, t)$, where $\eta$ is the source term in \cref{app:eqn:helmholtz-stein-eqn}. Overall, we have shown that in $\tilde{\mbx}$ coordinates, the Poisson equation becomes
\begin{equation}\label{app:eqn:whitened-poisson-theorem}
    (\tilde{\cA}_t \chi(\cdot, t))(\tilde\mbx)
    =
    g(\tilde\mbx, t).
\end{equation}

By \Cref{app:assum:resid}, $\mbr$ is differentiable in $\mbx$, and both $\mbr$ and $D_{\mbx} \mbr$ have polynomial growth; the same therefore holds of $\mbrho(\cdot,t)$ and $D_{\tilde\mbx} \mbrho(\cdot,t)$. Stein's identity gives
\begin{equation*}
    \E_{\tilde\mbx \sim \cN(\mb0,\bbI_K)}\!\left[\nabla_{\tilde\mbx}\cdot \mbrho(\tilde\mbx,t)\right]
    =
    \E_{\tilde\mbx \sim \cN(\mb0,\bbI_K)}\!\left[\tilde\mbx^\top \mbrho(\tilde\mbx,t)\right],
\end{equation*}
so $g(\cdot, t)$ has mean zero under the OU invariant law.

In these whitened coordinates, the candidate from \cref{app:eqn:poisson-soln} takes the form
\begin{equation}\label{app:eqn:chi-representation-theorem}
    \chi(\tilde\mbx,t)
    =
    -\int_0^\infty (P_s^t g)(\tilde\mbx)\,ds,
    \qquad
    (P_s^t g)(\tilde\mbx)
    =
    \bbE\!\left[g\!\left(\tilde\mbz^{(t)}(s),t\right)\,\big|\,\tilde\mbz^{(t)}(0)=\tilde\mbx\right],
\end{equation}
where $(P_s^t)_{s \geq 0}$ is the OU semigroup associated with \cref{app:eqn:ou-process-theorem}.
Equivalently, $u(\mbx, t) = \chi(\mbS(t)^{-1/2}(\mbx - \mbm(t)), t)$.

\medskip
\textbf{Step \ref{item:reg}}:
We first justify that \cref{app:eqn:chi-representation-theorem} is well-defined.
For all times $0 \le s < \infty$, the marginal distribution of the OU process
\eqref{app:eqn:ou-process-theorem} satisfies
\begin{equation*}
    \tilde\mbz^{(t)}(s)
    \overset{d}{=}
    \mbA_t(s)\tilde\mbx + \mbB_t(s)\mbxi,
    \qquad
    \mbxi \sim \cN(\mb0,\bbI_K),
\end{equation*}
where
\begin{equation*}
    \mbA_t(s) = e^{-s\mbC(t)/2},
    \qquad
    \mbB_t(s) = \left(\bbI_K - e^{-s\mbC(t)}\right)^{1/2}.
\end{equation*}
Thus
\begin{equation*}
    (P_s^t g)(\tilde\mbx)
    =
    \bbE_{\mbxi \sim \cN(\mb0,\bbI_K)}
    \left[g\!\left(\mbA_t(s)\tilde\mbx+\mbB_t(s)\mbxi, t\right)
    \right].
\end{equation*}
Using $\E_{\mbxi \sim \cN(\mb0,\bbI_K)}[g(\mbxi,t)] = 0$, we may write
\begin{equation*}
    (P_s^t g)(\tilde\mbx)
    =
    \bbE_{\mbxi \sim \cN(\mb0,\bbI_K)}
    \left[
        g\!\left(\mbA_t(s)\tilde\mbx+\mbB_t(s)\mbxi,t\right) - g(\mbxi,t)
    \right].
\end{equation*}
Hence,
\begin{equation*}
    \left|(P_s^t g)(\tilde\mbx)\right|
    \leq
    \bbE_{\mbxi \sim \cN(\mb0,\bbI_K)}
    \left|
        g\!\left(\mbA_t(s)\tilde\mbx+\mbB_t(s)\mbxi,t\right)-g(\mbxi,t)
    \right|.
\end{equation*}
Now decompose
\begin{equation*}
    \begin{aligned}
    g\!\left(\mbA_t(s)\tilde\mbx+\mbB_t(s)\mbxi,t\right)-g(\mbxi,t)
    &=
    \Big(
    g\!\left(\mbA_t(s)\tilde\mbx+\mbB_t(s)\mbxi,t\right)
    -
    g\!\left(\mbB_t(s)\mbxi,t\right)
    \Big) \\
    &\quad +
    \Big(
    g\!\left(\mbB_t(s)\mbxi,t\right)-g(\mbxi,t)
    \Big).
    \end{aligned}
\end{equation*}
Applying the mean value theorem to each term and using that $\nabla_{\tilde\mbx} g$ has at most polynomial growth,
there exist constants $C, m > 0$ such that
\begin{equation*}
    \|\nabla_{\tilde\mbx} g(\tilde\mbx,t)\| \le C(1+\|\tilde\mbx\|^m).
\end{equation*}
Since $\mbxi$ is Gaussian, all of its moments are finite. Moreover,
\begin{equation*}
    \|\mbA_t(s)\| \leq C_t e^{-c_t s},
    \qquad
    \|\mbB_t(s)-\bbI_K\| \leq C_t e^{-c_t s},
\end{equation*}
for some constants $C_t, c_t > 0$. It follows that
\begin{equation*}
    \left|
    \bbE\!\left[g\!\left(\tilde\mbz^{(t)}(s),t\right)\,\big|\,\tilde\mbz^{(t)}(0)=\tilde\mbx\right]
    \right|
    \leq
    C_t'(1+\|\tilde\mbx\|^{m+1})e^{-c_t s},
\end{equation*}
for some constant $C_t' > 0$. In particular, \cref{app:eqn:chi-representation-theorem} is finite for every $(\tilde\mbx, t)$.

We next show that $\chi$ is twice continuously differentiable in $\tilde\mbx$ with derivatives continuous in $(\tilde\mbx, t)$.
The estimates below also justify passage of the spatial derivatives, and hence of $\tilde{\cA}_t$, under the integral sign in step \ref{item:poisson}.

Using the notation introduced above, we can rewrite
\cref{app:eqn:chi-representation-theorem} as
\begin{equation}\label{app:eqn:chi-gaussian-representation-theorem}
    \chi(\tilde\mbx,t)
    =
    -\int_0^\infty
    \E_{\mbxi \sim \cN(\mb0,\bbI_K)}
    \left[
        g\!\left(\mbA_t(s)\tilde\mbx+\mbB_t(s)\mbxi,t\right)
    \right]
    ds.
\end{equation}

We first study the gradient of $\chi$.
To do so, we must justify the exchange of the gradient and integral in \cref{app:eqn:chi-gaussian-representation-theorem}. Applying Stein's identity to the integrand of \cref{app:eqn:chi-gaussian-representation-theorem} yields 
\begin{equation*}
    F(\tilde\mbx,t,s)
    =
    \nabla_{\tilde\mbx}(P_s^t g)(\tilde\mbx)
    =
    \mbA_t(s)\mbB_t(s)^{-1}
    \bbE_{\mbxi \sim \cN(0,\bbI_K)}
    \left[
        \mbxi \, g\!\left(\mbA_t(s)\tilde\mbx+\mbB_t(s)\mbxi,t\right)
    \right].
\end{equation*}
The polynomial growth of $g$ implies
\begin{equation*}
    \begin{aligned}
    \|F(\tilde\mbx,t,s)\|
    &\leq
    \|\mbA_t(s)\mbB_t(s)^{-1}\|
    \bbE_{\mbxi \sim \cN(0,\bbI_K)}
    \left[
        \|\mbxi\|\,|g\!\left(\mbA_t(s)\tilde\mbx+\mbB_t(s)\mbxi,t\right)|
    \right] \\
    &\leq
    C\|\mbA_t(s)\mbB_t(s)^{-1}\|
    \left(1+\|\mbA_t(s)\tilde\mbx\|^m\right),
    \end{aligned}
\end{equation*}
where we used that $\mbxi$ is Gaussian and $\mbB_t(s)$ is bounded.
Now, as $s\downarrow 0$, $\|\mbA_t(s)\mbB_t(s)^{-1}\| = O(s^{-1/2})$,
while as $s\to\infty$,
\begin{equation*}
    \|\mbA_t(s)\| \le C_t e^{-c_t s},
    \quad
    \|\mbB_t(s)^{-1}\| \le C_t.
\end{equation*}
Therefore
\begin{equation*}
    \|F(\tilde\mbx,t,s)\|
    \leq
    \begin{cases}
        C(1+\|\tilde\mbx\|^m)s^{-1/2}, & 0 < s \leq 1,\\
        C_t(1+\|\tilde\mbx\|^m)e^{-c_t s}, & s\geq 1,
    \end{cases}
\end{equation*}
and the right-hand side is integrable on $(0,\infty)$. Moreover, for every compact set $K \subset \bbR^K \times [0,T]$, this integrable bound can be chosen uniformly for $(\tilde\mbx,t)\in K$. By dominated convergence we deduce 
\begin{equation*}
    \nabla_{\tilde\mbx} \chi(\tilde\mbx,t)
    =
    -\int_0^\infty
     F(\tilde\mbx, t, s)
    ds.
\end{equation*}

We next study the Hessian of $\chi$.
Since $g$ is continuously differentiable in $\tilde\mbx$, for each fixed $s>0$ the map
$(\tilde\mbx,t)\mapsto F(\tilde\mbx,t,s)$ is continuously differentiable in $\tilde\mbx$, and
\begin{equation*}
    \nabla_{\tilde\mbx} F(\tilde\mbx,t,s)
    =
    \nabla_{\tilde\mbx}^2(P_s^t g)(\tilde\mbx)
    =
    \mbA_t(s)\mbB_t(s)^{-1}
    \bbE_{\mbxi \sim \cN(0,\bbI_K)}
    \left[
        \mbxi \, \left(\nabla_{\tilde\mbx} g\!\left(\mbA_t(s)\tilde\mbx+\mbB_t(s)\mbxi,t\right)\right)^\top
    \right]
    \mbA_t(s).
\end{equation*}
Using that $\nabla_{\tilde\mbx}g$ has at most polynomial growth, we obtain
\begin{equation*}
    \begin{aligned}
    \|\nabla_{\tilde\mbx} F(\tilde\mbx,t,s)\|
    &\le
    \|\mbA_t(s)\mbB_t(s)^{-1}\|\,\|\mbA_t(s)\|\,
    \bbE_{\mbxi \sim \cN(0,\bbI_K)}
    \left[
        \|\mbxi\|\,\|\nabla_{\tilde\mbx} g\!\left(\mbA_t(s)\tilde\mbx+\mbB_t(s)\mbxi,t\right)\|
    \right] \\
    &\le
    C\|\mbA_t(s)\mbB_t(s)^{-1}\|\,\|\mbA_t(s)\|
    \left(1+\|\mbA_t(s)\tilde\mbx\|^m\right).
    \end{aligned}
\end{equation*}
Hence,
\begin{equation*}
    \|\nabla_{\tilde\mbx} F(\tilde\mbx,t,s)\|
    \le
    \begin{cases}
        C(1+\|\tilde\mbx\|^m)s^{-1/2}, & 0 < s \leq 1,\\
        C_t(1+\|\tilde\mbx\|^m)e^{-c_t s}, & s\ge 1,
    \end{cases}
\end{equation*}
which is again integrable on $(0,\infty)$. Again, for every compact set $K \subset \bbR^K \times [0,T]$, this integrable bound may be chosen uniformly for $(\tilde\mbx,t)\in K$, and dominated convergence yields
\begin{equation*}
    \nabla_{\tilde\mbx}^2 \chi(\tilde\mbx,t)
    =
    -\int_0^\infty \nabla_{\tilde\mbx}F(\tilde\mbx,t,s)\,ds.
\end{equation*}
Hence, $\chi$ is twice continuously differentiable in $\tilde\mbx$, for all $t$. 

For each fixed $s>0$, the functions $F(\tilde\mbx,t,s)$ and $\nabla_{\tilde\mbx}F(\tilde\mbx,t,s)$ are continuous in $(\tilde\mbx,t)$ by dominated convergence in the Gaussian expectation.
Since $t \in [0,T]$, continuity and positive definiteness of $\mbS(t)$ and $\mbSigma(t)$
imply uniform spectral bounds for $\mbC(t)$ on $[0,T]$.
Therefore the estimates above may be chosen locally uniformly \emph{jointly} in $(\tilde\mbx,t)$, and dominated convergence in $s$ shows that both $\nabla_{\tilde\mbx}\chi(\tilde\mbx,t)$ and $\nabla_{\tilde\mbx}^2\chi(\tilde\mbx,t)$ are continuous in $(\tilde\mbx,t)$.

Finally, by the chain rule and the identity $u(\mbx,t) = \chi(\mbS(t)^{-1/2}(\mbx - \mbm(t)), t)$ from step \ref{item:whiten},
\begin{equation*}
    \nabla_{\mbx} u(\mbx,t)
    =
    \mbS(t)^{-1/2}
    \nabla_{\tilde\mbx}\chi\!\left(\mbS(t)^{-1/2}(\mbx-\mbm(t)),t\right),
\end{equation*}
and
\begin{equation*}
    \nabla_{\mbx}^2 u(\mbx,t)
    =
    \mbS(t)^{-1/2}
    \nabla_{\tilde\mbx}^2\chi\!\left(\mbS(t)^{-1/2}(\mbx-\mbm(t)),t\right)
    \mbS(t)^{-1/2}.
\end{equation*}
By \Cref{app:assum:resid}\ref{item:ass:cont}, $t \mapsto \mbm(t)$ and $t \mapsto \mbS(t)^{-1/2}$ are continuous, so $\nabla_{\mbx} u(\mbx,t)$ and $\nabla_{\mbx}^2 u(\mbx,t)$ are continuous in $(\mbx,t)$. The same therefore holds of $\nabla_{\mbx}\psi(\mbx,t) = \tfrac{1}{2}\nabla_{\mbx} u(\mbx,t)$ and $\nabla_{\mbx}^2\psi(\mbx,t)$.

\medskip
\textbf{Step \ref{item:poisson}}:
We now verify that $u$ satisfies the  Poisson equation \cref{app:eqn:weighted-poisson-theorem} pointwise.
By step \ref{item:whiten}, it suffices to show that $\chi$ satisfies the Poisson equation in the whitened coordinates \eqref{app:eqn:whitened-poisson-theorem}.

The estimates established in step \ref{item:reg} show that $(P_s^t g)(\tilde\mbx)$ is twice continuously differentiable in $\tilde{\mbx}$. 
And for each $s > 0$, $(P_s^t g)$ is also continuously differentiable in $s$ and satisfies the backward Kolmogorov equation
\begin{equation*}
    \frac{\partial}{\partial s}(P_s^t g)(\tilde\mbx)
    =
    (\tilde{\cA}_t P_s^t g)(\tilde\mbx),
    \qquad s>0.
\end{equation*}

Moreover, the spatial derivatives of $(P_s^t g)$ admit integrable bounds in $s$, locally uniformly in $(\tilde\mbx, t)$.
Therefore the operator $\tilde{\cA}_t$ may be passed under the integral sign in \cref{app:eqn:chi-representation-theorem}, yielding
\begin{equation*}
    \begin{aligned}
        (\tilde{\cA}_t \chi(\cdot, t))(\tilde\mbx)
        &=
        -\int_0^\infty (\tilde{\cA}_t P_s^t g)(\tilde\mbx)\,ds \\
        &=
        -\int_0^\infty \frac{\partial}{\partial s} (P_s^t g)(\tilde\mbx)\,ds \\
        &=
        g(\tilde\mbx,t) - \lim_{R\to\infty}(P_R^t g)(\tilde\mbx).
    \end{aligned}
\end{equation*}
By the tail estimate established in step \ref{item:reg},
\begin{equation*}
    |(P_R^t g)(\tilde\mbx)|
    \leq
    C_t'(1+\|\tilde\mbx\|^{m+1})e^{-c_t R} \to 0
    \qquad \text{as } R \to \infty,
\end{equation*}
for each $(\tilde\mbx, t)$.
Hence $(\tilde{\cA}_t \chi(\cdot, t))(\tilde\mbx) = g(\tilde\mbx,t)$ for all $(\tilde\mbx,t)$, and consequently $u(\mbx, t) = \chi(\mbS(t)^{-1/2}(\mbx - \mbm(t)), t)$ satisfies \cref{app:eqn:weighted-poisson-theorem} pointwise. The gradient component of the Helmholtz decomposition is then $\nabla_{\mbx}\psi(\mbx,t) = \tfrac{1}{2}\nabla_{\mbx} u(\mbx, t)$.

\medskip
\textbf{Step \ref{item:unique}}:
It remains to prove uniqueness of the vector fields
$\mbSigma\nabla_{\mbx}\psi$ and $\mbh$ within the same regularity class.
Suppose that
\begin{equation*}
    \mbr(\mbx,t)
    =
    \mbSigma(t)\nabla_{\mbx}\psi_1(\mbx,t) + \mbh_1(\mbx,t)
    =
    \mbSigma(t)\nabla_{\mbx}\psi_2(\mbx,t) + \mbh_2(\mbx,t)
\end{equation*}
are two $q$-weighted Helmholtz decompositions satisfying the same smoothness
and growth conditions, with
\begin{equation*}
    \nabla_{\mbx}\cdot\left[q(\mbx,t)\mbh_1(\mbx,t)\right]
    =
    \nabla_{\mbx}\cdot\left[q(\mbx,t)\mbh_2(\mbx,t)\right]
    =
    0.
\end{equation*}
Let $\varphi(\mbx,t) = \psi_1(\mbx,t)-\psi_2(\mbx,t)$.
Subtracting the two decompositions gives
\begin{equation}\label{app:eqn:unique-diff}
    \mbSigma(t)\nabla_{\mbx}\varphi(\mbx,t)
    =
    \mbh_2(\mbx,t)-\mbh_1(\mbx,t).
\end{equation}
Since $\mbh_2-\mbh_1$ is also $q$-divergence-free, we have
\begin{equation*}
    \nabla_{\mbx}\cdot
    \left[
        q(\mbx,t)
        \left(\mbh_2(\mbx,t)-\mbh_1(\mbx,t)\right)
    \right]
    =
    0.
\end{equation*}
Multiplying by $\varphi(\mbx,t)$ and integrating over $\bbR^K$, the integration
by parts identity yields
\begin{equation*}
    \int
    \nabla_{\mbx}\varphi(\mbx,t)^\top
    \left(\mbh_2(\mbx,t)-\mbh_1(\mbx,t)\right)
    q(\mbx,t)\,d\mbx
    =
    0.
\end{equation*}
The boundary term vanishes because the
vector fields satisfy the assumed polynomial growth conditions. Using
\eqref{app:eqn:unique-diff}, we obtain
\begin{equation*}
    \int
    \nabla_{\mbx}\varphi(\mbx,t)^\top
    \mbSigma(t)
    \nabla_{\mbx}\varphi(\mbx,t)
    q(\mbx,t)\,d\mbx
    =
    0.
\end{equation*}
Because $\mbSigma(t)$ is positive definite and $q(\mbx,t)>0$, this implies $\nabla_{\mbx}\varphi(\mbx,t)=0$, $q(\cdot,t)$~a.e.
Therefore,
\begin{equation*}
    \mbSigma(t)\nabla_{\mbx}\psi_1(\mbx,t)
    =
    \mbSigma(t)\nabla_{\mbx}\psi_2(\mbx,t),
    \qquad
    \mbh_1(\mbx,t)=\mbh_2(\mbx,t),
\end{equation*}
for $q(\cdot,t)$-a.e.~$\mbx$. 
Since this holds for a.e.~$t$, Tonelli's theorem gives uniqueness up to $(\mbx, t)$-a.e.~equality.
By continuity of 
$\mbSigma(t)\nabla_{\mbx}\psi_1(\mbx,t)$ and $\mbSigma(t)\nabla_{\mbx}\psi_2(\mbx,t)$, as well as $\mbh_1(\mbx,t)$ and $\mbh_2(\mbx,t)$, jointly in $(\mbx, t)$, we conclude (pointwise) uniqueness.
Finally,
$\nabla_{\mbx}(\psi_1-\psi_2)=0$ shows that $\psi_1-\psi_2$ is constant in
$\mbx$, possibly depending on $t$.
\end{proof}

\Cref{app:thm:helm-gaussian} implies that for the case of Gaussian marginals and state-independent diffusion coefficient, the Helmholtz decomposition exists and admits the probabilistic representation
\begin{equation}\label{app:eq:prob-rep}
\begin{aligned}
    &\nabla_{\mbx}\psi(\mbx,t)
    =
    \tfrac{1}{2}\,\mbS(t)^{-1/2}\nabla_{\tilde\mbx}\chi\!\left(\mbS(t)^{-1/2}(\mbx-\mbm(t)),t\right)\\
    &\nabla_{\tilde\mbx}\chi(\tilde\mbx,t) = -\int_0^\infty \mbA_t(s)\mbB_t(s)^{-1}
    \bbE_{\mbxi \sim \cN(\mb0,\bbI_K)}
    \!\left[\mbxi\, g\!\left(\mbA_t(s)\tilde\mbx+\mbB_t(s)\mbxi,t\right)\right] ds\\
    &\mbA_t(s) = e^{-s\mbC(t)/2}, \quad \mbB_t(s) = \left(\bbI_K-e^{-s\mbC(t)}\right)^{1/2}, \quad \mbC(t) = \mbS(t)^{-1/2}\mbSigma(t)\mbS(t)^{-1/2}\\
    & g(\tilde\mbx,t)
    =
    \nabla_{\tilde\mbx}\cdot \mbrho(\tilde\mbx,t) - \tilde\mbx^\top \mbrho(\tilde\mbx,t), \quad \mbrho(\tilde\mbx,t)
    =
    \mbS(t)^{-1/2}
    \mbr(\mbm(t)+\mbS(t)^{1/2}\tilde\mbx, t).
\end{aligned}
\end{equation}

As a corollary of \Cref{app:thm:helm-gaussian}, we obtain a proof of \Cref{thm:helm-optimal} from the main text, which states that the variational family implied by Helmholtz-SDE is arbitrarily close in KL distance to the true posterior whenever the prior is linear and the observation model is linear and Gaussian.

\begin{proof}[Proof of \Cref{thm:helm-optimal}]
When the prior SDE is linear and the observation model is linear-Gaussian, the true posterior $\bbQ^\star$ is a Markovian Gaussian process (`Gauss--Markov' process) whose drift $\mbf^\star(\mbx, t)$ is affine in $\mbx$ and is given by the Kalman-Bucy smoother \cite{kalman1961new, sarkka2023bayesian}. The one-time marginals are Gaussian, $q^\star(\mbx, t) = \cN(\mbm^\star(t), \mbS^\star(t))$, with $\mbS^\star(t)$ positive definite. The functions $\mbm^\star(t)$ and $\mbS^\star(t)$ are continuous on $[0, T]$ \emph{but only piecewise continuously differentiable}: their derivatives jump at the observation times $\{t_n\}_{n=1}^N$, since the smoother updates discontinuously when conditioning on each observation (see e.g., \citet{hu2025sing} Appendix J).
 
By assumption, $\cQ$ is the set of path laws produced by Helmholtz-SDE applied to any continuously differentiable $\mbm(t)$ and positive-definite $\mbS(t)$. Concretely, for each such input pair, Helmholtz-SDE selects a linear reference drift $\mbf_q$ that realizes the marginals $\cN(\mbm(t), \mbS(t))$ (\Cref{prop:lyapunov}) and adds the $q$-divergence-free correction $\mbh$ obtained from the order-$1$ Helmholtz construction. The resulting diffusion has drift $\mbf_q + \mbh$ and one-time marginals $\cN(\mbm(t), \mbS(t))$, and constitutes the corresponding element of $\cQ$. Since $(\mbm^\star, \mbS^\star)$ is not continuously differentiable, the true posterior $\bbQ^\star$ does not, in general, lie in $\cQ$. We argue instead that $\bbQ^\star$ can be approximated arbitrarily closely in KL by $\cQ$.
 
Fix $\epsilon>0$. Since $\mbm^\star$ and $\mbS^\star$ are continuous and
piecewise $C^1$, we may smooth them inside neighborhoods of the observation
times whose total Lebesgue measure is at most $\epsilon$. This gives
continuously differentiable functions $(\mbm^\epsilon,\mbS^\epsilon)$ such that
$\mbS^\epsilon(t)\succ0$ for all $t$, the pair
$(\mbm^\epsilon,\mbS^\epsilon)$ agrees with
$(\mbm^\star,\mbS^\star)$ outside these neighborhoods, and
\begin{equation*}
\sup_{t\in[0,T]}
\left\{
\|\mbm^\epsilon(t)-\mbm^\star(t)\|
+
\|\mbS^\epsilon(t)-\mbS^\star(t)\|_F
\right\}
\le \epsilon.
\end{equation*}
Positive definiteness is preserved for sufficiently small $\epsilon$ because
$\lambda_{\min}(\mbS^\star(t))$ is uniformly bounded away from zero on the
compact interval.
Let $\bbQ^{\epsilon} \in \cQ$ denote the path law produced by Helmholtz-SDE on input $(\mbm^{\epsilon}, \mbS^{\epsilon})$, with reference drift $\mbf_q^{\epsilon}$ and divergence-free correction $\mbh^{\epsilon}$. 

Since $\mbf_p$ and $\mbf_q^{\epsilon}$ are both affine in $\mbx$, the residual $\mbr^{\epsilon}(\mbx, t) = \mbf_p(\mbx, t) - \mbf_q^{\epsilon}(\mbx, t)$ is linear in $\mbx$. Hence, its order-$1$ Taylor expansion is exact. \Cref{app:thm:helm-gaussian} therefore yields a $q^{\epsilon}$-weighted Helmholtz decomposition $\mbr^{\epsilon} = \mbSigma \nabla_{\mbx} \psi^{\epsilon} + \mbh^{\epsilon}$, in which the divergence-free component $\mbh^{\epsilon}$ is exactly the correction added by Helmholtz-SDE. Hence the corrected drift $\mbf_q^{\epsilon} + \mbh^{\epsilon}$ differs from the prior drift $\mbf_p$ exactly by the gradient component $\mbSigma \nabla_{\mbx} \psi^{\epsilon}$.
 
It remains to show that $\KL{\bbQ^{\epsilon}}{\bbQ^\star} \to 0$ as $\epsilon \to 0$. Both $\bbQ^{\epsilon}$ and $\bbQ^\star$ are diffusions with diffusion coefficient $\mbSigma^{1/2}$ and Gaussian initial distributions $\cN(\mbm^{\epsilon}(0), \mbS^{\epsilon}(0))$ and $\cN(\mbm^\star(0), \mbS^\star(0))$, respectively, with drifts $\mbf_q^{\epsilon} + \mbh^{\epsilon}$ and $\mbf^\star$. Girsanov's theorem (\Cref{app:thm:girsanov}) therefore gives
{\small
\begin{equation*}
    \KL{\bbQ^{\epsilon}}{\bbQ^\star} = \KL{\cN(\mbm^{\epsilon}(0), \mbS^{\epsilon}(0))}{\cN(\mbm^\star(0), \mbS^\star(0))} + \frac{1}{2}\int_0^{T} \bbE_{\bbQ^{\epsilon}} \|\mbSigma^{-1/2}(\mbf_q^{\epsilon} + \mbh^{\epsilon} - \mbf^\star)\|^2 \, dt.
\end{equation*}
}
The initial-time KL vanishes as $\epsilon \to 0$ by continuity of the Gaussian KL in its parameters, since $\mbm^{\epsilon}(0) \to \mbm^\star(0)$ and $\mbS^{\epsilon}(0) \to \mbS^\star(0)$.
For the path-KL term, recall that on the complement of the $\epsilon$-neighborhoods of $\{t_n\}$, the smoothed and true marginals coincide: $(\mbm^{\epsilon}, \mbS^{\epsilon}) = (\mbm^\star, \mbS^\star)$.
Thus $\mbf_q^\epsilon+\mbh^\epsilon$ is the Helmholtz solution of the
fixed-marginal path-KL minimization problem for $q^\star$. The true posterior
drift $\mbf^\star$ solves the same problem: once the one-time marginals are
fixed to $q^\star$, the reconstruction and initial-KL terms are fixed, so
optimality of $\bbQ^\star$ reduces to minimizing the path KL. 
By strict convexity of the weighted $L^2(q^\star,\mbSigma^{-1})$ objective, the minimizer is unique up to $q^\star$-a.e. equality. Hence $\mbf_q^\epsilon+\mbh^\epsilon=\mbf^\star$ a.e.~outside the smoothing neighborhoods. 
Indeed, if another drift with the same marginals had smaller path KL to the prior, replacing the posterior drift by that drift would leave the reconstruction and initial KL terms unchanged and would produce an ELBO strictly larger than that of the exact posterior, a contradiction.

Inside the smoothing neighborhoods, the affine coefficients of
$\mbf_q^\epsilon+\mbh^\epsilon$ and $\mbf^\star$ are uniformly bounded for
small $\epsilon$. Moreover, the uniform convergence of $(\mbm^\epsilon,\mbS^\epsilon)$ to $(\mbm^\star,\mbS^\star)$ implies $\sup_{\epsilon,t}\bbE_{\bbQ^\epsilon}\|X_t\|^2 < \infty$.
Combining these two facts, there exists a constant $C$
independent of $\epsilon$ such that
\begin{equation*}
\bbE_{\bbQ^\epsilon}
\left\|
\mbSigma(t)^{-1/2}
\{\mbf_q^\epsilon(X_t,t)+\mbh^\epsilon(X_t,t)-\mbf^\star(X_t,t)\}
\right\|^2
\le C
\end{equation*}
on these neighborhoods.

The path-KL term is therefore $O(\epsilon)$. We conclude $\KL{\bbQ^{\epsilon}}{\bbQ^\star} \to 0$ as $\epsilon \to 0$, and hence $\inf_{\bbQ \in \cQ} \KL{\bbQ}{\bbQ^\star} = 0$.
\end{proof}

\subsection{Polynomial approximation to residual vector field}\label{app:subsec:resid-approx}

We now describe how to compute the order-$\ell$ polynomial approximation to the weighted Helmholtz decomposition used in our implementation. In addition to Gaussianity of the posterior marginals, we assume the regularity conditions in \Cref{app:assum:resid}. For an order $\ell > 2$ approximation, we need to further assume that $\mbr$ has derivatives up to order $\ell$ with at most polynomial growth.

Let $\mbr(\mbx, t | \mbtheta, \mbphi) = \mbf_p(\mbx, t | \mbtheta) - \mbf_q(\mbx, t | \mbphi)$, and let us drop the dependence on $\mbtheta$ and $\mbphi$. We will consider the order-$\ell$ Taylor approximation of $\mbr(\mbx, t)$ about $\mbm(t)$
\begin{equation}\label{app:eqn:taylor-expansion}
    \begin{aligned}
    &\tilde{\mbr}^{(\ell)}(\mbx, t)=\\
    &\mbr(\mbm(t), t) + D_{\mbx}\mbr(\mbx, t)|_{\mbx = \mbm(t)} (\mbx - \mbm(t)) + \cdots + \frac{1}{\ell!} D_{\mbx}^{(\ell)}\mbr(\mbx, t)|_{\mbx = \mbm(t)}[(\mbx - \mbm(t))^{\otimes \ell}]
    \end{aligned}
\end{equation}
where $D_{\mbx}^{(l)}r(\mbx, t)_{i j_1, \ldots, j_l} = \frac{\partial^l}{\partial \mbx_{j_1}, \ldots, \partial \mbx_{j_l}} \mbr_i(\mbx, t)$, $1 \leq i, j_1, \ldots, j_l \leq k$ denotes the $l$-th derivative tensor for $1 \leq l \leq \ell$
and $D_{\mbx}^{(l)} \mbr(\mbm(t),t)[(\mbx- \mbm(t))^{\otimes l}]$ denotes contraction of $D_{\mbx}^{(l)} \mbr(\mbm(t),t)$ with the order-$l$ tensor product $(\mbx-\mbm(t))^{\otimes l}$. 
\Cref{app:eqn:taylor-expansion} can be obtained by performing a multivariate Taylor expansion for each output dimension of $\mbr$ separately.
Note that $D_{\mbx}^{(l)}r(\mbx, t)$ is $(l+1)$-dimensional. 
Moreover, each  output dimension of $\tilde{\mbr}^{(\ell)}$ is a polynomial in $K$ variables of degree at most $\ell$.

The full $l$-th derivative tensor has
$K^{l+1}$ entries. Thus, explicitly forming this tensor by automatic differentiation costs at
least $O(K^{l+1})$ operations, and roughly $O(K^{l+1} C_{\mbr})$ when $C_{\mbr}$
denotes the cost of a single evaluation of $\mbr$.

Our goal will be show that we can analytically compute the Helmholtz decomposition of $\tilde{\mbr}^{(\ell)}(\mbx, t)$ with computational complexity $O(K^{\ell + 2})$. For latent spaces of moderate dimension, this is tractable for $\ell \in \{1, 2\}$, which we find work well in our experiments.

\paragraph{The Helmholtz decomposition as solving a linear system.}
From the proof of \Cref{app:thm:helm-gaussian} in \Cref{app:subsec:helmholtz-gaussian}, we know that it suffices to find a function $\psi(\cdot, t): \bbR^K \to \bbR$ that solves the Poisson equation
\begin{equation*}
      \nabla_{\mbx} \cdot \left[q(\mbx, t)\mbSigma(t | \mbtheta)\nabla \psi(\mbx,t)\right]
    =
    \nabla_{\mbx} \cdot \left[q(\mbx, t)\tilde{\mbr}^{(\ell)}(\mbx,t | \mbtheta)\right].
\end{equation*}
Then, the Helmholtz decomposition of $\tilde{\mbr}^{(\ell)}$ with respect to $q$ is 
\begin{equation*}
\tilde{\mbr}^{(\ell)}(\mbx, t |\mbtheta) = \mbSigma(t | \mbtheta)\nabla \psi(\mbx,t) + \mbh(\mbx, t).
\end{equation*}

To obtain such a solution, we first performed the whitening transformation $\mbz = \mbS(t)^{-1/2}(\mbx - \mbm(t))$, in which case the Poisson equation became  
\begin{equation}\label{app:eq:helm-polynomial}
    \begin{aligned}
    &\mathrm{tr}\left(\mbC(t)\nabla_{\mbz}^2\chi(\mbz,t)\right)
    -
    \mbz^\top \mbC(t)\nabla_{\mbz}\chi(\mbz,t)
    =
    g(\mbz,t)\
    \end{aligned}
\end{equation}
where $ \chi(\mbz,t) = \psi(\mbm(t) + \mbS(t)^{1/2}\mbz,t)$ is the solution in whitened coordinates; $g(\mbz,t) = \nabla_{\mbz}\cdot \mbu(\mbz,t) - \mbz^\top \mbu(\mbz,t)$ is the right-hand side in whitened coordinates; $\mbu(\mbz, t) = \mbS(t)^{-1/2}\tilde{\mbr}^{(\ell)}(\mbm(t)+\mbS(t)^{1/2}\mbz,t | \mbtheta)$ is the residual vector field in whitened coordinates; and $\mbC(t) = \mbS(t)^{-1/2}\mbSigma(t)\mbS(t)^{-1/2}$.

From \cref{app:eq:helm-polynomial} we observe that since each output dimension of $\tilde{\mbr}^{(\ell)}$ is a degree $\ell$ polynomial in $\mbz$, then the inner product of $\mbu(\mbz, t)$ with $\mbz$ implies that right-hand side of the Poisson equation \cref{app:eq:helm-polynomial} will be a degree $(\ell + 1)$ polynomial in $\mbz$. Moreover, since the gradient operator $\nabla_{\mbz}$ in the left hand side of \cref{app:eq:helm-polynomial} reduces the degree of a polynomial by one, and the inner product with $\mbz$ increases it by one, then the solution $\chi(\mbz, t)$ to \cref{app:eq:helm-polynomial} will also be a polynomial of degree $(\ell + 1)$.

Consider the monomials in $K$ variables of total degree at most $\ell+1$, namely
\begin{equation*}
p_{\mbalpha}(\mbz) = \prod_{j=1}^K \mbz_j^{\mbalpha_j}, \quad \mbalpha \in \cN_{K, \ell+1},
\quad
\cN_{K, \ell+1} = \left\{ \mbalpha \in \{0, 1, \ldots, \ell + 1\}^K \bigg|  \sum_{j=1}^K \mbalpha_j \leq \ell+1 \right\}.
\end{equation*}
These monomials are a basis for the subspace of multivariate polynomials of order at most $\ell + 1$, and there are $\binom{K+\ell+1}{\ell+1} = \Omega(K^{\ell+1})$ such monomials for fixed $\ell$.
One strategy for solving \cref{app:eq:helm-polynomial} would be to expand $g(\mbz, t)$ and $\chi(\mbz, t)$ in this monomial basis and then to solve a linear system. 
Indeed, we could construct a $\Omega(K^{\ell+1}) \times \Omega(K^{\ell+1})$ matrix whose columns describe the action of the operator $p(\mbz) \mapsto \mathrm{tr}\left(\mbC(t)\nabla_{\mbz}^2p(\mbz)\right)-\mbz^\top \mbC(t)\nabla_{\mbz}p(\mbz)$ on each $p_{\mbalpha}(\mbz)$, $\mbalpha \in \cN_{k, \ell + 1}$. In general, this matrix will be dense. Then, solving \cref{app:eq:helm-polynomial} amounts to solving a dense $\Omega(K^{\ell+1})$-dimensional linear system; doing so via Gaussian elimination has computational complexity $\Omega(K^{\ell + 1})^3$.

The obvious downside to this approach is that the computational complexity scales poorly with the latent dimension. For example, with a $K=10$-dimensional latent space and an order $\ell = 2$ polynomial approximation, $K^{\ell + 2} = 10^4$, whereas $K^{3\ell + 3} = 10^{9}$. This motivates us to develop a more efficient algorithm to solve the linear system.

\paragraph{Triangularizing the polynomial solver.}
The reason we could not efficiently solve the linear system was that the $\Omega(K^{\ell + 1}) \times \Omega(K^{\ell + 1})$ matrix was dense. We will show that by performing an appropriate change-of-variable, the solution to \cref{app:eq:helm-polynomial} can be obtained by a sparse linear solve.

Indeed, let
\begin{equation*}
    \mbC(t) = \mbQ(t)\mathrm{diag}(\lambda_1(t), \ldots, \lambda_k(t))\mbQ(t)^\top, \quad \mbQ(t) \mbQ(t)^\top = \bbI_K, 
\end{equation*}
be an eigendecomposition of $\mbC(t)$, and define the rotated coordinate system $\mbv = \mbQ(t)^\top \mbz$.
Writing $\chi(\mbz,t) = \tilde \chi(\mbv,t)$ and $g(\mbz,t) = \tilde g(\mbv,t)$, the Poisson
equation \cref{app:eq:helm-polynomial} becomes
\begin{equation}\label{app:eq:helm-polynomial-v}
    \sum_{i=1}^K \lambda_i(t)
    \left(
        \frac{\partial^2}{\partial \mbv_i^2}
        -
        \mbv_i \frac{\partial}{\partial \mbv_i}
    \right)\tilde \chi(\mbv,t)
    =
    \tilde g(\mbv,t).
\end{equation}
We now expand $\tilde \chi$ and $\tilde g$ in the monomial basis.
For a monomial $p_{\mbalpha}(\mbv)$, $\mbalpha \in \cN_{K, \ell +1}$ the operator in \cref{app:eq:helm-polynomial-v} acts as
\begin{equation}\label{app:eq:operator-on-monomial}
    \begin{aligned}
    &\sum_{i=1}^K \lambda_i(t)
    \left(
        \frac{\partial^2}{\partial \mbv_i^2}
        -
        \mbv_i \frac{\partial}{\partial \mbv_i}
    \right)p_{\mbalpha}(\mbv) \\
    &=
    -
    \left(
        \sum_{i=1}^K \lambda_i(t)\mbalpha_i
    \right)p_{\mbalpha}(\mbv)
    +
    \sum_{i=1}^K \lambda_i(t)\mbalpha_i(\mbalpha_i-1)p_{\mbalpha - 2 \mbe_i}(\mbv),
    \end{aligned}
\end{equation}
where $\mbe_i \in \bbR^K$ is the $i$th standard basis vector and we interpret
$\mbalpha_i - 2 (\mbe_i)_i = 0$ whenever $\mbalpha_i < 2$. Thus, the operator maps a monomial of
degree $l$ to itself plus terms of degree $l-2$.

This implies that if the monomials are ordered by total degree, then the matrix representation of
the operator in \cref{app:eq:helm-polynomial-v} is block lower triangular. In particular, the
coefficients of the degree-$(\ell+1)$ terms in $\tilde \chi$ are uncoupled from all lower-degree
terms. If we write
\begin{equation*}
\tilde \chi(\mbv,t) = \sum_{\mbalpha \in \cN_{K, \ell+1}} a_{\mbalpha}(t)p_{\mbalpha}(\mbv),
\qquad
\tilde g(\mbv,t) = \sum_{\mbalpha \in \cN_{K, \ell+1}} b_{\mbalpha}(t)p_{\mbalpha}(\mbv),
\end{equation*}
then for each vector $\mbalpha \in \cN_{K, \ell+1}$ with $\sum \mbalpha_j = \ell + 1$,
\begin{equation}\label{app:eq:top-degree-solve}
    -
    \left(
        \sum_{i=1}^K \lambda_i(t)\mbalpha_i
    \right)a_{\mbalpha}(t)
    =
    b_{\mbalpha}(t) \Rightarrow  a_{\mbalpha}(t) =  -b_{\mbalpha}(t) / 
    \left(
        \sum_{i=1}^K \lambda_i(t)\mbalpha_i
    \right).
\end{equation}
Hence the top-degree coefficients are obtained by coefficient-wise division. 
Once these coefficients are known, the coefficients of degree $\ell - 1$ can be obtained by subtracting  off the contribution of the second derivative term in \cref{app:eq:operator-on-monomial} coming from the degree $\ell + 1$ terms.
Proceeding recursively over degrees $\ell+1,\ell-1,\ell-3,\ldots$ yields the coefficients for half of the terms in the monomial basis expansion of $\tilde \chi$. 
An identical recursion over degrees $\ell, \ell -2, \ldots$ yields the coefficients for the second half of the terms.

\paragraph{Recovering the decomposition in the original coordinates.}
After solving for $\tilde \chi(\mbv,t)$, we compute
\begin{equation*}
    \nabla_{\mbz}\chi(\mbz,t)
    =
    \mbQ(t)\nabla_{\mbv}\tilde \chi(\mbQ(t)^\top \mbz,t).
\end{equation*}

Since $\mbz=\mbS(t)^{-1/2}(\mbx-\mbm(t))$, the chain rule gives
\begin{equation*}
    \nabla_{\mbx}\psi(\mbx,t) = \mbS(t)^{-1/2}\nabla_{\mbz}\chi(\mbS(t)^{-1/2}(\mbx-\mbm(t)),t).
\end{equation*}
This yields the weighted Helmholtz decomposition of the polynomial surrogate
$\tilde{\mbr}^{(\ell)}(\mbx,t)$.

\paragraph{Computational complexity of Helmholtz decomposition.}
The number of monomials of total degree at most $\ell+1$ is
$\Omega(K^{\ell+1})$ for fixed $\ell$. Thus, after diagonalizing $\mbC(t)$, solving
\cref{app:eq:helm-polynomial-v} requires only $\Omega(K^{\ell+1})$ coefficient updates rather than
a dense solve of a $\Omega(K^{\ell+1}) \times \Omega(K^{\ell+1})$ linear system. The diagonalization
of $\mbC(t)$ costs $O(K^3)$, which is dominated by $O(K^{\ell+2})$ for fixed $\ell \geq 1$.
Computing $\nabla_{\mbv}\tilde \chi(\mbv,t)$ contributes an additional factor of $K$.
Therefore, the overall
cost of computing the polynomial Helmholtz decomposition is $O(K^{\ell+2})$ for fixed $\ell$.

\subsection{Algorithm summary}\label{app:subsec:alg-summary}
In this subsection, we summarize of our stochastic VI algorithm, Helmholtz-SDE (\Cref{app:alg:helmholtz_sde}), as well as our algorithm for computing the approximate $q$-weighted Helmholtz decomposition of the residual vector field (\Cref{app:alg:helm_correction}).

\clearpage
\vspace*{\fill}

\begin{algorithm}[H]
\caption{\texttt{HelmCorrection}: polynomial approximation to the divergence-free correction}
\label{app:alg:helm_correction}
\Require{polynomial order $\ell$; prior drift $\mbf_p$; reference drift $\mbf_q$; diffusion coefficient $\mbSigma^{1/2}$; model parameters $\mbtheta$; variational parameters $\mbphi$; posterior mean $\mbm \in \bbR^K$ and covariance $\mbS \in \bbR^{K \times K}$ at time $t$; evaluation time $t$; evaluation state $\mbx \in \bbR^K$}

\medskip
\tcp{Form polynomial surrogate of residual about $\mbm$}
\AlgStep{Define residual $\mbr(\mbx, t | \mbtheta, \mbphi) = \mbf_p(\mbx, t | \mbtheta) - \mbf_q(\mbx, t | \mbm, \mbS, \mbphi)$}

\smallskip
\AlgStep{Compute derivative tensors $D_{\mbx}^{(l)} \mbr (\mbm, t)$ for $l = 0, 1, \ldots, \ell$ via automatic differentiation}

\smallskip
\AlgStep{Form order-$\ell$ Taylor surrogate
$\widetilde{\mbr}^{(\ell)}(\mbx, t) \leftarrow \sum_{l=0}^{\ell} \tfrac{1}{l!} D_{\mbx}^{(l)} \mbr (\mbm, t) \big[(\mbx - \mbm)^{\otimes l}\big]$}

\medskip
\tcp{Whiten coordinates and diagonalize}
\AlgStep{Compute symmetric square root $\mbS^{1/2}$ and inverse $\mbS^{-1/2}$}

\smallskip
\AlgStep{Form whitened residual
$\mbu(\mbz, t) \leftarrow \mbS^{-1/2}\, \widetilde{\mbr}^{(\ell)}(\mbm + \mbS^{1/2} \mbz, t)$}

\smallskip
\AlgStep{Form whitened diffusion $\mbC(t) \leftarrow \mbS^{-1/2} \mbSigma(t | \mbtheta) \mbS^{-1/2}$}

\smallskip
\AlgStep{Eigendecompose $\mbC(t) = \mbQ \,\mathrm{diag}(\lambda_1, \ldots, \lambda_K)\, \mbQ^\top$}

\smallskip
\AlgStep{Define rotated coordinates $\mbv = \mbQ^\top \mbz$ and rotated residual $\widetilde{\mbu}(\mbv, t) \leftarrow \mbQ^\top \mbu(\mbQ \mbv, t)$}

\medskip
\tcp{Build right-hand side $\widetilde{g}(\mbv, t) = \nabla_{\mbv} \cdot \widetilde{\mbu} - \mbv^\top \widetilde{\mbu}$ in monomial basis}
\AlgStep{Expand each component of $\widetilde{\mbu}(\mbv, t)$ as a polynomial in $\mbv$ of degree $\leq \ell$}

\smallskip
\AlgStep{Compute coefficients $\{b_{\alpha}\}_{\alpha \in \mathcal{N}_{K, \ell+1}}$ of $\widetilde{g}(\mbv, t) = \sum_{\alpha} b_{\alpha} \prod_{j=1}^K v_j^{\alpha_j}$, where $\mathcal{N}_{K, \ell+1} = \{\alpha \in \{0, \ldots, \ell+1\}^K : \sum_j \alpha_j \leq \ell + 1\}$}

\medskip
\tcp{Solve recursively over degree}
\AlgStep{Initialize coefficients $\{a_{\alpha}\}_{\alpha \in \mathcal{N}_{K, \ell+1}} \leftarrow 0$}

\smallskip
\For{degree $d = \ell+1, \ell, \ldots, 0$}{
    \For{$\alpha$ with $\sum_j \alpha_j = d$}{
        \AlgStep{Compute residual contribution from already-solved higher-degree terms:
        $c_{\alpha} \leftarrow b_{\alpha} - \sum_{i : \alpha_i \geq 0} \lambda_i (\alpha_i + 2)(\alpha_i + 1)\, a_{\alpha + 2\mbe_i}$\tcp*{from \cref{app:eq:operator-on-monomial}}}
        
        \smallskip
        \AlgStep{Solve diagonal coefficient equation:
        $a_{\alpha} \leftarrow -\, c_{\alpha} \,\Big/\, \sum_{i=1}^K \lambda_i \alpha_i$\tcp*{when $\sum_j \alpha_j > 0$; set $a_{\mathbf{0}} \leftarrow 0$ (constant has no gradient)}}
    }
}

\medskip
\tcp{Recover gradient of potential in original coordinates}
\AlgStep{Compute $\nabla_{\mbv} \widetilde{\chi}(\mbv, t)$ symbolically from coefficients $\{a_{\alpha}\}$}

\smallskip
\AlgStep{Set $\mbz \leftarrow \mbS^{-1/2}(\mbx - \mbm)$ and $\mbv \leftarrow \mbQ^\top \mbz$}

\smallskip
\AlgStep{Evaluate $\nabla_{\mbx} \psi(\mbx, t) \leftarrow \mbS^{-1/2}\, \mbQ\, \nabla_{\mbv} \widetilde{\chi}(\mbv, t)$}

\medskip
\tcp{Return divergence-free component evaluated at $\mbx$}
\AlgStep{$\bar{\mbh} \leftarrow \widetilde{\mbr}^{(\ell)}(\mbx, t) - \mbSigma(t | \mbtheta) \nabla_{\mbx} \psi(\mbx, t)$}

\smallskip
\Return $\bar{\mbh}$
\end{algorithm}
\vspace*{\fill}

\clearpage
\vspace*{\fill}

\begin{algorithm}[H]
\caption{Helmholtz-SDE training}
\label{app:alg:helmholtz_sde}
\Require{dataset $\mathcal{D} = \{ \mby^{(m)}(t_n) \}_{n=1, m=1}^{N, M}$; 
prior drift $\mbf_p(\mbx, t | \mbtheta)$; diffusion coefficient $\mbSigma(t | \mbtheta)^{1/2}$;
observation model $p(\mby(t_n) | \mbx(t_n), \mbtheta_{\mathrm{lik}})$;
parameters $\mbtheta = \{\mbtheta_{\mathrm{prior}}, \mbtheta_{\mathrm{lik}}\}$;
parameters $\mbphi$ of posterior one-time marginals $(\mbm^{(m)}(t | \mbphi), \mbS^{(m)}(t | \mbphi))_{0 \leq t \leq T}$;
reference drift $\mbf_q$; polynomial order $\ell$; number of Monte Carlo samples $N_{\mathrm{time}}$, $N_{\mathrm{space}}$}

\For{learning iterations}{
    \tcp{Compute initial KL divergence (closed-form)}
    \AlgStep{$\cL_{\mathrm{init}} \leftarrow \sum_{m} \KL{\cN(\mbm^{(m)}(0| \mbphi), \mbS^{(m)}(0| \mbphi))}{\cN(\mbmu_0, \mbV_0)}$, where $p(\mbx(0), 0 | \mbtheta) = \cN(\mbmu_0, \mbV_0)$ is the prior initial distribution}

    \medskip
    \tcp{Compute estimate of reconstruction loss}
    \AlgStep{Sample $N_{\mathrm{time}}$ observation times $s_j \overset{i.i.d.}{\sim} \mathrm{Unif}\{1, \ldots, N\}$}

    \smallskip  
    \AlgStep{Sample $N_{\mathrm{space}}$ states from posterior at each time
    $\mbx_{m,j,k} \overset{i.i.d.}{\sim} \cN(\mbm^{(m)}(s_j| \mbphi), \mbS^{(m)}(s_j| \mbphi))$}

    \smallskip  
    \AlgStep{Compute per-trial expected log-likelihood estimate $\ell_m \leftarrow \frac{N}{N_{\mathrm{time}}N_{\mathrm{space}}}\sum_{j, k}  \log p(\mby(s_j) | \mbx_{m,j,k})$}

    \smallskip  
    \AlgStep{$\widehat{\cL}_{\mathrm{rec}} \leftarrow \sum_{m=1}^M \ell_m$}

    \medskip
    \tcp{Compute estimate of path KL divergence}
    \AlgStep{Sample $N_{\mathrm{time}}$ KL evaluation times $s_j \overset{i.i.d.}{\sim} \mathrm{Unif}[0, T]$}

    \smallskip  
    \AlgStep{Sample $N_{\mathrm{space}}$ states from posterior at each time
    $\mbx_{m,j,k} \overset{i.i.d.}{\sim} \cN(\mbm^{(m)}(s_j| \mbphi), \mbS^{(m)}(s_j| \mbphi))$}

    \smallskip  
    \AlgStep{Compute an approximation to the optimal divergence free correction
    $\bar{\mbh}_{j, m, k} \leftarrow \text{\texttt{HelmCorrection}} (\ell, \mbf_p, \mbf_q, \mbSigma, \mbtheta, \mbphi, \mbm^{(m)}(s_j| \mbphi), \mbS^{(m)}(s_j| \mbphi), s_j, \mbx_{m,j,k})$}

    \smallskip
    \AlgStep{Calculate full posterior drift at each time
    $(\mbf_q)_{j, m, k}^{\mathrm{full}} \leftarrow \mbf_q(\mbx_{m,j,k}, s_j | \mbm^{(m)}(s_j| \mbphi), \mbS^{(m)}(s_j| \mbphi)) + \bar{\mbh}_{j, m, k}$}

    \smallskip
    \AlgStep{Compute per-trial KL estimate
    $\mathrm{KL}_m \leftarrow \frac{T}{2N_{\mathrm{time}}N_{\mathrm{space}}}\sum_{j,k} \| \mbSigma(s_j | \mbtheta)^{-1/2}\{\mbf_p(\mbx_{m,j,k}, s_j| \mbtheta) - (\mbf_q)_{j, m, k}^{\mathrm{full}}\}\|^2$}
    
    \smallskip
    \AlgStep{$\widehat{\cL}_{\mathrm{path}} \leftarrow \sum_m \mathrm{KL}_m$}

    \medskip
    \tcp{Update model parameters}
    \AlgStep{$\cL \leftarrow \widehat{\cL}_{\mathrm{rec}} + \cL_{\mathrm{init}} + \widehat{\cL}_{\mathrm{path}}$}
    
    \AlgStep{$\mbtheta, \mbphi \leftarrow \texttt{gradient-step}(\cL, \mbtheta, \mbphi)$}
}
\end{algorithm}
\vspace*{\fill}

\clearpage
\section{Linear SDE experiments: the Ornstein-Uhlenbeck spiral}\label{app:sec:ou-spiral}
In this section, we study the linear SDE prior
\begin{equation}\label{app:eqn:ou-spiral}
    \bbP: d\mbx(t) = (-\alpha \bbI_2 + \omega \mbJ) \mbx(t) dt + d\mbw(t), \quad 0 \leq t \leq T, \quad p(\mbx, 0) = \cN(\mbmu(0), \mbV(0)),
\end{equation}
and the linear, Gaussian observation model $p(\mby(t_i)| \mbx(t_i)) = \cN(\mbC \mbx(t_i) + \mbd, \mbR)$. We refer to this model as the \emph{Ornstein-Uhlenbeck (OU) spiral}. 

The prior SDE $\bbP$ has stationary distribution $\cN(\mb{0}, 1/(2\alpha)\bbI_2)$.
Moreover, the drift function of $\bbP$ is comprised of a gradient field $-\alpha \bbI_2\mbx$ as well as a vector field $\omega\mbJ$ that is divergence-free with respect to the stationary distribution.

In the observation model, $\mbC \in \bbR^{2 \times 2}$ is a linear transformation having determinant $1$, $\mbd \in \bbR^2$ is an offset vector, and $\mbR = \sigma^2 \mbC \mbC^\top$ is the observation noise, where $\sigma^2 > 0$.

First, in \Cref{app:subsec:ou-theory} we demonstrate that neither ARCTA \cite{course2023state, course2023amortized} nor SDE-Matching \cite{bartosh2025sde} can be expected to recover the true posterior in this model, even though it is tractable. 
Then, in \Cref{app:subsec:ou-spiral-supplement} we provide additional simulation results that validate our theoretical claim.  

\subsection{Theoretical results for OU spiral}\label{app:subsec:ou-theory}

In this subsection, we theoretically characterize the suboptimality of the symmetric and square root gauges when the prior and likelihood models are known and fixed. 
In other words, we quantify 
\begin{equation}\label{app:eqn:kl-gap}
    d_{\cQ}^\star = \inf_{\bbQ \in \cQ} \KL{\bbQ}{\bbQ^\star},
\end{equation}
where $\bbQ^\star$ is the true posterior for the latent SDE model (\Cref{subsec:variational_inference}).
In the OU spiral model, since the prior SDE is linear and the likelihood model is linear and Gaussian, $\bbQ^\star$ is a Gaussian process.

Even though exact inference is possible, we show that \emph{existing simulation-free VI algorithms cannot be expected to recover the posterior}. 
Specifically, consider  prior initial distribution $p(\mbx, 0)$ equal to the stationary distribution $\cN(\mb{0}, 1/(2\alpha)\bbI_2)$ and all continuously differentiable marginal parameters  $(\mbm(t), \mbS(t))_{0 \leq t \leq T}$ with $\mbS(t)$ positive definite.
We show that $d_{\cQ_\cF}^\star$ grows linearly with $\omega$ for the corresponding variational families defined by the symmetric and square root gauges.
We denote them by $\cQ_{\mathrm{sym}}$ and $\cQ_{\mathrm{sqrt}}$, respectively.

\begin{theorem}[Suboptimality of simulation-free VI algorithms in the OU spiral]
\label{thm:suboptimality-formal}
Consider the OU spiral model \cref{app:eqn:ou-spiral}, and let $\bbQ^\star$ denote the (true) posterior SDE.  

Fix any $c_0 > 1$ and $\omega \geq c_0 \alpha$. Then every distribution $\bbQ \in \cQ_{\mathrm{sym}} \cup \cQ_{\mathrm{sqrt}}$ satisfies
\begin{equation*}
    \KL{\bbQ}{\bbQ^\star}
    \geq
     \min\left\{
        \frac{1}{8}, \  
        \frac{1-\frac{1}{2c_0}}{2\sqrt{2}}
    \right\}
    T \omega - \frac{1}{2\sqrt{2}} + C_{\mathrm{OU}}
\end{equation*}
where 
\begin{equation}\label{eqn:constant}
    C_{\mathrm{OU}}(\omega, \alpha, \mbC, \mbd, \sigma^2, T, \mby) = \log p(\mby) + N \log(2 \pi \sigma^2).
\end{equation}

\smallskip
Moreover, if $p(\mbx, 0) = \cN(\mb{0}, \frac{1}{2\alpha} \bbI_2)$, then $C_{\mathrm{OU}}$ is lower bounded by a constant $C_{\mathrm{OU-stationary}}$ that does not depend on $\omega$.
\end{theorem}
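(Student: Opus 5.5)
The plan is to convert the KL gap into the negative ELBO via the standard identity $\KL{\bbQ}{\bbQ^\star} = \cL(\bbQ) + \log p(\mby)$, where $\cL(\bbQ) = \cL_{\mathrm{rec}} + \cL_{\mathrm{init}} + \cL_{\mathrm{path}}$ is the negative ELBO of \cref{eqn:elbo} (derived in \Cref{app:subsec:elbo}). It then suffices to lower bound $\cL(\bbQ)$ for every $\bbQ$ in either gauge.

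\textbf{Discarding the easy terms.} $\cL_{\mathrm{init}}\ge 0$. Under $p(\mby(t_n)|\mbx)=\cN(\mbC\mbx+\mbd,\sigma^2\mbC\mbC^\top)$ with $\det\mbC=1$, each reconstruction term is at least the log-normalizer $\log(2\pi\sigma^2)$. Hence $\cL_{\mathrm{rec}}\ge N\log(2\pi\sigma^2)$, which produces the $N\log(2\pi\sigma^2)$ in $C_{\mathrm{OU}}$. Everything then reduces to showing
\[
\cL_{\mathrm{path}} \ge \min\{\tfrac18,\tfrac{1-1/(2c_0)}{2\sqrt2}\}T\omega-\tfrac{1}{2\sqrt2}.
\]

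\textbf{Main step: a pointwise-in-time bound on the path integrand.} For either gauge the drift is affine with $\mbA_q=\mbK-\tfrac12\mbS^{-1}$ (here $\mbSigma=\bbI$). Here $\mbK=\dot{\mbS}^{1/2}\mbS^{-1/2}$ for the square-root gauge, and $\mbA_q$ is symmetric for the symmetric gauge. The residual has linear part $\mbM=-\alpha\bbI+\omega\mbJ-\mbA_q$. Using $\bbE_q\|\mbM(\mbx-\mbm)+\mbb\|^2\ge \mathrm{tr}(\mbM\mbS\mbM^\top)=\|\mbM\mbS^{1/2}\|_F^2$, the key fact is that neither gauge can produce a rotational component matching $\omega\mbJ$.

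In the symmetric gauge, $\mbA_q$ is symmetric while $\omega\mbJ$ is skew, and I would lower bound $\|\mbM\mbS^{1/2}\|_F^2$ by projecting onto the skew part of $\mbM\mbS^{1/2}$. In $2$D with eigenvalues $s_1\ge s_2$ of $\mbS$, this gives a bound of order $\omega^2(\sqrt{s_1}-\ldots)$. Because the bound is degenerate when $\mbS$ is near isotropic, I would instead work with an inequality of the form $\|\mbM\mbS^{1/2}\|_F^2\ge c\,\omega$, minus contributions from $\mbS^{-1}$ and $\dot{\mbS}$. To get linearity rather than quadraticity in $\omega$, I expect an AM-GM trade-off: rotation can be absorbed only through the $\dot{\mbS}$ or anisotropy terms. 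Controlling these forces a cost $\omega^2 s + 1/s\ge 2\omega$ by AM-GM, since the term $\tfrac12\mbS^{-1}$ penalizes small $\mbS$ and $\omega^2\mbS$ penalizes large $\mbS$.

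For the square-root gauge, $\mbK\mbS$ has a symmetric structure up to a term with $\mathrm{tr}$ controlled by $\tfrac{d}{dt}\log\det\mbS$. Integrating over $[0,T]$ gives a telescoping boundary term, which I expect to be the source of the $-\tfrac1{2\sqrt2}$ constant. The condition $\omega\ge c_0\alpha$ is used to absorb the $-\alpha\bbI$ contribution, yielding the factor $1-1/(2c_0)$. Integrating the pointwise bound over $[0,T]$ gives the $T\omega$ term.

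\textbf{Stationary case, and the expected obstacle.} For the ``moreover'' claim, I would bound $\log p(\mby)$ below uniformly in $\omega$. Under the stationary initialization, the prior one-time marginals are $\cN(\mb0,\tfrac1{2\alpha}\bbI)$ independent of $\omega$. Jensen's inequality on the marginal likelihood gives $\log p(\mby)\ge\sum_n \bbE_{\bbP}\log p(\mby(t_n)|\mbx(t_n))$, which depends only on one-time marginals and is therefore $\omega$-free. The main obstacle is the pointwise gauge bound: making the skew-part argument sharp enough to obtain exactly the constants $\tfrac18$ and $(1-\tfrac1{2c_0})/(2\sqrt2)$ uniformly over all admissible $\mbS(t)$, including nearly isotropic and rapidly varying ones.
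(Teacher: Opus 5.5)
Your reduction is sound. The identity $\KL{\bbQ}{\bbQ^\star}=\cL(\bbQ)+\log p(\mby)$, together with $\cL_{\mathrm{init}}\ge 0$ and $\cL_{\mathrm{rec}}\ge N\log(2\pi\sigma^2)$ (using $\det\mbC=1$), produces exactly $C_{\mathrm{OU}}$. Your Jensen argument for the stationary case is also correct, and simpler than the paper's uniform spectral bounds on the observation covariance, because $\sum_n\bbE_{\cN(\mb{0},\frac{1}{2\alpha}\bbI_2)}\log p(\mby(t_n)\mid\mbx)$ does not involve $\omega$.

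The gap is the lower bound on $\cL_{\mathrm{path}}$. This is the entire content of the theorem, you only anticipate it, and the mechanisms you sketch would fail as stated.

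\emph{No pointwise bound of order $\omega$ exists.} Write $\mbS=\mbU\mbLambda\mbU^\top$ with $\lambda_1\le\lambda_2$. The diagonal contribution of $\lambda_1$ is $u^2/(4\lambda_1)$ with $u=\dot\lambda_1-1+2\alpha\lambda_1$, so the $\tfrac12\mbS^{-1}$ penalty is cancelled exactly when $\dot\lambda_1=1-2\alpha\lambda_1$. For example, with $\mbS(t_0)=\lambda\bbI_2$ and $\dot{\mbS}(t_0)=(1-2\alpha\lambda)\bbI_2$, the covariance part of the integrand equals $\omega^2\lambda\to0$. So ``$\omega^2 s+1/s\ge 2\omega$'' is false pointwise; the trade-off only appears after integrating in time.

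\emph{The rotational mismatch must be quantitative.} In both gauges, rotating the eigenbasis at rate $w=\dot\theta$ puts off-diagonal entries into $\mbU^\top\mbA_q\mbU$ that partially cancel $\omega\mbJ$. In the symmetric gauge with anisotropic $\mbS$, a suitable $w$ kills the skew part of $\mbM\mbS^{1/2}$ entirely. Your skew projection therefore degenerates at anisotropy, not isotropy, and yields nothing.

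\emph{The telescoping term is wrong.} $\log\det\mbS$ telescopes to the unbounded quantity $\log\det\mbS(T)-\log\det\mbS(0)$, so it cannot produce the $O(1)$ constant $-\tfrac{1}{2\sqrt2}$.

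What is missing are the two steps that carry the paper's proof.

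First, a scalar reduction. Set $\widetilde\Delta=\mbU^\top\bigl(\mbA_q-(-\alpha\bbI_2+\omega\mbJ)\bigr)\mbU$ and compute $\mbU^\top\mbA_q\mbU$ explicitly in each gauge. The diagonal entries are $(\dot\lambda_i-1)/(2\lambda_i)$ in both gauges; the off-diagonal entries are linear in $w$ with gauge-specific coefficients. Minimizing $\lambda_1\widetilde\Delta_{21}^2+\lambda_2\widetilde\Delta_{12}^2$ over $w$ gives $4\omega^2\lambda_1\lambda_2/(\lambda_1+\lambda_2)$ in the symmetric gauge and $\tfrac{\omega^2}{2}(\sqrt{\lambda_1}+\sqrt{\lambda_2})^2$ in the square-root gauge, both at least $2\omega^2\lambda_1$. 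Handling the isotropic set $\{\lambda_1=\lambda_2\}$ separately, you get $\cL_{\mathrm{path}}\ge\int_0^T[u^2/(8\lambda_1)+\omega^2\lambda_1]\,dt$.

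Second, a time-integrated occupation argument with threshold $\delta=1/(4\omega)$.
If $\lambda_1\ge\delta$ on at least half of $[0,T]$, the $\omega^2\lambda_1$ term alone gives $T\omega/8$.
Otherwise, on $F_\delta=\{\lambda_1<\delta\}$ you have $2\alpha\lambda_1\le 1/(2c_0)$; this is where $\omega\ge c_0\alpha$ enters. Hence $\dot\lambda_1\ge a_0+u$ with $a_0=1-\tfrac{1}{2c_0}$. Integrating over the components of $F_\delta$ gives $-\int_{F_\delta}u\ge a_0|F_\delta|-2\delta$. AM--GM, $u^2/(8\lambda_1)+\omega^2\lambda_1\ge\omega|u|/\sqrt2$, then yields $\tfrac{a_0}{2\sqrt2}T\omega-\tfrac{1}{2\sqrt2}$.

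If you prefer your ``pointwise bound minus a derivative'' framing, it can be rescued after the scalar reduction. Telescope the bounded potential $\Phi=\tfrac{\omega}{\sqrt2}\min\{\lambda_1,\delta\}$ instead of $\log\det\mbS$. Almost everywhere $u^2/(8\lambda_1)+\omega^2\lambda_1\ge\min\{\tfrac14,\tfrac{a_0}{\sqrt2}\}\,\omega-\dot\Phi$, which integrates to a bound that implies the stated one.
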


\begin{proof}
Our proof consists of five key steps:
\begin{enumerate}[label=(\roman*)]
    \item\label{item:decomp} Express the KL divergence to $\bbQ^\star$ at any pair $(\mbm(t), \mbS(t))_{0 \leq t \leq T}$ as sum of two functions $f_{\mbm}$ and $f_{\mbS}$, where $f_{\mbm}$ is a function of the marginal means $(\mbm(t))_{0 \leq t \leq T}$ and $f_{\mbS}$ is a function of the marginal covariances $(\mbS(t))_{0 \leq t \leq T}$.
    
    \item\label{item:mean-path} Justify that $f_{\mbm}$ is minimized at the posterior mean $(\mbm^\star(t))_{0 \leq t \leq T}$.

    \item\label{item:eigen} Rewrite $f_{\mbS}$ in terms of the eigendecomposition of $\mbS(t)$ at each time $0 \leq t \leq T$.

    \item\label{item:occupation-time} Use an occupation-time argument to show that either $\lambda_1(t)$ is often bounded away from zero, in which case the rotational mismatch term is large, or else $\lambda_1(t)$ is often close to zero, in which case keeping it small requires a large negative control and hence large control cost.

    \item \label{item:indpendence} Show that when $p(\mbx, 0)$ is the stationary distribution of the prior \eqref{app:eqn:ou-spiral}, $\log p(\mby)$ is lower bounded by a constant not depending on $\omega$. 
    This is done using uniform (in $\omega$) spectral bounds on the observation covariance matrix.
\end{enumerate}

\textbf{Step \ref{item:decomp}}:
Let $\bbQ$ be an SDE with diffusion coefficient $\bbI_2$ and Gaussian marginals $q(\mbx, t) = \cN(\mbm(t), \mbS(t))$, where $(\mbm(t))_{0 \leq t \leq T}$ is a differentiable mean function and $(\mbS(t))_{0 \leq t \leq T}$ is a differentiable and symmetric, positive definite covariance function. 
Moreover, suppose that $\bbQ$ has drift function of the form 
\begin{equation*}
    \begin{aligned}
    \mbf_q(\mbx, t) = \mbA_q(\mbx, t) \mbx + \frac{d}{dt} \mbm(t) - \mbA_q(t) \mbm(t) \quad 
    \mbA_q(\mbx, t) = \mbK(t) - \frac{1}{2}\mbS(t)^{-1}
    \end{aligned}
\end{equation*}
where $\mbK(t)$ is a solution to the Lyapunov equation \cref{app:eqn:lyapunov}. In \Cref{prop:lyapunov}, we showed that this drift function induces the marginals $q(\mbx, t)$.

Using Girsanov's Theorem (\Cref{app:thm:girsanov}), the KL divergence to the posterior decomposes as
\begin{equation*}
    \KL{\bbQ}{\bbQ^\star}
    =
    f_{\mbm}(\mbm) + f_{\mbS}(\mbS) + C_{\mathrm{OU}}(\omega, \alpha, \mbC, \mbd, \sigma^2, T, \mby),
\end{equation*}
where $f_{\mbm}$ depends only on the mean path, $f_{\mbS}$ depends only on the covariance path, and $C_{\mathrm{OU}}$ is a constant defined in \cref{eqn:constant} not depending on $\bbQ$.

These functions are given by
\begin{align}\label{app:eqn:mean-cov-functions}
    &f_{\mbm}(\mbm)
    =
    \frac{1}{2}(\mbm(0) - \mbmu(0))^\top\mbV(0)^{-1}(\mbm(0) - \mbmu(0)) \nonumber 
    + \frac{1}{2}
    \int_0^{T}
        \left\|
            \frac{d}{dt}\mbm(t)
            -
            (-\alpha \bbI_2 + \omega \mbJ)\mbm(t)
        \right\|_2^2
    dt \nonumber\\
    &\ +
    \frac{1}{2}
    \sum_{i=1}^N
    \big(
        \mbC \mbm(t_i) + \mbd - \mby(t_i)
    \big)^\trans
    \mbR^{-1}
    \big(
        \mbC \mbm(t_i) + \mbd - \mby(t_i)
    \big), \nonumber\\
    &f_{\mbS}(\mbS)
    =
    \ \frac{1}{2}
    \left[
        \mathrm{tr}\big(\mbV(0)^{-1} \mbS(0)\big)
        -
        \log \frac{\det\big(\mbS(0)\big)}{\det\big(\mbV(0)\big)}
        - 2
    \right] + \nonumber\\
    &
    \frac{1}{2}
    \int_0^{T} \nonumber \mathrm{tr}\left(
        \big((\mbK(t) - \frac{1}{2} \mbS(t)^{-1} )-(-\alpha \bbI_2 + \omega \mbJ)\big)
        \mbS(t)
        \big((\mbK(t) - \frac{1}{2} \mbS(t)^{-1} ) -(-\alpha \bbI_2 + \omega \mbJ)\big)^\trans
    \right) dt \nonumber\\
    &\ + \frac{1}{2}
    \sum_{n=1}^N
    \mathrm{tr}\left(
        \mbR^{-1}\mbC \mbS(t_n)\mbC^\trans
    \right).
\end{align}
Notice that $\mbf_{\mbS}$ does not depend on the observations $\{\mby(t_n)\}_{n=1}^N$.

Since $\mbR=\sigma^2 \mbC \mbC^\trans$, we have $\mbC^\trans \mbR^{-1} \mbC = \sigma^{-2}\bbI_2$,
and therefore
\begin{equation*}
    \frac{1}{2}
    \sum_{i=1}^N
    \mathrm{tr}\!\left(
        \mbR^{-1}\mbC \mbS(t_i)\mbC^\trans
    \right)
    =
    \frac{1}{2\sigma^2}\sum_{i=1}^N \mathrm{tr}\!\big(\mbS(t_i)\big)
    \geq 0.
\end{equation*}
The first term of $\mbf_{\mbS}$ is also nonnegative since the matrix $\mbA = \mbV(0)^{-1/2}\mbS(0) \mbV(0)^{-1/2}$ is positive definite and
\begin{equation*}
    \mathrm{tr}\big(\mbV(0)^{-1} \mbS(0)\big)
        -
        \log \frac{\det\big(\mbS(0)\big)}{\det\big(\mbV(0)\big)}
        - 2 = \sum_{i=1}^2 \lambda_i(\mbA) - \log\lambda_i(\mbA) -1 \geq 0.
\end{equation*}
We will drop these two terms from $\mbf_{\mbS}$ when constructing our lower bound. The single remaining term is the path KL term.

\textbf{Step \ref{item:mean-path}}:
Because the posterior path law is Gaussian, the map $\mbm \mapsto f_{\mbm}(\mbm)$ is a strictly convex quadratic functional. Its unique minimizer is the posterior mean path $(\mbm^\star(t))_{0 \leq t \leq T}$. Hence
\begin{equation*}
    f_{\mbm}(\mbm) \geq f_{\mbm}(\mbm^\star),
\end{equation*}
with equality if and only if $\mbm=\mbm^\star$. 
The optimal is $\geq 0$ since each term in $f_{\mbm}$ is nonnegative.
Therefore, the suboptimality of the family is controlled entirely by the covariance term $f_{\mbS}$.

\textbf{Step \ref{item:eigen}}: 
Let $\lambda_1(t) \leq \lambda_2(t)$ denote the ordered eigenvalues of $\mbS(t)$. Since $\mbS(\cdot)$ is $C^1$ and symmetric positive definite, Weyl's perturbation inequality implies that $\lambda_1,\lambda_2$ are Lipschitz, hence absolutely continuous and differentiable almost everywhere.

Define the sets
\begin{equation*}
    A =\{t \in [0,T] : \lambda_1(t) < \lambda_2(t)\},
    \qquad
    E =\{t \in [0,T] : \lambda_1(t)=\lambda_2(t)\}.
\end{equation*}
We will decompose the integral in $\mbf_{\mbS}$ over each of these sets individually.

We will start with $A$, the more involved case. $A$ is an open subset of $[0, T]$, and thus can be written as a countable union of disjoint open intervals. On each such interval, we can choose a $C^1$ rotation matrix $\mbU(t)$ such that
\begin{equation*}
    \mbS(t)=\mbU(t)\mbLambda(t)\mbU(t)^\trans, \
    \mbU(t) = \begin{pmatrix}
        \cos \theta(t) & -\sin \theta(t)\\
        \sin \theta(t) & \cos \theta(t)
    \end{pmatrix}, \
    \mbLambda(t)=\diag\big(\lambda_1(t),\lambda_2(t)\big).
\end{equation*}
Using this eigendecomposition, we rewrite the integrand appearing in $\mbf_{\mbS}$ as 
\begin{equation*}
    \mathrm{tr}\left(
        \widetilde{\mbDelta}(t)\mbLambda(t)\widetilde{\mbDelta}(t)^\trans
    \right), \quad  \widetilde{\mbDelta}(t) = \mbU(t)^\top \big((\mbK(t) - \frac{1}{2} \mbS(t)^{-1} )-(-\alpha \bbI_2 + \omega \mbJ)\big)\mbU(t).
\end{equation*}
Since $\mbU(t)^\top (-\alpha \bbI_2 + \omega \mbJ) \mbU(t) = -\alpha \bbI_2 + \omega \mbJ$, then we need to compute $\widetilde{\mbA}_q(t) = \mbU(t)^\top(\mbK(t) - \frac{1}{2} \mbS(t)^{-1})\mbU(t)$ for the symmetric and square root gauges. 
\begin{enumerate}[label=(\alph*)]
    \item \textbf{Symmetric.}
    Define the skew-symmetric matrix
    \begin{equation*}
    \mbOmega(t) = \mbU(t)^\trans \frac{d}{dt}\mbU(t)
    =
    \begin{pmatrix}
        0 & -w(t) \\
        w(t) & 0
    \end{pmatrix}, \quad w(t) = \frac{d}{dt}\theta(t).
    \end{equation*}
    By definition of the symmetric gauge, $\mbA_q(t) = \mbK(t) - \frac{1}{2} \mbS(t)^{-1}$ is the unique solution to
    \begin{equation}\label{app:eq:symm_lyap}
        \frac{d}{dt} \mbS(t) - \bbI_2  = \mbA_q(t)\mbS(t) + \mbS(t) \mbA_q(t)^\top, \quad 0 \leq t \leq T.
    \end{equation}
    A simple algebraic calculation yields
    \begin{equation}\label{app:eq:diff-lambda}
        \mbU(t)^\top  \frac{d}{dt} \mbS(t) \mbU(t) = \frac{d}{dt} \mbLambda(t) + \mbOmega(t) \mbLambda(t) + \mbLambda(t) \mbOmega(t)^\top.
    \end{equation}
    Multiplying \cref{app:eq:symm_lyap} by $\mbU(t)^\top$ on the left and $\mbU(t)$ on the right, and then substituting \cref{app:eq:diff-lambda} into \cref{app:eq:symm_lyap} yields
    \begin{equation*}
        \frac{d}{dt} \mbLambda(t) + \mbOmega(t) \mbLambda(t) + \mbLambda(t) \mbOmega(t)^\top - \bbI_2 = \widetilde{\mbA}_q \mbLambda(t) + \mbLambda(t) \widetilde{\mbA}_q^\top, \quad 0 \leq t \leq T.
    \end{equation*}

    Hence, the entries of $\widetilde{\mbA}_q(t)$ are
    \begin{equation}\label{app:eqn:eigen-symm}
        (\widetilde{\mbA}_q(t))_{ij} = \begin{cases}
            (\frac{d}{dt}\lambda_i(t) - 1)/(2 \lambda_i(t)) & \text{if $i = j$}\\
            \mbOmega_{ij}(t) (\lambda_j(t) - \lambda_i(t)) / (\lambda_i(t) + \lambda_j(t)) & \text{if $i \neq j$}
        \end{cases}.
    \end{equation}
    \item \textbf{Square root.}
    The square root gauge sets $\mbK(t) = (\frac{d}{dt}\mbS(t)^{1/2})\mbS(t)^{-1/2}$, where $\mbS(t)^{1/2} = \mbU(t) \mbLambda(t)^{1/2}\mbU(t)^\top$ is the symmetric square root. Rewriting $\mbK(t)$ in terms of $\mbU(t)$ and $\mbLambda(t)$ yields
    \begin{equation}\label{app:eq:sqrt-solve}
    \begin{aligned}
        &\left(\frac{d}{dt}\mbS(t)^{1/2} \right)\mbS(t)^{-1/2}  = \frac{d}{dt} \mbU(t) \mbU(t)^\top  + (1/2)\mbU(t) \frac{d}{dt}\mbLambda(t)\mbLambda(t)^{-1}\mbU(t)^\top +\\
        &\mbU(t)\mbLambda(t)^{1/2}\mbOmega(t)^\top\mbLambda(t)^{-1/2}\mbU(t)^\top.
    \end{aligned}
    \end{equation}
    Multiplying \cref{app:eq:sqrt-solve} by $\mbU(t)^\top$ on the left and $\mbU(t)$ on the right yields 
    \begin{equation*}
        \widetilde{\mbA}_q(t) = \mbOmega(t) + (1/2) \frac{d}{dt}\mbLambda(t)\mbLambda(t)^{-1} - \mbLambda(t)^{-1/2}\mbOmega(t) \mbLambda(t)^{1/2} - (1/2)\mbLambda(t)^{-1}.
    \end{equation*}

      Hence, the entries of $\widetilde{\mbA}_q(t)$ are
    \begin{equation}\label{app:eqn:eigen-sqrt}
        (\widetilde{\mbA}_q(t))_{ij} = \begin{cases}
            (\frac{d}{dt}\lambda_i(t) - 1) / 2\lambda_i(t) & \text{if $i = j$}\\
            \mbOmega_{ij}(t)(1 - \sqrt{\lambda_i(t) / \lambda_j(t)}) & \text{if $i \neq j$}
        \end{cases}.
    \end{equation}
\end{enumerate}

Overall, the integrand is equal to
\begin{equation}\label{app:eqn:integrand}
    \mathrm{tr}\left(
        \widetilde{\mbDelta}(t)\mbLambda(t)\widetilde{\mbDelta}(t)^\trans
    \right) = \lambda_1(t)(\widetilde{\mbDelta}(t)_{11}^2 + \widetilde{\mbDelta}(t)_{21}^2) + \lambda_2(t)(\widetilde{\mbDelta}(t)_{12}^2 + \widetilde{\mbDelta}(t)_{22}^2),
\end{equation}
where we have now computed $\widetilde{\mbDelta}(t)$ for the symmetric and square root gauges.

Next, we explicitly minimize  \cref{app:eqn:integrand} with respect to $w(t)$, for each of the symmetric and square root gauges. 
Note that only the off-diagonal entries of $\widetilde{\mbDelta}(t)$ depend on $w(t)$. 
In particular let $\widetilde{\mbDelta}(t)_{12} = a w(t) + \omega$ and $\widetilde{\mbDelta}(t)_{21} = b w(t) - \omega$ for $a, b \in \bbR$. Then the contribution of the off-diagonal terms to the integrand can be written as 
\begin{equation*}
    \lambda_1(t)\widetilde{\mbDelta}(t)_{21}^2 + \lambda_2(t)\widetilde{\mbDelta}(t)_{12}^2  = (\lambda_1(t)b^2 + \lambda_2(t)a^2) w(t)^2 + 2 \omega (\lambda_2(t) a - \lambda_1(t) b) w(t) + (\lambda_1(t) + \lambda_2(t))\omega^2.
\end{equation*}
Completing the square yields the optimal value of $w(t)$
\begin{equation*}
    w(t)^\star = - \omega (\lambda_2(t) a - \lambda_1(t) b) / (\lambda_1(t) b^2 + \lambda_2(t) a^2).
\end{equation*}
The contribution of the off-diagonal terms at this optimum is
\begin{equation}\label{app:eqn:optimal-angle}
\begin{aligned}
    &(\lambda_1(t) + \lambda_2(t))\omega^2 - \omega (\lambda_2(t) a - \lambda_1(t) b)^2 / (\lambda_1(t) b^2 + \lambda_2(t) a^2)\\
    = &\omega^2 \lambda_1(t) \lambda_2(t) (a+b)^2 / (\lambda_1(t) b^2 + \lambda_2(t) a^2).
\end{aligned}
\end{equation}
For the symmetric gauge, $a = b = - (\lambda_2(t) - \lambda_1(t)) / (\lambda_1(t) + \lambda_2(t))$, so 
\begin{equation*}
    \lambda_1(t)\widetilde{\mbDelta}(t)_{21}^2 + \lambda_2(t)\widetilde{\mbDelta}(t)_{12}^2 \geq 4 \omega^2 \lambda_1(t) \lambda_2(t) / (\lambda_1(t) +  \lambda_2(t)) \geq 2 \omega^2 \lambda_1(t). 
\end{equation*}
In the last inequality, we used the ordering of the eigenvalues $\lambda_2(t) \geq \lambda_1(t)$.

And for the square root gauge, we have $a = r(t) - 1$, $b = 1 - r(t)^{-1}$ where we have defined $r(t) = 
\sqrt{\lambda_1(t) / \lambda_2(t)}$. This implies 
\begin{equation*}
    \begin{aligned}
        &(a+b)^2 = (r(t)^2 - 1)^2 / r(t)^2, \\ &\lambda_1(t) b^2 + \lambda_2(t) a^2  = (r(t) - 1)^2 (\lambda_1(t) / r(t)^2 + \lambda_2(t)) = 2 \lambda_2(t) (r(t) - 1)^2\\
        &\lambda_1(t) \lambda_2(t) (a+b)^2 / (\lambda_1(t) b^2 + \lambda_2(t) a^2) = \frac{1}{2}\lambda_2(t) (r(t) + 1)^2 = \frac{1}{2} (\sqrt{\lambda_1(t)} + \sqrt{\lambda_2(t)})^2.
    \end{aligned}
\end{equation*}
Hence, 
\begin{equation*}
    \lambda_1(t)\widetilde{\mbDelta}(t)_{21}^2 + \lambda_2(t)\widetilde{\mbDelta}(t)_{12}^2 \geq \frac{\omega^2}{2} (\sqrt{\lambda_1(t)} + \sqrt{\lambda_2(t)})^2 \geq 2 \omega^2 \sqrt{\lambda_1(t) \lambda_2(t)} \geq 2 \omega^2 \lambda_1(t).
\end{equation*}

In both cases, we have proven that for each open interval that makes up $A$,
\begin{equation*}
    \min_{w(t)}
    \left\{
        \lambda_1(t)\widetilde{\Delta}_{21}(t)^2
        +
        \lambda_2(t)\widetilde{\Delta}_{12}(t)^2
    \right\}
    \geq 
    2 \omega^2 \lambda_1(t), \quad \text{for a.e.~$t$}.
\end{equation*}
Therefore, for almost every $t\in A$,
\begin{equation*}
    \mathrm{tr}\left(
        \widetilde{\mbDelta}(t)\mbLambda(t)\widetilde{\mbDelta}(t)^\trans
    \right)\geq
    \frac{\big(\frac{d}{dt}\lambda_1(t)-1+2\alpha \lambda_1(t)\big)^2}{4\lambda_1(t)}
    +
    2\omega^2\lambda_1(t).
\end{equation*}

We now proceed to the second set $E$. On this set, the covariance is isotropic: $\mbS(t)=\lambda(t)\bbI_2, \lambda(t) =\lambda_1(t)=\lambda_2(t)$.
No eigenbasis is needed here. Since $\mbS-\lambda \bbI_2=0$ on $E$ and both terms are absolutely continuous, we have
\begin{equation*}
    \frac{d}{dt}\mbS(t)= \frac{d}{dt}\lambda(t)\bbI_2
\end{equation*}
for almost every $t\in E$. For the symmetric gauge, the linear drift coefficient $\mbA_q(t)$ solves
\begin{equation*}
    \frac{d}{dt}\lambda(t)\bbI_2 = 2\lambda(t)\mbK(t)+\bbI_2,
\end{equation*}
and in the symmetric square root gauge,
\begin{equation*}
    \mbA_q(t)
    =
    \left(\frac{d}{dt}\mbS(t)^{1/2} \right)\mbS(t)^{-1/2} - \frac{1}{2}\mbS(t)^{-1}
    =
    \frac{\frac{d}{dt} \lambda(t) - 1}{2\lambda(t)}\bbI_2.
\end{equation*}
Thus, in either gauge,
\begin{equation*}
   \mbA_q(t) = \frac{\frac{d}{dt} \lambda(t) - 1}{2\lambda(t)}\bbI_2
    \qquad\text{for a.e. }t\in E.
\end{equation*}
A direct calculation then yields
\begin{equation*}
\begin{aligned}
    \mathrm{tr} \left(
        \big(\mbA_q(t) - (-\alpha \bbI_2+\omega \mbJ)\big)
        \mbS(t)
        \big(\mbA_q(t) - (-\alpha \bbI_2+\omega \mbJ)\big)^\trans
    \right)
    =
    \frac{\big(\frac{d}{dt} \lambda(t) -1+2\alpha \lambda(t)\big)^2}{2\lambda(t)}
    +
    2\omega^2\lambda(t),
\end{aligned}
\end{equation*}
which is stronger than the preceding bound.

Combining both cases, we conclude that for almost every $t\in[0,T]$,
\begin{equation*}
\begin{aligned}
    &\mathrm{tr}\left(
        \big(\mbA_q(t)-(-\alpha \bbI_2+\omega \mbJ)\big)
        \mbS(t)
        \big(\mbA_q(t)-(-\alpha \bbI_2+\omega \mbJ)\big)^\trans
    \right)
    \geq
    \frac{u(t)^2}{4\lambda_1(t)} + 2\omega^2\lambda_1(t)\\    &u(t) = \frac{d}{dt}\lambda_1(t)-1+2\alpha \lambda_1(t).
\end{aligned}
\end{equation*}
Therefore,
\begin{equation*}
    f_{\mbS}(\mbS)
    \geq
    I
    =
    \int_0^{T}
    \left[
        \frac{u(t)^2}{8\lambda_1(t)}
        +
        \omega^2\lambda_1(t)
    \right]\dif t.
\end{equation*}

\textbf{Step \ref{item:occupation-time}}:
Fix $c_0>1$ and assume $\omega \geq c_0\alpha$. 
Define the sets $E_{\delta}$ and $F_{\delta}$ that track the time $\lambda_1(t)$ spends above and below $\delta > 0$, respectively:
\begin{equation*}
    \delta = \frac{1}{4\omega},
    \qquad
    E_\delta = \{t \in [0, T] : \lambda_1(t)\geq \delta\},
    \qquad
    F_\delta = [0,T]\setminus E_\delta.
\end{equation*}
We will argue that if either $E_{\delta}$ or $F_{\delta}$ is large, then $I$ must be large.

If $|E_\delta|\geq T/2$, then
\begin{equation*}
    I
    \geq
    \int_{E_\delta}\omega^2\lambda_1(t)\,\dif t
    \geq
    |E_\delta|\,\omega^2\delta
    \geq
    \frac{T}{8}\,\omega.
\end{equation*}

Suppose instead that $|F_\delta|> T/2$. 
Since $\lambda_1(t)<\delta$ on $F_{\delta}$, 
\begin{equation*}
    2\alpha\lambda_1(t)
    \leq
    2\alpha\delta
    =
    \frac{\alpha}{2\omega}
    \leq
    \frac{1}{2c_0}.
\end{equation*}
Define
\begin{equation*}
    a_0 = 1-\frac{1}{2c_0} > 0.
\end{equation*}
Then, for all $t\in F_\delta$,
\begin{equation*}
    \frac{d\lambda_1(t)}{dt}=1-2\alpha\lambda_1(t)+u(t)\geq a_0 + u(t).
\end{equation*}

Since $F_\delta$ is open in $[0, T]$, we can write it as a countable disjoint union of open intervals $(a_j, b_j)$.
Integrating over a single interval,
\begin{equation*}
    \lambda_1(b_j) -\lambda_1(a_j)
    =
    \int_{a_j}^{b_j}\frac{d \lambda_1(t)}{dt} dt
    \geq
    a_0(b_j-a_j)
    +
    \int_{a_j}^{b_j}u(t)\,\dif t.
\end{equation*}
Rearranging yields
\begin{equation*}
    -\int_{a_j}^{b_j}u(t)\,\dif t
    \geq
    a_0(b_j-a_j)
    -
    \big(\lambda_1(b_j)-\lambda_1(a_j)\big).
\end{equation*}
If $(a_j,b_j)$ is an interior component, maximality and continuity imply
\begin{equation*}
    \lambda_1(a_j)=\lambda_1(b_j)=\delta,
\end{equation*}
so the boundary term vanishes. If $(a_j,b_j)$ touches $0$ or $T$, then both endpoint values lie in $[0, \delta]$, and therefore
\begin{equation*}
    \big|\lambda_1(b_j)-\lambda_1(a_j)\big| \leq \delta.
\end{equation*}
Since there are at most two boundary components, summing over all components gives
\begin{equation*}
    -\int_{F_\delta}u(t)\,\dif t
    \geq
    a_0 |F_\delta| - 2\delta.
\end{equation*}

By the AM-GM inequality,
\begin{equation*}
    \frac{u^2}{8\lambda}+\omega^2\lambda
    \geq
    \frac{\omega}{\sqrt{2}}|u| \implies 
    I \geq \frac{\omega}{\sqrt{2}}\int_0^{T}|u(t)|\,\dif t.
\end{equation*}
Therefore, we deduce
\begin{equation*}
\begin{aligned}
    &\int_0^{T} |u(t)|\,\dif t
    \geq
    \int_{F_\delta}|u(t)|\,\dif t
    \geq
    -\int_{F_\delta}u(t)\,\dif t
    \geq
    a_0 |F_\delta| - 2\delta
    >
    \frac{a_0 T}{2} - 2\delta\\
    \implies&I
    \geq
    \frac{\omega}{\sqrt{2}}
    \left(
        \frac{a_0 T}{2} - 2\delta
    \right)
    =
    \frac{1-\frac{1}{2c_0}}{2\sqrt{2}} T\omega
    -
    \frac{1}{2\sqrt{2}}.
\end{aligned}
\end{equation*}

Combining the two cases and the definition of $\mbf_{\mbS}$ in 
\cref{app:eqn:mean-cov-functions}, we conclude
\begin{equation*}
    f_{\mbS}(\mbS)
    \geq
    \min\left\{
        \frac{1}{8},\ 
        \frac{1-\frac{1}{2c_0}}{2\sqrt{2}}
    \right\}T \omega - \frac{1}{2\sqrt{2}}.
\end{equation*}

\textbf{Step \ref{item:indpendence}}:
Lastly, we show that when $p(\mbx,0)=\cN\left(\mb{0},\frac{1}{2\alpha}\bbI_2\right)$, then $C_{\mathrm{OU}}$ can be lower bounded by a constant independent of $\omega$. 

Consider the vector of stacked observations
\begin{equation*}
    \mby_{1:N}
    :=
    \big(
        \mby(t_1)^\trans,
        \ldots,
        \mby(t_N)^\trans
    \big)^\trans
    \in \bbR^{2N}.
\end{equation*}
Under the stationary OU spiral prior, the latent process is mean-zero Gaussian and
\begin{equation*}
    \mathrm{Cov}(\mbx(t_i),\mbx(t_j))
    =
    \frac{1}{2\alpha}
    e^{-\alpha |t_i-t_j|}
    \mbR_{\omega(t_i-t_j)},
\end{equation*}
where $\mbR_{\theta} \in \bbR^{2 \times 2}$ represents the $2 \times 2$ rotation matrix by angle $\theta$.

Therefore the stacked observation vector $\mby_{1:N}$ is Gaussian,
\begin{equation*}
    \mby_{1:N}
    \sim
    \cN\!\big(\bar{\mbd},\mbSigma_\omega\big),
\end{equation*}
with block covariance matrix $\mbSigma_\omega \in \bbR^{2N \times 2N}$
\begin{equation*}
    (\mbSigma_\omega)_{ij}
    =
    \frac{1}{2\alpha}
    e^{-\alpha |t_i-t_j|}
    \mbC \mbR_{\omega(t_i-t_j)} \mbC^\trans
    +
    \mathbf{1}\{i=j\}\,\sigma^2 \mbC \mbC^\trans.
\end{equation*}
and mean
\begin{equation*}
    \bar{\mbd}
    = \big(
        \mbd^\trans,
        \ldots,
        \mbd^\trans
    \big)^\trans
    \in \bbR^{2N}.
\end{equation*}
We first derive an $\omega$-uniform lower spectral bound. Since $\mbSigma_\omega$ is a covariance matrix, its latent contribution is positive semidefinite. Hence,
\begin{equation*}
    \mbSigma_\omega
    \succeq
    \sigma^2 \,
    \mathrm{diag}\!\big(
        \mbC\mbC^\trans,
        \ldots,
        \mbC\mbC^\trans
    \big).
\end{equation*}
It follows that
\begin{equation*}
    \lambda_{\min}(\mbSigma_\omega)
    \geq
    \sigma^2 \lambda_{\min}(\mbC\mbC^\trans)
    =
    m_* > 0,
\end{equation*}
uniformly in $\omega$.

Next we derive an $\omega$-uniform upper spectral bound. Let
\begin{equation*}
    \mbz
    =
    \big(
        \mbz_1^\trans,
        \ldots,
        \mbz_N^\trans
    \big)^\trans
    \in \bbR^{2N},
    \qquad
    \mbz_i \in \bbR^2.
\end{equation*}
Then
\begin{equation*}
    \mbz^\trans \mbSigma_\omega \mbz
    =
    \sigma^2
    \sum_{i=1}^N
    \mbz_i^\trans \mbC\mbC^\trans \mbz_i +
    \frac{1}{2\alpha}
    \sum_{i,j=1}^N
    e^{-\alpha |t_i-t_j|}
    \mbz_i^\trans
    \mbC \mbR_{\omega(t_i-t_j)} \mbC^\trans
    \mbz_j.
\end{equation*}
Since $\mbR_{\omega(t_i-t_j)}$ is orthogonal $\|
        \mbC \mbR_{\omega(t_i-t_j)} \mbC^\trans
    \big\|
    \leq
    \|\mbC\|^2.$
Therefore
\begin{equation*}
    \mbz^\trans \mbSigma_\omega \mbz
    \leq 
    \sigma^2 \|\mbC\|^2
    \sum_{i=1}^N \|\mbz_i\|_2^2  +
    \frac{\|\mbC\|^2}{2\alpha}
    \sum_{i,j=1}^N
    \|\mbz_i\|_2 \|\mbz_j\|_2.
\end{equation*}
Using
\begin{equation*}
    \sum_{i,j=1}^N
    \|\mbz_i\|_2 \, \|\mbz_j\|_2
    =
    \left(
        \sum_{i=1}^N \|\mbz_i\|_2
    \right)^2
    \leq
    N \sum_{i=1}^N \|\mbz_i\|_2^2,
\end{equation*}
we obtain
\begin{equation*}
    \mbz^\trans \mbSigma_\omega \mbz
    \leq
    \|\mbC\|_{\mathrm{op}}^2
    \left(
        \sigma^2 + \frac{N}{2\alpha}
    \right)
    \sum_{i=1}^N \|\mbz_i\|_2^2.
\end{equation*}
Hence
\begin{equation*}
    \lambda_{\max}(\mbSigma_\omega)
    \leq
    \|\mbC\|_{\mathrm{op}}^2
    \left(
        \sigma^2 + \frac{N}{2\alpha}
    \right)
    =
    M_*,
\end{equation*}
again uniformly in $\omega$.

Thus, for every $\omega$,
\begin{equation*}
    m_* \bbI_{2N}
    \preceq
    \mbSigma_\omega
    \preceq
    M_* \bbI_{2N}.
\end{equation*}

Since $\mby_{1:N}\sim \cN(\bar{\mbd},\mbSigma_\omega)$, its density is
\begin{equation*}
    p_\omega(\mby_{1:N})
    =
    (2\pi)^{-N}
    \det(\mbSigma_\omega)^{-1/2}
    \exp\left(
        -\frac{1}{2}
        (\mby_{1:N}-\bar{\mbd})^\trans
        \mbSigma_\omega^{-1}
        (\mby_{1:N}-\bar{\mbd})
    \right).
\end{equation*}
Using the spectral bounds above,
\begin{equation*}
    (\mby_{1:N}-\bar{\mbd})^\trans
    \mbSigma_\omega^{-1}
    (\mby_{1:N}-\bar{\mbd})
    \leq
    \frac{\|\mby_{1:N}-\bar{\mbd}\|_2^2}{m_*}, \quad
    \det(\mbSigma_\omega)
    \leq
    M_*^{2N}.
\end{equation*}
Therefore,
\begin{equation*}
    p_\omega(\mby_{1:N})
    \geq
    (2\pi)^{-N}
    M_*^{-N}
    \exp\!\left(
        -\frac{\|\mby_{1:N}-\bar{\mbd}\|_2^2}{2m_*}
    \right).
\end{equation*}
Taking logarithms yields
\begin{equation*}
    \log p_\omega(\mby_{1:N})
    \geq
    -N\log(2\pi)
    -
    N\log M_*
    -
    \frac{\|\mby_{1:N}-\bar{\mbd}\|_2^2}{2m_*}.
\end{equation*}
Thus the claimed bound holds with
\begin{equation*}
    C_{\mathrm{OU-stationary}} = N\log\sigma^2 - N\log M_* - \frac{\|\mby_{1:N}-\bar{\mbd}\|_2^2}{2m_*}.
\end{equation*}
which is independent of $\omega$.
\end{proof}

\subsection{Simulation experiment details}\label{app:subsec:ou-spiral-supplement}

In this subsection, we provide details regarding our OU spiral experiments. 
These experiments support the theoretical results in \Cref{app:subsec:helmholtz-gaussian} and \Cref{app:subsec:ou-theory}.
In summary, these results state that Helmholtz-SDE achieves zero KL gap to the true posterior for all rotation speeds $\omega$, while the symmetric \cite{course2023state, course2023amortized} and square root \cite{bartosh2025sde} gauges incur a KL gap that grows linearly in $\omega$. 
These results characterize the variational families themselves, not the optimization problem solved in practice: with finite-capacity parameterizations of $\mbm(t | \mbphi)$ and $\mbS(t | \mbphi)$, and with stochastic gradient-based optimization, neither family attains its theoretical optimum. 
Our experiments test whether the theoretical separation between Helmholtz-SDE and the gauges translates into practical differences.

For both our inference and learning experiments, we use the Adam optimizer \cite{kingma2014adam} with initial learning rate $10^{-3}$, final learning rate $10^{-4}$, and exponential learning rate decay. We run experiments on an NVIDIA A100 or H100 GPU.

We generate the OU dataset by sampling latents according the initial distribution $p(\mbx, 0) = \cN(\mb{0}, 1/(2\alpha) \bbI_2)$ and prior dynamics \cref{app:eqn:ou-spiral} on $[0, 5]$. 
For all experiments we fix $\alpha = 0.2$. 
We sample using Euler--Maruyama with stepsize $5 \cdot 10^{-4}$. To generate observations, we select a random subset of times $\{t_n^{(m)}\}_{n=1}^N$ on the interval $[0, 5]$ for each trial $m$, and then we draw $\mby^{(m)}(t_n^{(m)}) \sim \cN(\mbC_{\mathrm{true}} \mbx^{(m)}(t_n) + \mbd_{\mathrm{true}}, \sigma^2 \mbC_{\mathrm{true}}\mbC_{\mathrm{true}}^\top)$. $\mbC_{\mathrm{true}} \in \bbR^{2 \times 2}$ and  $\mbd_{\mathrm{true}} \in \bbR^d$ are chosen to have standard normal entries, with $\mbC_{\mathrm{true}}$ normalized to have determinant $1$.
In our experiments, we vary the rotation speed $\omega$, the number of observation $N$, and the observation noise $\sigma > 0$.

\paragraph{Inference.}
We first investigate the gap in expressivity of the variational family when the parameters of the generative model are held fixed at their true values. 

We do so by performing inference over $16$ independent trials. 
For each trial, we can compute the true posterior by running the Kalman--Bucy smoother. 
We do so with the SING codebase, which \citet{hu2025sing} prove converges to the true posterior in a single step in the conjugate setting. 
We fix the time discretization of the posterior to stepsize $5 \cdot 10^{-4}$.

To represent the variational posterior, we discretize $[0, 5]$ on a grid with constant separation $5 \cdot 10^{-3}$ ($1001$ total grid points). 
At each point we parameterize the posterior mean and covariance, and we interpolate linearly between the grid times. 
This yields approximately $6 \cdot 10^3$ variational parameters to learn, per trial. 
Moreover, note that for a randomly sampled time, the unbiased estimate to the path KL depends only on the two neighboring grid points. 
For this reason, the variational posterior is slow to converge using a single Monte Carlo sample. 
Instead, we use $N_{\mathrm{time}} = 1001$ samples to compute each gradient update (i.e., on average, one sample per grid point). 
At each time point we draw $N_{\mathrm{space}} = 1$ spatial sample. 
Increasing the sample count this way adds minimal computational overhead, since both sampling and gradient computation parallelize across samples. 
We perform training for $6 \cdot 10^5$ iterations.

To compute each direction (forward and backward KL divergence) of the symmetric KL divergence between the true and approximate posterior, we use Girsanov's formula \cref{app:eqn:path-kl}. 
In particular, we discretize the interval $[0, 5]$ with stepsize $5 \cdot 10^{-4}$ and at each time evaluate the integrand of the path KL $\cL_{\mathrm{path}}$. 
Since both the true and approximate posterior drifts are affine, this can be done in closed form using Gaussian expectations.

In the top row of \Cref{fig:spiral-2pi,fig:spiral-pi,fig:spiral-pi2,fig:spiral-pi4,fig:spiral-zero}, we report the symmetric KL divergence between the approximate and true posterior for $\omega \in \{ 2\pi, \pi, \pi/2, \pi/4, 0\}$. 
Broadly, we see that the suboptimality of choosing a single posterior drift per one-time Gaussian marginals (either the symmetric or square root gauge) increases as the number of observations decreases, the observation noise increases, and the rotational speed increases. 
Even when the rotation speed is strong, choosing a single posterior drift suffices as long as posterior uncertainty is small, i.e., the observations are strongly informative of the latent trajectory.
In settings where the posterior is weakly constrained by the data, we see order-of-magnitude differences between Helmholtz-SDE and the symmetric and square root gauges.

\paragraph{Joint inference and learning.}
Next, we study the impact of the constrained variational family imposed by prior simulation-free VI algorithms on dynamics learning. 

To do so, we parameterize the prior drift with a shallow neural network\\
\texttt{(2) $\to$ Linear(2, 100) $\to$ Softplus $\to$ Linear(100, 2) $\to$ (2)}.\\
We learn prior dynamics together with the affine mapping $\{\mbC, \mbd\}$ and output noise $\mbR$ that define the Gaussian likelihood model.

We perform learning using $1024$ trials. 
For our experiments, we considered numerous parameterizations of the variational posterior, including a per-trial grid posterior as in our inference experiments and an encoder-based amortized posterior as in our Lorenz experiments. 
For the grid posterior, we observed very slow convergence, even with a large number of Monte Carlo samples $N_{\mathrm{time}}$.
We hypothesize that this is because the parameter space is high dimensional (1001 grid points $\times$ 1024 trials $\times$ 6 parameters per grid point per trial $\approx$ $6 \cdot 10^6$ total parameters), and that each Monte Carlo sample only provides gradient signal at two neighboring grid points; moreover, the grid posterior does not benefit from a smoothness inductive bias.
On the other hand, training the neural network posterior was unstable; it required choosing a large hidden dimension and performing aggressive gradient clipping.
We did not observe that any parameterization preferred a specific gauge, i.e., the choice of posterior parameterization was independent of the gauge. 

We settled on the kernel-GLM posterior parameterization introduced by \citet{course2023state}, which can be viewed as a grid posterior augmented with a smoothness inductive bias and which couples gradient updates across nearby times.
At a high level, this parameterization chooses a grid of temporal inducing locations $\mbtau = \{\tau_n\}_{n=1}^{N_{\mathrm{induce}}}$ that partition $[0, T]$. 
It then defines the posterior mean $\mbm(t| \mbphi)$ at each time point $0 \leq t \leq T$ as $\mbM \mbC k(\mbtau, t |\mbphi)$, where $k(\cdot, \cdot | \mbphi)$ is a kernel function, $k(\mbtau, t |\mbphi) \in \bbR^{N_{\mathrm{induce}} \times 1}$ is a gram vector, $\mbM \in \bbR^{2 \times N_{\mathrm{induce}}}$ contains the inducing values, and $\mbC \in \bbR^{N_{\mathrm{induce}} \times N_{\mathrm{induce}}}$ is a pre-conditioning matrix. 
If $\mbC$ were equal to the inverse gram matrix $k(\mbtau, \mbtau |\mbphi)^{-1}$, then $\mbm(t| \mbphi)$ would exactly recover the predictive mean of a multi-output Gaussian process regressor with pseudo-observations $\bm{M}$ at locations $\bm{\tau}$.
However, this requires inverting an $N_{\mathrm{induce}} \times N_{\mathrm{induce}}$ matrix at each step; therefore, the authors fix it to a deterministic value at initialization (no gradient propagation).
A similar parameterization is used for the covariance $\mbS(t|\mbphi)$, which is chosen to be dense.
See \cite{course2023state} for details.

We use $N_{\mathrm{induce}} = 200$ inducing values and choose the kernel  $k(\cdot, \cdot | \mbphi)$ to be Matérn 5/2 kernel, following \citet{course2023state}. The learnable parameters of the kernel are the output scale, length scale, and warping function parameters (Kumaraswamy warping). 
The kernel parameters (but not the inducing values) are shared across all trials.

We perform training for $3 \cdot 10^4$ iterations. At each iteration, we approximate the path KL for each trial using $N_{\mathrm{time}} = 16$ samples of $[0, 5]$. At each sampled time point we use $N_{\mathrm{space}} = 1$ spatial sample.

To compute the symmetric KL between the learned and true prior, we again use Girsanov's theorem \cref{app:eqn:path-kl}.
However, unlike in our inference experiments, the integrand cannot be computed in closed form, since the neural network prior is nonlinear and does not admit closed-form marginal distributions.
Instead, we draw $2048$ trials from both the learned and true prior using Euler--Maruyama with stepsize $5 \cdot 10^{-4}$, and approximate the path KL using Monte Carlo.

In the bottom row of \Cref{fig:spiral-2pi,fig:spiral-pi,fig:spiral-pi2,fig:spiral-pi4,fig:spiral-zero}, we report the symmetric KL divergence between the learned and true prior for $\omega \in \{ 2\pi, \pi, \pi/2, \pi/4, 0\}$. 
Generally, we observe that improved inference yields improved learning: the regimes in which the symmetric and square root gauges poorly approximate the posterior are the same regimes in which they poorly recover the prior dynamics. 
On the other hand, Helmholtz-SDE well approximates the posterior in regimes of high posterior uncertainty, which yields substantive improvements in dynamics recovery.

\clearpage
\begin{figure}[!t]
  \centering
  \includegraphics[width=0.9\textwidth]{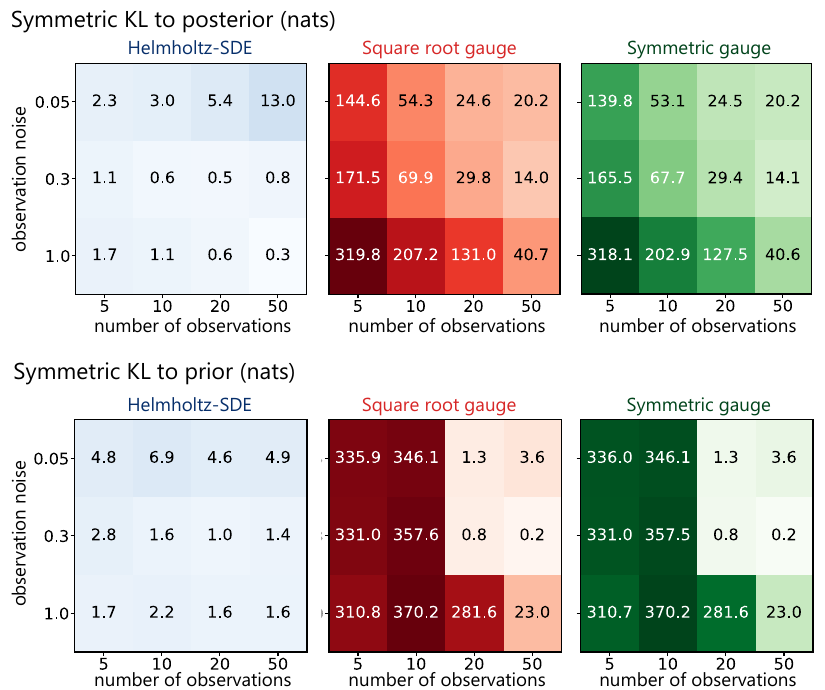}
  \caption{OU spiral with $\omega = 2\pi$; top: symmetric KL to posterior; bottom: symmetric KL to prior.}
  \label{fig:spiral-2pi}
\end{figure}
\begin{figure}[!b]
  \centering
  \includegraphics[width=0.9\textwidth]{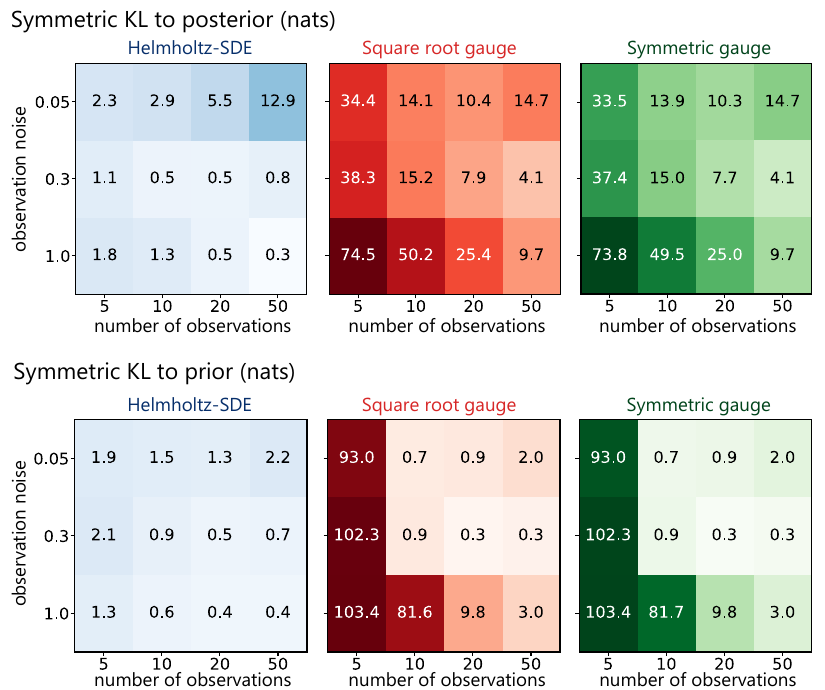}
  \caption{OU spiral  with $\omega = \pi$; top: symmetric KL to posterior; bottom: symmetric KL to prior.}
  \label{fig:spiral-pi}
\end{figure}

\clearpage
\begin{figure}[!t]
  \centering
  \includegraphics[width=0.9\textwidth]{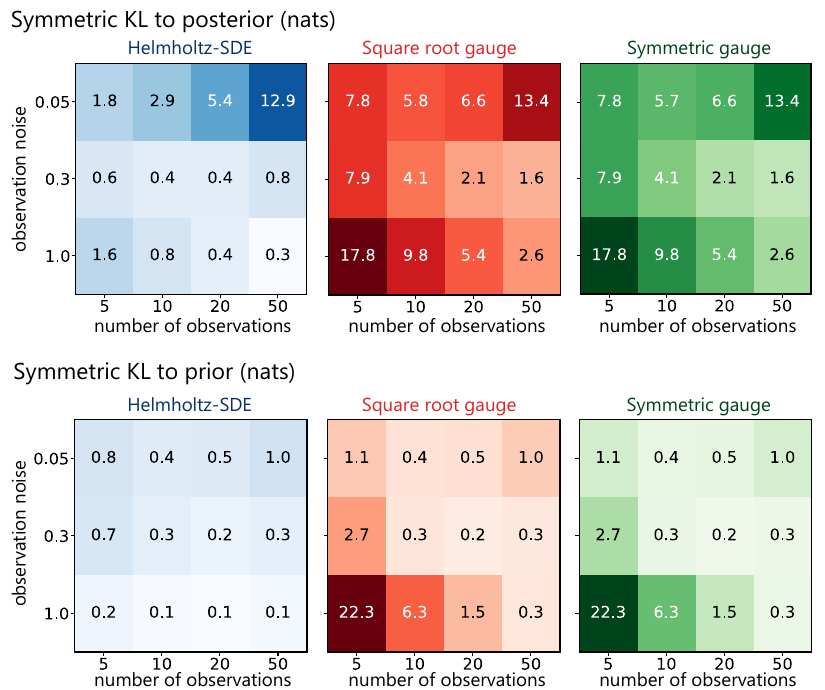}
  \caption{OU spiral  with $\omega = \pi/2$; top: symmetric KL to posterior; bottom: symmetric KL to prior.}
  \label{fig:spiral-pi2}
\end{figure}
\begin{figure}[!b]
  \centering
  \includegraphics[width=0.9\textwidth]{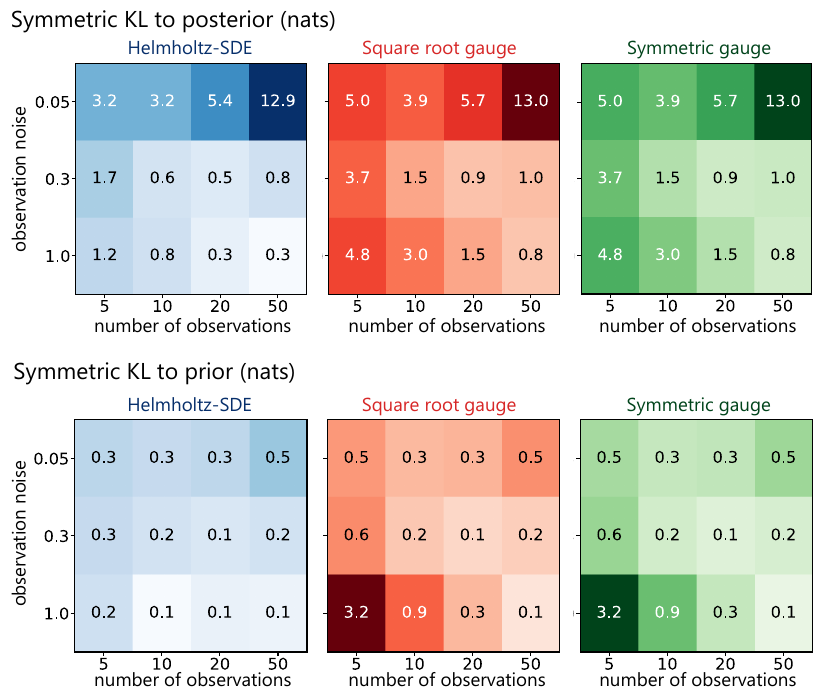}
  \caption{OU spiral with $\omega = \pi/4$; top: symmetric KL to posterior; bottom: symmetric KL to prior.}
  \label{fig:spiral-pi4}
\end{figure}

\clearpage
\begin{figure}[!t]
  \centering
  \includegraphics[width=0.9\textwidth]{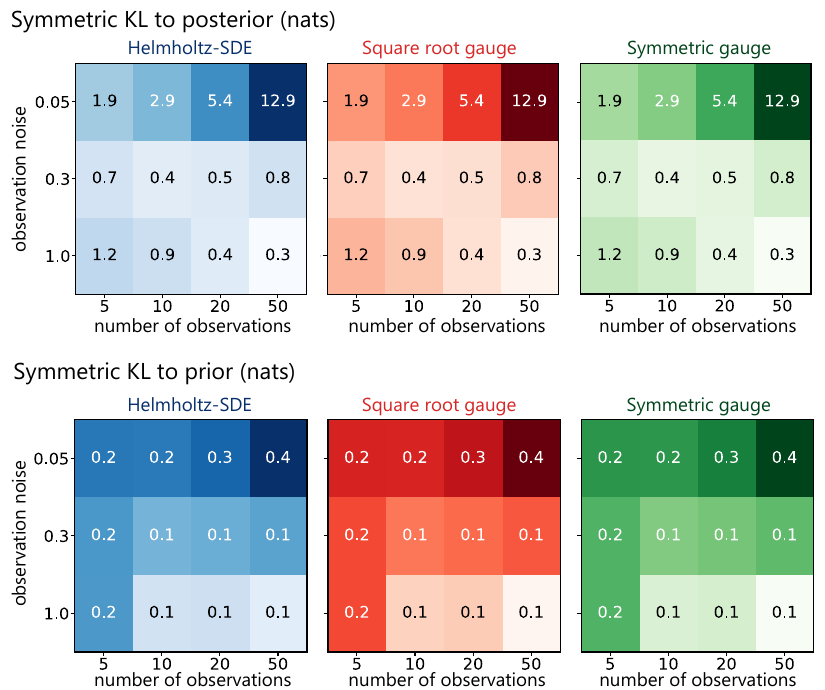}
  \caption{OU spiral  with $\omega = 0$; top: symmetric KL to posterior; bottom: symmetric KL to prior.}
  \label{fig:spiral-zero}
\end{figure}
\clearpage
\section{Nonlinear SDE experiments}\label{app:sec:nonlinear-sde}

In this section, we provide additional details on our three nonlinear SDE experiments: the stochastic Lorenz attractor (\Cref{app:subsec:lorenz}), rotifer-algae predator-prey dynamics (\Cref{app:subsec:predator-prey}), and reduced-order models for fluid dynamics (\Cref{app:subsec:rom})

For all experiments we use a Gaussian posterior approximation $q(\mbx, t) = \cN(\mbm(t | \mbphi), \mbS(t | \mbphi))$ with dense $\mbS \in \bbR^{K \times K}$.
We also use the Adam optimizer \cite{kingma2014adam} with exponential learning rate decay. 
We run experiments on an NVIDIA A100 or H100 GPU.

\subsection{Noisy Lorenz attractor}\label{app:subsec:lorenz}

\begin{figure*}[!t]
  \centering
  \includegraphics[width=\textwidth]{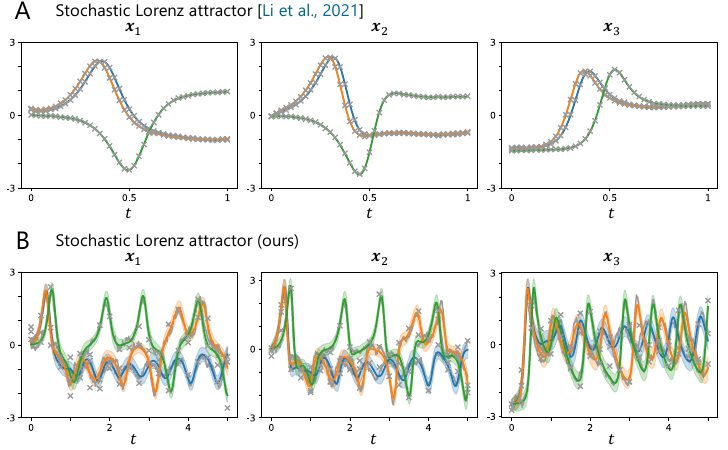}
  \caption{
    Visualization of three samples of stochastic Lorenz attractor dataset from \citet{li2020scalable} (\textbf{A}) and this paper (\textbf{B}).
    Grey curves represent the true latent trajectory and grey $\times$ represent regularly-spaced observations.
    Colored curves represents the posterior mean and shading represents $\pm2$ posterior standard deviations obtained from the iterated extended Rauch-Tung-Striebel smoother (\citet[][Chapter 13.3]{sarkka2023bayesian}). The Bayesian posterior in \citet{li2020scalable} collapses to the true latent trajectory, whereas our dataset displays non-trivial posterior variance.
  }
  \label{fig:lorenz-compare}
\end{figure*}

\paragraph{Data generation.}
We generate a dataset consisting of $1024$ training samples and $2048$ test samples according to the $3$-dimensional Lorenz attractor dynamics
\begin{equation}\label{app:eqn:lorenz}
d \mbx(t) = \begin{pmatrix} \alpha(\mbx_2(t) - \mbx_1(t))\\
\mbx_1(t)(\rho - \mbx_3(t)) - \mbx_2(t)\\
\mbx_1(t)\mbx_2(t) - \beta \mbx_3(t)
\end{pmatrix} dt + \sigma d\mbw(t), \quad \mbx(0) \sim \cN(\mb{0}, \bbI_3), \quad 0 \leq t \leq T
\end{equation}
where $(\alpha, \rho, \beta) = (10, 28, 8/3)$ and $\sigma > 0$ is the diffusion scaling. We do so using the Euler--Maruyama sampler with discretization $10^{-3}$. 

Following \citet{li2020scalable}, we make observations of the latent state every $\Delta t$ according to $\mbC_{\mathrm{true}} \mbx(t) + \mbd_{\mathrm{true}} + \mbepsilon(t)$, $\mbepsilon(t) \sim \cN(\mb{0}, \eta^2 \bbI_3)$, where $\mbC_{\mathrm{true}} = \mathrm{diag}(\mbs^{-1})$ and $\mbd_{\mathrm{true}} = - \mbm \odot \mbs^{-1}$; $\mbm, \mbs \in \bbR^3$ represent the empirical mean and standard deviation of the training trajectories, respectively, along each dimension and pooled across time. $\eta > 0$ is the observation standard deviation.

In the standard stochastic Lorenz attractor, the authors choose $T = 1$ and parameters $(\Delta t, \sigma, \eta) = (2.5 \cdot 10^{-2}, 1.5 \cdot 10^{-1}, 10^{-2})$. In the observation coordinates, the effective diffusion coefficient is $\sigma \mbs^{-1} \approx (1.7 \cdot 10^{-2}, 1.3 \cdot 10^{-2}, 9.8 \cdot 10^{-3})$, which is exceedingly small relative to the scale on which the latents evolve. As a result of the small process noise, frequent observations, and small measurement noise, we observe that the posterior collapses to the true latent trajectory (\Cref{fig:lorenz-compare}). Methods that differ substantially in how they represent or sample posterior uncertainty are therefore indistinguishable on this benchmark. 

We instead simulate the latents over a long interval $T = 5$ with larger process noise $\sigma = 5$, less frequent observations $\Delta t = 2.5 \cdot 10^{-1}$, and larger measurement noise $\eta = 0.3$. In this case, the effective diffusion coefficient in the observation space is $\sigma \mbs^{-1} \approx (6.0 \cdot 10^{-1}, 5.2 \cdot 10^{-1}, 5.2 \cdot 10^{-1})$. We find that this setting leads both to more interesting latent trajectories (frequent switching between lobes of the attractor, which is not demonstrated in the original dataset; \Cref{fig:lorenz-samples}A) and nontrivial posterior uncertainty (\Cref{fig:lorenz-compare}). 

\paragraph{Generative model.}
Consistent with prior work, we learn a $K = 4$ dimensional latent dynamical system representing the Lorenz attractor. 
The prior initial state is modeled as Gaussian $\cN(\mbmu_0, \mbV_0)$, $\mbmu_0 \in \bbR^4$, $\mbV_0 \in \bbR^{4 \times 4}$; the prior drift function \cref{eqn:prior_sde} is a neural network; and the observation model is linear and Gaussian $\cN(\mby(t) | \mbC \mbx(t) + \mbd, \sigma_{\mathrm{obs}}^2 \bbI)$, $\mbC \in \bbR^{3 \times 4}$, $\mbd \in \bbR^3$, $\sigma_{\mathrm{obs}} > 0$.
We learn the parameters of the prior SDE and observation model jointly with those of the posterior.

\paragraph{Baselines.}
We compare Helmholtz-SDE to two simulation-free VI algorithms, ARCTA \cite{course2023amortized} and SDE-Matching \cite{bartosh2025sde}, as well as two simulation-based VI algorithms, SING \cite{hu2025sing} and Latent-SDE \cite{li2020scalable}.
Note that `ARCTA' simply describes an amortization strategy for SVISE \cite{course2023state}. In particular, the variational drift, the symmetric gauge, remains the same.

We aim to reproduce the original Lorenz attractor experimental setup as faithfully as possible to the original papers, with our modified data generation procedure. 
Among these algorithms, there are two noteworthy differences: 
(1) SDE Matching \cite{bartosh2025sde} and Latent-SDE \cite{li2020scalable} support state-dependent diffusion coefficient; 
and (2) SING does not support an amortized posterior approximation. 
For SING, we discretize the posterior on a grid of size $10^{-3}$ and use mini-batches of $500$ trials; mini-batching is necessary since the posteriors for all $1024$ trials cannot fit simultaneously into memory (on a 80 GB GPU).
We performed a small sweep over the SING mini-batch size and grid size.

For Helmholtz-SDE, we adopt the same amortization strategy employed by SDE-Matching \cite{bartosh2025sde} and Latent-SDE \cite{li2020scalable} wherein for each trial, the sequence of observations is encoded by a GRU to produce an embedding. 
This embedding is then passed through an MLP to represent the posterior at each time. 
This strategy shares the posterior both across trials and across time.

We used the ARCTA (\url{https://github.com/coursekevin/arlatentsde}), SDE-Matching (\url{https://github.com/GrigoryBartosh/sde_matching}), SING (\url{https://github.com/lindermanlab/sing}), and Latent-SDE (\url{https://github.com/google-research/torchsde}) codebases, with modifications as described below.

\begin{itemize}
    \item \textbf{Helmholtz-SDE}:
    \begin{itemize}
        \item \texttt{latent\_dim = 4}, \texttt{obs\_dim = 3},  \texttt{time\_dim = 1}, \texttt{hidden\_dim = 100}
        \item 
        Prior drift:\\
        \texttt{(latent\_dim) $\to$ Linear(latent\_dim, hidden\_dim) $\to$ Softplus $\to$ Linear(hidden\_dim, latent\_dim) $\to$ (latent\_dim)}
        \item Posterior encoder:\\ \texttt{(n\_obs, obs\_dim) $\to$ GRU $\to$ (n\_obs + 1, hidden\_dim)}
        \item Posterior MLP:\\ 
        for a specified time, aggregate embeddings from encoder using softmax, \\
        \texttt{(n\_obs + 1, hidden\_dim), (time\_dim) $\to$ softmax $\to$  (hidden\_dim)}\\
        then map to posterior at that time\\
        \texttt{(hidden\_dim + time\_dim) $\to$ Linear(hidden\_dim + time\_dim, hidden\_dim) $\to$ SiLU  $\to$ Linear(hidden\_dim, hidden\_dim) $\to$ SiLU $\to$ Linear(hidden\_dim, output\_dim) $\to$ (output\_dim)}\\
        where \texttt{output\_dim} is \texttt{latent\_dim} (mean) + \texttt{latent\_dim $\cdot$ (latent\_dim + 1) / 2} (covariance)
        \item Diffusion network: fixed $\bbI_{\texttt{latent\_dim}}$
        \item Helmholtz correction: linear polynomial approximation $\ell = 1$ (no parameters)
        \item Observation model: fixed covariance $\sigma_{\mathrm{obs}}^2 = \eta^2$, learned $\mbC$, $\mbd$
        \item LR: $10^{-3}$ (constant)
        \item Gradient clipping: $5.0$
        \item Iterations: $3 \cdot 10^4$
        \item Number MC samples (per iteration): $N_{\mathrm{time}} = 1$ sample over $[0, T]$ to estimate each of the KL and reconstruction loss, per trial
    \end{itemize}
    \item \textbf{SDE Matching}:
    \begin{itemize}
        \item \texttt{latent\_dim = 4}, \texttt{obs\_dim = 3},  \texttt{time\_dim = 1}, \texttt{hidden\_dim = 100}
        \item 
        Prior drift:\\
        \texttt{(latent\_dim) $\to$ Linear(latent\_dim, hidden\_dim) $\to$ Softplus $\to$ Linear(hidden\_dim, latent\_dim) $\to$ (latent\_dim)}
        \item Posterior MLP:\\ \texttt{(n\_obs, obs\_dim) $\to$ GRU $\to$ (n\_obs, hidden\_dim)}
        \item Posterior MLP:\\ 
        for a specified time, aggregate embeddings from encoder using softmax, \\
        \texttt{(n\_obs, hidden\_dim), (time\_dim) $\to$ softmax $\to$  (hidden\_dim)}\\
        then map to posterior at that time\\
        \texttt{(hidden\_dim + time\_dim) $\to$ Linear(hidden\_dim + time\_dim, hidden\_dim) $\to$ SiLU  $\to$ Linear(hidden\_dim, hidden\_dim) $\to$ SiLU $\to$ Linear(hidden\_dim, output\_dim) $\to$ (output\_dim)}\\
        where \texttt{output\_dim} is \texttt{latent\_dim} (mean) + \texttt{latent\_dim $\cdot$ (latent\_dim + 1) / 2} (covariance)
        \item Diffusion network:\\ 
        state-dependent, diagonal diffusion coefficient; the input to each diagonal entry is a single hidden dimension\\
        \texttt{latent\_dim $\times$ [(1) $\to$ Linear(1, hidden\_dim) $\to$ Softplus $\to$ Linear(hidden\_dim, 1) $\to$ Sigmoid $\to$ (1)]}
        \item Observation model: fixed covariance $\sigma_{\mathrm{obs}}^2 = \eta^2$, learned $\mbC$, $\mbd$
        \item LR: $10^{-3}$ (constant)
        \item Gradient clipping: $5.0$
        \item Iterations: $3 \cdot 10^4$
        \item Number MC samples (per iteration): $N_{\mathrm{time}} = 1$ sample over $[0, T]$ to estimate each of the KL and reconstruction loss, per trial
    \end{itemize}
    \item \textbf{ARCTA}:
    We found that the Gaussian process parameterization of the posterior described by \citet{course2023amortized} (see Appendix D therein) led to poor empirical performance. This is because, at a high level, the posterior defines the mean and covariance at each time as a linear combination of means and covariances at observation times.
    Since in our setting the observations occur sparsely in time, interpolating in the space of means and covariances directly is inappropriate. 
    Instead, we adopt the posterior parameterization from SDE Matching, wherein interpolation occurs in the embedding space \cite{bartosh2025sde}.

    Moreover, \citet{course2023amortized} proposes chunking the full time series of observations into non-overlapping windows. For each observation within a given window, they produce an embedding using a specified number of future observations, and then they aggregate embeddings within the window. We find that choosing window size and number of observation shorter than the full time series degrades performance. For this reason, we consider a window equal to the full time series and allow each embedding to depened on all future observations.   
   
    \begin{itemize}
        \item \texttt{latent\_dim = 4}, \texttt{obs\_dim = 3},  \texttt{time\_dim = 1}, \texttt{hidden\_dim = 100}
        \item 
        Prior drift:\\
        \texttt{(latent\_dim) $\to$ Linear(latent\_dim, hidden\_dim) $\to$ Softplus $\to$ Linear(hidden\_dim, latent\_dim) $\to$ (latent\_dim)}
        \item Posterior MLP:\\ \texttt{(n\_obs, obs\_dim) $\to$ GRU $\to$ (n\_obs, hidden\_dim)}
        \item Posterior MLP:\\ 
        for a specified time, aggregate embeddings from encoder using softmax, \\
        \texttt{(n\_obs, hidden\_dim), (time\_dim) $\to$ softmax $\to$  (hidden\_dim)}\\
        then map to posterior at that time\\
        \texttt{(hidden\_dim + time\_dim) $\to$ Linear(hidden\_dim + time\_dim, hidden\_dim) $\to$ SiLU  $\to$ Linear(hidden\_dim, hidden\_dim) $\to$ SiLU $\to$ Linear(hidden\_dim, output\_dim) $\to$ (output\_dim)}\\
        where \texttt{output\_dim} is \texttt{latent\_dim} (mean) + \texttt{latent\_dim $\cdot$ (latent\_dim + 1) / 2} (covariance)
        \item Diffusion network: fixed $\bbI_{\texttt{latent\_dim}}$
        \item Observation model: fixed covariance $\sigma_{\mathrm{obs}}^2 = \eta^2$, learned $\mbC$, $\mbd$
        \item LR: $10^{-3}$ (constant)
        \item Gradient clipping: $5.0$
        \item Iterations: $3 \cdot 10^4$ 
        \item Number MC samples (per iteration): $1$ sample over $[0, T]$ to estimate the KL, per trial; use full set of per-trial observations at each step to estimate reconstruction loss (for consistency with ARCTA window algorithm)
    \end{itemize}
    \item \textbf{SING}:
    \begin{itemize}
        \item \texttt{latent\_dim = 4}, \texttt{obs\_dim = 3},  \texttt{hidden\_dim = 100},
        \texttt{grid\_size = $10^3$ + 1}
        \item 
        Prior drift:\\
        \texttt{(latent\_dim) $\to$ Linear(latent\_dim, hidden\_dim) $\to$ Softplus $\to$ Linear(hidden\_dim, latent\_dim) $\to$ (latent\_dim)}
        \item Posterior (non-amortized): tensors of shape \texttt{(grid\_size, latent\_dim)} (mean), \texttt{(grid\_size, latent\_dim, latent\_dim)} (covariance), \texttt{(grid\_size - 1, latent\_dim, latent\_dim)} (cross-covariance)
        \item Diffusion network: fixed $\bbI_{\texttt{latent\_dim}}$
        \item Observation model: fixed covariance $\sigma_{\mathrm{obs}}^2 = \eta^2$, learned $\mbC$, $\mbd$
        \item Mini-batch size: $500$
        \item LR: 
        \begin{itemize}
            \item SING stepsize (i.e., natural gradient update): $10^{-3}$ to $10^{-2}$ on log-linear scale for first $10$ iterations, constant after
            \item M-step stepsize: $5 \cdot 10^{-3}$ (Adam)
        \end{itemize}
        \item Gradient clipping: $5.0$
        \item Iterations: for each iteration perform 1 SING step, 1 M-step gradient update; $10^4$ total iterations
        \item Number MC samples (per iteration): $1$ sample to compute expectations under the prior
    \end{itemize}
    \item \textbf{Latent-SDE}:
    \begin{itemize}
        \item \texttt{latent\_dim = 4}, \texttt{obs\_dim = 3},  \texttt{time\_dim = 1}, \texttt{hidden\_dim = 128}, \texttt{embedding\_dim = 64}
        \item 
        Prior drift:\\
        \texttt{(latent\_dim) $\to$ Linear(latent\_dim, hidden\_dim) $\to$ Softplus $\to$ Linear(hidden\_dim, hidden\_dim) $\to$ Softplus $\to$ Linear(hidden\_dim, latent\_dim) $\to$ (latent\_dim)}
        \item Posterior encoder:\\ \texttt{(n\_obs, obs\_dim) $\to$ GRU $\to$ (n\_obs, hidden\_dim) $\to$ Linear(hidden\_dim, embedding\_dim) $\to$ (embedding\_dim)}
        \item Posterior drift network:\\ 
        for a specified latent state, use embedding to produce drift\\
        \texttt{(latent\_dim + embedding\_dim) $\to$ Linear(latent\_dim + embedding\_dim, hidden\_dim) $\to$ Softplus $\to$ Linear(hidden\_dim, hidden\_dim) $\to$ Softplus $\to$ Linear(hidden\_dim, latent\_dim) $\to$ (latent\_dim)}
        \item Posterior initial distribution:\\
        use embedding to produce distribution (Gaussian) over initial latent state\\
        \texttt{(embedding\_dim) $\to$ Linear(embedding\_dim, latent\_dim + latent\_dim) $\to$ (latent\_dim + latent\_dim)}\\
        first \texttt{latent\_dim} outputs represent mean, second \texttt{latent\_dim} outputs represent log standard deviation
        \item Diffusion network:\\ 
        state-dependent, diagonal diffusion coefficient; the input to each diagonal entry is a single hidden dimension\\
        \texttt{latent\_dim $\times$ [(1) $\to$ Linear(1, hidden\_dim) $\to$ Softplus $\to$ Linear(hidden\_dim, 1) $\to$ Sigmoid $\to$ (1)]}
        \item Observation model: fixed covariance $\sigma_{\mathrm{obs}}^2 = \eta^2$, learned $\mbC$, $\mbd$
        \item Training discretization: $10^{-3}$ (for posterior sampling)
        \item LR: initial $10^{-3}$, final $10^{-4}$ (exponential LR decay)
        \item Gradient clipping: None
        \item Iterations: $3 \cdot 10^4$
        \item KL annealing iterations: $10^3$ 
        \item Number MC samples (per iteration): $1$ path simulated from the posterior per trial

    \end{itemize}
\end{itemize}

\paragraph{Metrics.}\leavevmode\\

We evaluate the fidelity of the posterior and prior dynamics according to five metrics, which we explain below.

\textbf{nELBO.} Average per-trial negative ELBO (the variational objective). Smaller values are better but do not imply better posterior recovery or learning.

\textbf{Posterior inference.}
To evaluate posterior quality, we use the log likelihood of the true latents  under the posterior distribution averaged over time $T$ and trials $M$, calculated as 
\begin{equation}\label{app:eqn:predictive-prob}
   \frac{1}{TM} \sum_{m, t} \log 
\bar{q}(\mby^{(m)}(t), t).
\end{equation}
$\bar{q}(\mby, t)$ represents the posterior predictive density of  $\bar{\mby}(t) = \mbC \mbx(t) + \mbd + \mbepsilon(t), \ \mbepsilon(t) \sim \cN(\mb{0}, \mbR)$ and $\mby(t) = \mbC \mbx(t) + \mbd$ is the true latent at time $t$, passed through the output mapping. 
Note that for all cases except latent-SDE \cite{li2020scalable}, $q(\mbx, t) = \cN(\mbm(t), \mbS(t))$ is Gaussian, in which case  the posterior predictive distribution admits a closed form $\bar{q}(\mby, t) = \cN(\mbC \mbm(t) + \mbd, \mbC \mbS(t) \mbC^\top + \mbR)$.
For latent-SDE, we estimate $\bar{q}(\mby^{(m)}(t), t)$ by drawing $N_{\mathrm{density}} = 64$ samples $\mbx^{(k)}(t) \sim q(\mbx, t)$, computing $\bar{\mby}^{(k)}(t) =  \mbC \mbx^{(k)}(t) + \mbd$, and then evaluating
\begin{equation}\label{app:eqn:latent-sde-estimate}
    \frac{1}{TM} \sum_{m, t} \log \left(\frac{1}{N_{\mathrm{density}}} \sum_k \cN(\mby | \bar{\mby}^{(k)}(t), \mbR) \right).
\end{equation}
Note that \cref{app:eqn:latent-sde-estimate} is a sensible approximation to \cref{app:eqn:predictive-prob} since
{\small
\begin{equation*}
\begin{aligned}
    \bar{q}(\mby, t) &= \int q(\mbx, t) \cN(\mby | \mbC \mbx + \mbd, \mbR) d \mbx
    = \bbE_{q(\mbx, t)}[\cN(\mby | \mbC \mbx + \mbd, \mbR)] \approx \frac{1}{N_{\mathrm{density}}} \sum \cN(\mby | \bar{\mby}^{(k)}(t), \mbR).
\end{aligned}
\end{equation*}
}

\textbf{Prior learning.}
For evaluating the learned prior dynamics, we draw $2048$ samples from the learned prior, starting from $q(\mbx, 0)$ and simulated using Euler--Maruyama with discretization $10^{-3}$. We evaluate the learned output mapping at these samples, from which we compute the following metrics:
\begin{enumerate}[label=(\roman*)]
    \item \textbf{One-time symmetric KL divergence}: symmetric KL divergence at a each time $[0, T]$, averaged over times. For samples from two distributions $p$ and $q$, we estimate the KL distance $\KL{p}{q}$ by fitting two Gaussian kernel density estimates and computing a Monte Carlo average.
    
    We do so using the following procedure, which mirrors that of \citet{kiyohara2025neural}:
    \begin{enumerate}
        \item Using 5-fold CV, select the KDE bandwidth for $q$; fit the density estimate (on all samples) using the chosen bandwidth.
        \item Split the samples from $p$ into $10$ folds. Holding out a single fold at a time, repeat the following subprocedure:
        \begin{enumerate}
            \item Choose the kernel bandwidth for $p$ using 5-fold CV; fit the density estimate (on all training folds) using the chosen bandwidth.
            \item Compute a Monte Carlo estimate to the KL divergence using the held-out fold.
        \end{enumerate}
    \end{enumerate}
    Importantly, the outer CV step for samples from $p$ ensures that the negative entropy is not systematically downward biased.
    
    \item \textbf{Global autocorrelation error}: Using the samples from learned prior and true prior, we estimate the correlation matrix at lag $0 \leq \tau \leq T$, $\mathrm{Corr}(\mbx(t), \mbx(t + \tau))$, for all $t$ such that $t, t + \tau$ lie on the sampling grid. 
    At each such time, we compute the squared Frobenius norm difference between the two correlation matrices and average over all valid times. We repeat this procedure for all lags $\tau$ on a linear grid of $[0, 2]$ with spacing $5 \cdot 10^{-2}$, and we average over these lags.
    Finally, we normalize by the total number of entries in the correlation matrix ($3 \times 3 = 9$) and take the square root of the result.

    \item \textbf{Lobe-wise autocorrelation error}:
    The Lorenz attractor produces trajectories that alternate between two ``lobes'' and exhibit local rotational motion within each lobe (\Cref{fig:lorenz-samples}A). The global autocorrelation error averages over the full attractor and mixes errors in inter-lobe switching with errors in the within-lobe rotational dynamics. Therefore, we compute a separate ``lobe-wise'' autocorrelation error, which assesses whether the dynamics are learned accurately within each each lobe of the Lorenz attractor. 
    
    We achieve this via the following procedure:
    \begin{enumerate}
        \item Assign each point to one of the two lobe centers, i.e., fixed points, according to Euclidean distance.
        \item From these labels, fit 2D planes (using PCA) to the points belonging to each lobe. Choose the offset such that the lobe center lies on the plane. 
        \item Project the points within each lobe onto their respective planes.
        \item Using these projections, identify contiguous time segments of minimum time $> 5 \cdot 10^{-2}$ (i.e., rapid transitions between lobes are dropped).
        \item 
        Using these segments within each lobe, compute the lagged autocorrelation error. 
        If the segment is shorter than the lag, it is dropped.
        Note that we divide by $2 \times 2 = 4$ correlation matrix entries instead of $9$.
    \end{enumerate}
    We report the lobe-wise autocorrelation error in \Cref{tab:lorenz-lobe}.
\end{enumerate}

\begin{table}[!h]
\centering
\setlength{\tabcolsep}{3pt}
\begin{tabular}{lccccc}
\toprule
& SING & Latent-SDE & Helmholtz-SDE & SDE Matching & SVISE \\
\midrule
Autocorr error $\downarrow$ & $0.137 \pm 0.015$ & $0.064 \pm 0.005$ & $\mathbf{0.052 \pm 0.004}$ & $0.182 \pm 0.005$ & $0.174 \pm 0.003$\\
\bottomrule
\end{tabular}
\vspace{0.5em}
\caption{Lobe-wise autocorrelation error for Helmholtz-SDE and the four baselines.}
\label{tab:lorenz-lobe}
\vspace{-1em}
\end{table}

\paragraph{Runtime comparison to SING \cite{hu2025sing}.}
We compare the runtime of Helmholtz-SDE to that of SING \cite{hu2025sing}, which is the only other VI algorithm that achieves mean per trial negative ELBO ${<} 50$. We time both algorithms on an NVIDIA H100 GPU, and we exclude JIT compilation time and ELBO evaluation time from the runtime calculation. For both algorithms, we assess convergence using the ELBO \cref{eqn:elbo} calculated on a dense time grid (even though, during optimization, Helmholtz-SDE uses the Monte Carlo estimate of the ELBO).

\paragraph{Ablation of polynomial order.}
\begin{figure*}[h]
  \centering
  \includegraphics[width=0.9\textwidth]{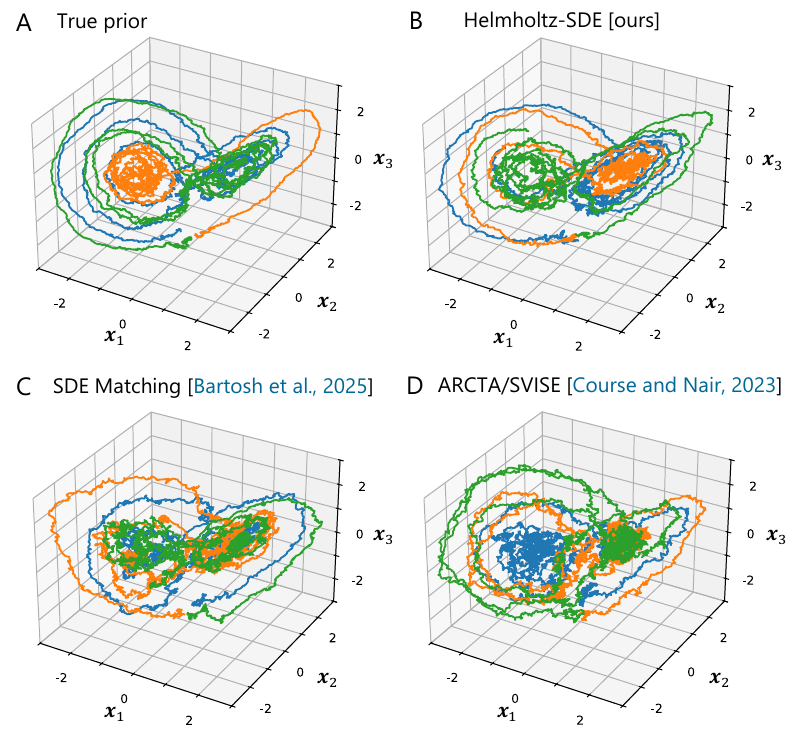}
  \caption{
   Three random samples from the 
   true prior (\textbf{A}),
   the prior learned by Helmholtz-SDE (\textbf{B}),
   SDE Matching (\textbf{C}),
   and ARCTA/SVISE (\textbf{D}).
   Helmholtz-SDE more faithfully learns the dynamics, particularly the outward spiral dynamics within each lobe of the attractor. 
  }\label{fig:lorenz-samples}
\end{figure*}

We report ablations over the polynomial approximation order for Helmholtz-SDE. 
See \Cref{app:subsec:resid-approx} for a discussion of how we compute the Helmholtz decomposition of an order-$\ell$ approximation to the residual $\mbr = \mbf_p - \mbf_q$. 
For this particular example, we observe that the negative ELBO decreases slightly by using a quadratic approximation ($\ell = 2$), but our metrics probing the quality of the learned dynamics do not.

% Note that if we had used a $K = 3$ dimensional latent space instead, and $\mbf_p$ was the Lorenz attractor dynamics, then the order $\ell = 2$ approximation would be exact (each dimension is a polynomial of order $2$).

\begin{table}[!h]
\centering
\setlength{\tabcolsep}{2pt}
\begin{tabular}{lccccc}
\toprule
& nELBO $\downarrow$ & latents LL $\uparrow$ & sym KL $\downarrow$ & autocorr (global) $\downarrow$ & autocorr (lobe) $\downarrow$ \\
\midrule
Helmholtz-SDE ($\ell{=}1$) & $47.3 \pm 0.2$ & $-0.09 \pm 0.00$ & $0.06 \pm 0.01$ & $0.005 \pm 0.001$ & $0.011 \pm 0.002$ \\
Helmholtz-SDE ($\ell{=}2$)& $46.7 \pm 0.4$ & $-0.10 \pm 0.00$ & $0.05 \pm 0.01$ & $0.006 \pm 0.001$ & $0.012 \pm 0.002$ \\
Helmholtz-SDE ($\ell{=}3$) & $46.6 \pm 0.3$ & $-0.10 \pm 0.00$ & $0.05 \pm 0.01$ & $0.006 \pm 0.001$ & $0.011 \pm 0.002$ \\
\bottomrule
\end{tabular}
\vspace{0.5em}
\caption{
Ablations on the polynomial approximation order for Helmholtz-SDE in the Lorenz attractor dataset.
Errors are $\pm2$SE reported over $5$ replicates.}
\label{tab:lorenz-ablate}
\vspace{-1em}
\end{table}

\subsection{Experimental predator-prey dynamics}\label{app:subsec:predator-prey}

In this subsection, we provide details regarding our experiments with the algae-rotifer predator-prey system from \citet{blasius2020long}.

\paragraph{Preprocessing.}
We obtain the dataset made public by the authors at \url{https://figshare.com/articles/dataset/Time_series_of_long-term_experimental_predator-prey_cycles/10045976/1}.

In our analysis, we consider only the first trial, which is the longest and spans $374$ days.
We choose not to consider multiple trials ($4$ total in the same experimental condition) simultaneously since there are apparent differences between them, for example, the mean and standard deviation of population sizes. Modeling the dynamics across multiple trials using, for example, a hierarchical Bayesian model is interesting future work. 

Since we learn autonomous (non time-dependent) dynamics, we restrict each trial to periods of coherent oscillation as determined  \citet{blasius2020long}. Otherwise, the latent dynamics must explain both coherent and non-coherent (no systematic phase lag) oscillations.
For the trial of interest, this yields four subtrials $\{ [19.5, 130.5], [159.5, 240.5], [274.5, 295.5], [314.5, 370.5] \}$.

We next log transform the data. The reason for this decision is that, in stochastic models of population dynamics,  modeling the noise as multiplicative in the state is a popular choice, particularly due to the stability properties it affords \cite{mao2002environmental}.
Log transforming the data then allows us to model the noise as state-independent, which is the setting in which Helmholtz-SDE operates.
For example, consider the Lotka-Volterra equations with state-dependent noise for predator $y$ and prey $x$
\begin{equation}\label{eqn:lotka-volterra}
    \begin{aligned}
        &dx(t) = (\alpha x(t) - \beta x(t)y(t))dt + \sigma_x x d \mbw_1(t)\\
        &dy(t) = (\delta x(t)y(t) - \gamma y(t) )dt + \sigma_y y d \mbw_2(t)
    \end{aligned}
\end{equation}
with fixed initial distribution and $\alpha, \beta, \delta, \gamma > 0$. Applying Ito's lemma to the map $(x(t), y(t)) \mapsto (\log x(t), \log  y(t))$ yields the dynamics in log coordinates
\begin{equation}\label{eqn:log-lotka-volterra}
    \begin{aligned}
        &d(\log x(t)) = \left( \alpha - \beta y(t) - \frac{1}{2}\sigma_x^2  \right)dt + \sigma_x d \mbw_1(t)\\
        &d(\log y(t)) = \left( \delta x(t) - \gamma - \frac{1}{2}\sigma_y^2 \right) dt + \sigma_y d \mbw_2(t)
    \end{aligned}.
\end{equation}
Importantly, the diffusion coefficient appearing in \cref{eqn:log-lotka-volterra} does not depend on the state. Observe that if \cref{eqn:log-lotka-volterra} describes the prior of a latent SDE model and we choose a posterior approximation with Gaussian one-time marginals, then the one-time marginal distributions of $x(t)$ become log-normal.

We hold out the last $8$ observations of each subtrial to compute forecasting MSE.

\paragraph{Generative model.}
We learn a $K = 2$ dimensional dynamical systems model of the predator-prey dynamics.
The prior initial state is modeled as Gaussian $\cN(\mbmu_0, \mbV_0)$, $\mbmu_0 \in \bbR^2$, $\mbV_0 \in \bbR^{2 \times 2}$; the prior drift function \cref{eqn:prior_sde} is a neural network; and the observation model is linear and Gaussian $\cN(\mby(t) | \mbC \mbx(t) + \mbd, \sigma_{\mathrm{obs}}^2 \bbI)$, $\mbC \in \bbR^{2 \times 2}$, $\mbd \in \bbR^2$, $\sigma_{\mathrm{obs}} > 0$.
We perform joint inference and parameter learning.

\paragraph{Baselines.}
We compare Helmholtz-SDE to SDE Matching \cite{bartosh2025sde}. For this case study, we find that defining the posterior on a grid of size $0.1$ days, as done in \Cref{app:subsec:ou-spiral-supplement}, yields good performance. Other parameterizations, such as the kernel-GLM in SVISE \cite{course2023state} perform comparably. 
For SDE Matching, we consider modeling the diffusion coefficient as both state-dependent and state-independent, but we find no substantive difference between the results. Results reported are for state-dependent diffusion coefficient.

\begin{itemize}
    \item \textbf{Helmholtz-SDE}:
    \begin{itemize}
        \item \texttt{latent\_dim = 2}, \texttt{obs\_dim = 2},  \texttt{time\_dim = 1}, \texttt{hidden\_dim = 64}
        \item 
        Prior drift:\\
        \texttt{(latent\_dim) $\to$ Linear(latent\_dim, hidden\_dim) $\to$ Softplus $\to$ Linear(hidden\_dim, latent\_dim) $\to$ (latent\_dim)}
        \item Posterior: grid of means and covariances, with spacing $0.1$ days; linear interpolation between grid points
        \item Diffusion network: fixed $\bbI_{\texttt{latent\_dim}}$
        \item Helmholtz correction: linear polynomial approximation $\ell = 1$ (no parameters)
        \item Observation model: learned $\mbC$, $\mbd$ and $\sigma_{\mathrm{obs}}^2$
        \item LR: $10^{-2}$ initial, $10^{-3}$ final (exponential LR decay)
        \item Gradient clipping: $5.0$
        \item Iterations: $3 \cdot 10^4$
        \item Number MC samples (per iteration): $N_{\mathrm{time}} = 10^4$ samples over $[0, T]$ to estimate each of the KL and reconstruction loss, per trial
    \end{itemize}
    \item \textbf{SDE Matching}:
    \begin{itemize}
        \item \texttt{latent\_dim = 2}, \texttt{obs\_dim = 2},  \texttt{time\_dim = 1}, \texttt{hidden\_dim = 64}
        \item 
        Prior drift:\\
        \texttt{(latent\_dim) $\to$ Linear(latent\_dim, hidden\_dim) $\to$ Softplus $\to$ Linear(hidden\_dim, latent\_dim) $\to$ (latent\_dim)}
         \item Posterior: grid of means and covariances, with spacing $0.1$ days; linear interpolation  between grid points
        \item Diffusion network:\\ 
        state-dependent, diagonal diffusion coefficient; the input to each diagonal entry is a single hidden dimension\\
        \texttt{latent\_dim $\times$ [(1) $\to$ Linear(1, hidden\_dim) $\to$ Softplus $\to$ Linear(hidden\_dim, 1) $\to$ Sigmoid $\to$ (1)]}
        \item Observation model: learned $\mbC$, $\mbd$ and $\sigma_{\mathrm{obs}}^2$
        \item LR: $10^{-2}$ initial, $10^{-3}$ final (exponential LR decay)
        \item Gradient clipping: $5.0$
        \item Iterations: $3 \cdot 10^4$
        \item Number MC samples (per iteration): $N_{\mathrm{time}} = 10^4$ samples over $[0, T]$ to estimate each of the KL and reconstruction loss, per trial
    \end{itemize}
    \item \textbf{SVISE}:
    \begin{itemize}
        \item \texttt{latent\_dim = 2}, \texttt{obs\_dim = 2},  \texttt{time\_dim = 1}, \texttt{hidden\_dim = 64}
        \item 
        Prior drift:\\
        \texttt{(latent\_dim) $\to$ Linear(latent\_dim, hidden\_dim) $\to$ Softplus $\to$ Linear(hidden\_dim, latent\_dim) $\to$ (latent\_dim)}
         \item Posterior: grid of means and covariances, with spacing $0.1$ days; linear interpolation  between grid points
        \item Diffusion network: fixed $\bbI_{\texttt{latent\_dim}}$
        \item Observation model: learned $\mbC$, $\mbd$ and $\sigma_{\mathrm{obs}}^2$
        \item LR: $10^{-2}$ initial, $10^{-3}$ final (exponential LR decay)
        \item Gradient clipping: $5.0$
        \item Iterations: $3 \cdot 10^4$
        \item Number MC samples (per iteration): $N_{\mathrm{time}} = 10^4$ samples over $[0, T]$ to estimate each of the KL and reconstruction loss, per trial
    \end{itemize}
\end{itemize}

\paragraph{Wavelet analysis.}

We assess the quality of the learned prior dynamics using the wavelet analysis from \citet{blasius2020long}.
In particular, we draw $1024$ samples from the learned prior with stepsize $2 \cdot 10^{-3}$ days, each of length $100$ days.
From each sample, we downsample to a grid of $1$ day and extract the peak cyclic period (\Cref{fig:rotifer-algae}C), the lag of predator behind prey (\Cref{fig:rotifer-algae}D), and the rotational coherence (\Cref{fig:rotifer-algae}D).
We then report the median and IQR of these statistics across $10$ replicates of fitting the latent SDE model. 

We use the authors' public codebase \url{https://github.com/berndblasius/WaveletAnalysis} to perform our analyses.
Next, we briefly discuss how we compute each of these statistics. 
We refer the interested reader to \citet[][supplementary methods]{blasius2020long} as well as \citet{torrence1998practical} for additional background on wavelet analysis.

\textbf{1.~The continuous wavelet transform.}\\
The \emph{continuous wavelet transform} of a one-dimensional signal $(x(t))_{0 \leq t \leq T}$ is defined as the convolution of the signal with the complex-valued wavelet function, centered at time $t$ and dilated by scale $s > 0$
\begin{equation}\label{app:eqn:wavelet}
    W_x(s, t) = \frac{1}{\sqrt{s}}\int_0^{T} x(t') \psi^*\left( \frac{t' - t}{s}\right)dt'.
\end{equation}
Here, $\psi^*$ represents the complex conjugate of $\psi$. 
While there are many possible choices of wavelet function, we consider the Morlet wavelet with $\omega_0 = 6$:
\begin{equation*}
    \psi(\tau) = \pi^{-1/4} \exp\{i \omega_0 \tau\} \exp\{-\tau^2/2\}.
\end{equation*}

To build intuition, consider what \cref{app:eqn:wavelet} would compute if $\psi$ were a pure complex exponential $\psi(\tau) = \exp\{2 \pi i \tau\}$. Then $|W_x(s, t)|$ reduces to the magnitude of the Fourier transform of $x$ at frequency $1/s$: it measures how strongly the signal oscillates at period $s$, but the result does not depend on $t$ and so carries no information about when in the time series this oscillation occurs. 
The Morlet wavelet recovers time localization by multiplying the complex exponential by a Gaussian envelope $\exp\{-\tau^2/2\}$, which restricts the integral in \cref{app:eqn:wavelet} to a neighborhood of $t$. 
The price of this localization is that $\psi$ no longer corresponds to a single frequency; instead, each scale $s$ is associated with a \emph{Fourier period} obtained by matching the dominant frequency of $\psi$ to that of a complex exponential. 
For the Morlet wavelet, this correspondence is
\begin{equation}\label{app:eqn:fourier-period}
    T(s) = \frac{4 \pi s}{\omega_0 + \sqrt{2 + \omega_0^2}},
\end{equation}
so that $T(s) \approx 1.03 s$ when $\omega_0 = 6$ \citep{torrence1998practical}. With this identification, $|W_x(s, t)|$ measures the local oscillatory content of the signal at time $t$ and period $T(s)$.

Following \citet{blasius2020long}, we evaluate the wavelet transform on a logarithmic grid of scales. The smallest scale is $s_0 = 2$ days, and the largest scale is $s_0 2^J$, where $J = 5$ is the number of octaves, and we use $100$ grid points per octave.

\textbf{2.~Wavelet power spectrum and cross spectrum.}\\
The local oscillatory content of a single signal is summarized by the \emph{wavelet power spectrum} (WPS), $|W_x(s, t)|^2$. 
To compare two signals, $x_1$ (algae) and $x_2$ (rotifers), we form the \emph{wavelet cross-spectrum} (WCS) $W_{x_1 x_2}(s, t) = W_{x_1}(s, t) W_{x_2}^*(s, t)$, whose magnitude $|W_{x_1 x_2}(s, t)|$ quantifies the joint oscillatory power at scale $s$ and time $t$, and whose argument
\begin{equation}\label{app:eqn:phase}
    \phi(s, t) = \arg W_{x_1 x_2}(s, t)
\end{equation}
gives the local phase difference between the two signals at scale $s$ and time $t$. 
A nonzero phase difference $\phi(s, t)$ indicates that one signal leads the other; converting to a time lag at the corresponding Fourier period yields $\Delta t (s, t) = \phi(s, t) T(s) / (2 \pi)$.
Following \citet{blasius2020long}, we smooth both the WPS and WCS using a Gaussian temporal filter and a moving-average filter over scales; we denote this operator by $\cS(\cdot)$.
 
A high value of $|W_{x_1 x_2}(s, t)|$ does not by itself indicate that the two signals are oscillating coherently: it may simply reflect that both signals have large individual power at $(s, t)$, even if their phases drift independently. To distinguish coherent oscillation from coincident power, we use the \emph{wavelet coherence} (WCO), defined as the normalized, locally smoothed cross-spectrum
\begin{equation}\label{app:eqn:coherence}
    R(s, t) = \frac{\left| \mathcal{S}\left( W_{x_1 x_2}(s, t) \right) \right|}{\sqrt{\mathcal{S}\left(|W_{x_1}(s, t)|^2 \right) \cdot \mathcal{S}\left( |W_{x_2}(s, t)|^2 \right)}},
\end{equation}
Coherence takes values in $[0, 1]$ and is invariant to the marginal amplitudes of $x_1$ and $x_2$: $R^2(s, t) \approx 1$ at points $(s, t)$ where the two signals oscillate with a stable phase relationship over a local neighborhood.
As pointed out by \citet{torrence1998practical}, smoothing is necessary since otherwise the wavelet coherence is identically $1$ for all times and scales.

\textbf{3.~Cone of influence.}\\
Because the wavelet transform \cref{app:eqn:wavelet} is computed on a finite signal, the integral is implicitly zero-padded outside $[0, T]$. This padding contaminates $W_x(s, t)$ near the boundaries $t = 0$ and $t = T$, with the contamination extending farther into the interior at larger scales $s$. The \emph{cone of influence} (COI) is the region of the time-scale plane where this edge contamination is small; following \citet{torrence1998practical}, points $(s, t)$ outside the COI are discarded.
 
\textbf{4.~Extracted statistics.}
For each prior sample, we use the WCS, WCO, and per-signal WPS to extract three summary statistics matching those reported by \citet{blasius2020long}.

\begin{enumerate}
    \item \textbf{Peak cyclic period}: The peak period of each individual population (algae, rotifers) is obtained by averaging the WPS over the COI and selecting the period at which the time-averaged amplitude is maximized:
    \begin{equation}\label{app:eqn:max-period}
        T^{\star}_{x_i} = \arg\max_s \frac{1}{|\cT_{\mathrm{COI}}(s)|} \int_{\cT_{\mathrm{COI}}(s)} |W_{x_i}(s, t)| \, dt,
    \end{equation}
    where $\cT_{\mathrm{COI}}(s)$ is the COI corresponding to scale $s$. This is mapped to a period via \cref{app:eqn:fourier-period}. 
    The peak period for a given sample is the average of the rotifer and algae peak periods. We report the median peak period across all samples, $T^\star_{\mathrm{ensemble}}$.
    \item \textbf{Predator-prey phase lag}: To estimate the phase lag, we restrict attention to scales in a neighborhood of the dominant joint period, computed as in \cref{app:eqn:max-period} but using the WCS instead of the WPS. We also restrict to time-scale points $(s, t)$ that lie within the COI and have wavelet coherence $R^2(s, t)$ exceeding the significance threshold $0.83$ used by \citet{blasius2020long}. 
    At each retained time $t$, we record the phase $\phi(s^\star(t), t)$ at the scale $s^\star(t)$ of locally maximal coherence within the band; times at which $s^\star(t)$ falls at the band boundary are excluded. For each prior sample, we summarize these phases by their circular mean
    \begin{equation*}
        \bar{\phi}^{(n)} = \arg\left( \sum_{t \in \mathcal{T}^{(n)}_{\mathrm{coh}}} e^{i \phi(s^\star(t), t)} \right),
    \end{equation*}
    where $\mathcal{T}^{(n)}_{\mathrm{coh}}$ is the set of retained times for sample $n$. 
    The ensemble phase lag is the arithmetic mean of $\bar{\phi}^{(n)}$ across prior samples with at least one retained time. 
    We convert this angle to a lag in time units by multiplying by $T^\star_{\mathrm{ensemble}} / (2\pi)$, where $T^\star_{\mathrm{ensemble}}$ is the corresponding ensemble peak period from the previous statistic. We report the lag computed from the median peak period $T^\star_{\mathrm{ensemble}}$.

    \textbf{3.~Rotational coherence}: The rotational coherence is the fraction of all time points $t$ whose within-band coherence argmax $s^\star(t)$ lies inside the COI and whose coherence value $R^2(s^\star(t), t)$ exceeds the significance threshold $0.83$,
    \begin{equation*}
    \rho_{\mathrm{coh}} = \frac{|\mathcal{T}_{\mathrm{coh}}|}{|\mathcal{T}|},
    \end{equation*}
    where $\mathcal{T}$ is the length of the full signal.
    This statistic measures the proportion of the trajectory over which algae and rotifers oscillate with a stable phase relationship near the dominant period, and is a sample-level analogue of the ``rotational fraction'' reported by \citet{blasius2020long}. 
    We report the median across prior samples.
    \end{enumerate}

    % \begin{table}[!h]
    % \centering
    % \begin{tabular}{lcccc}
    % \toprule
    % & period (days) & lag (days) & rotation prop $\uparrow$ & forecast MSE $\downarrow$ \\
    % \midrule
    % Blasius et al., 2019 & \textbf{6.7} & \textbf{1.7} & --- & --- \\
    % \midrule
    % Helmholtz-SDE & $6.93$ $(0.07)$ & $1.66$ $(0.02)$ & $0.71$ $(0.01)$ & $1.03$ $(0.02)$ \\
    % SDE Matching  & $13.33$ $(0.31)$ & $3.23$ $(0.11)$ & $0.46$ $(0.01)$ & $1.11$ $(0.03)$ \\
    % SVISE         & $13.46$ $(0.19)$ & $3.26$ $(0.04)$ & $0.49$ $(0.01)$ & $1.10$ $(0.02)$ \\
    % \bottomrule
    % \end{tabular}
    % \vspace{0.5em}
    % \caption{
    % A reproduction of \Cref{fig:rotifer-algae}D, with added results for SVISE \cite{course2023state,course2023amortized}.
    % }
    % \label{tab:predator-prey-full}
    % \end{table}

\subsection{Reduced-order models for fluid dynamics}\label{app:subsec:rom}

In this subsection, we discuss our experiments modeling a two-dimensional incompressible flow past a circular cylinder.
We obtain the public dataset from \citet{course2023state} at \url{https://github.com/coursekevin/svise}.

The data is simulated on the domain $[-0.98, 28.98] \times [-1.98, 1.98]$ with a circular cylinder of radius $0.5$ placed at $(0, 0)$ (\Cref{fig:fluid-dynamics}). The velocity field is obtained by numerically solving the incompressible Navier--Stokes equation
\begin{equation}
\begin{aligned}\label{app:eqn:navier-stokes}
   &\frac{\partial \mbu_i(\mbz, t)}{\partial t} + \sum_j \mbu_j(\mbz, t) \frac{\partial \mbu_i(\mbz, t)}{\partial \mbz_j} = -\frac{\partial \bar{p}(\mbz, t)}{\partial \mbz_i} + \frac{1}{\mathrm{Re}} \sum_j \frac{\partial^2 \mbu_i(\mbz, t)}{\partial \mbz_j^2}\\
   &\sum_i \frac{\partial \mbu_i(\mbz, t)}{\partial \mbz_i} = 0, \quad i \in \{1, 2\}, \quad (\mbz, t) \in \bbR^2 \times \bbR
\end{aligned}
\end{equation}
where $\mbu: \bbR^2 \times \bbR \to \bbR^2$ and $\bar{p}: \bbR^2 \times \bbR \to \bbR$ are the non-dimensionalized velocity and pressure, respectively, and $\mathrm{Re}$ is the Reynolds number. 
$\mathrm{Re}$ represents the ratio of inertial to viscous forces. 
For our experiments we use $\mathrm{Re} = 2 \cdot 10^3$, which yields strong vortices in the wake of the cylinder.
\Cref{app:eqn:navier-stokes} is solved by discretizing the spatial domain on a grid of size $596602$, and solving the Navier--Stokes equation on this grid (in time), subject to the boundary conditions defined by the outer domain and 2D cylinder (\Cref{fig:fluid-dynamics}A). 
The dataset simulates the dynamics for $305.1$ time units with snapshots every $0.1$ time units.

\paragraph{Preprocessing.}
We follow the same preprocessing steps as in \citet{course2023state}. 
Namely, we drop the first $20\%$ of the snapshots representing transients in the system.
We then use the interval $[61, 256.2]$ for training and the subsequent $10$ time units $[256.2, 266.2]$ for forecasting.
From the training data, we perform the proper orthogonal decomposition (POD) \cite{berkooz1993proper} and retain the leading spatial modes that together capture $90\%$ of the time-averaged kinetic energy; this amounts to $38$ modes. 
We then project the snapshots onto these $38$ POD modes to obtain a low-dimensional representation of the system.
Lastly, we normalize the POD modes by the maximum standard deviation, computed along each dimension (i.e., normalization is by a scalar).

Modeling the dynamics in the space of the top POD modes will yield a reduced-order model of the fluid dynamics. 
Given an initial condition, one can project the velocity field on the grid into the space of POD modes, roll forward the (stochastic) dynamics, and reconstruct the trajectory on the original grid. 
Mathematically, this is equivalent to taking the dynamics $f$ defined in the space of POD modes and defining the dynamics $\bar{f} = \mbPhi f(\mbPhi^\top (\mbz_{\mathrm{grid}} - \bar{\mbz}_{\mathrm{grid}}))$ in the original data space, where $\bar{\mbz}_{\mathrm{grid}}$ is the mean of the velocity measurements across time and $\mbPhi$ is the matrix whose columns contain the top POD modes.
Note that the dynamics $\bar{f}$ live on a low-dimensional affine subspace.

To generate observations, we subsample the POD modes to a grid of size $\Delta t = 0.3$.
We observe that many of the higher-order modes oscillate at a smaller scale than the lower-order modes. 
In order to add Gaussian noise in such a way that the higher order modes do not have low signal-to-noise ratio, we add noise proportional to the standard deviation along each dimension. 
And to demonstrate a setting with high posterior uncertainty, we choose noise standard deviation equal to the $0.3$ times the standard deviation along each dimension (signal-to-noise ratio $3.33$).
We observe that with more frequent observations (e.g., $\Delta t = 0.1$) and smaller noise scale (e.g., $0.1$ times the standard deviation along each dimension), the discrepancy between the learned dynamics is less apparent. 

\paragraph{Generative model.}
We learn a $K = 38$ dimensional ROM of fluid flow past the cylinder.
The prior initial state is modeled as Gaussian $\cN(\mbmu_0, \mbV_0)$, $\mbmu_0 \in \bbR^2$, $\mbV_0 \in \bbR^{38 \times 38}$; the prior drift function \cref{eqn:prior_sde} is a neural network; and the observation model is Gaussian $\cN(\mby(t) | \mbx(t), \mbR)$.
$\mbR$ is the diagonal observation noise, which we fix to its true value.

\paragraph{Baselines.}
To match the original experimental setup in \citet{course2023state}, we use the kernel-GLM posterior parameterization proposed by the authors. 
See \Cref{app:subsec:ou-spiral-supplement} for a discussion.

For SDE Matching \cite{bartosh2025sde}, we saw in our preliminary experiments that learning a state-dependent diffusion coefficient degraded learning. 
In order to separate the prior parameterization from posterior expressivity, we drop the state dependence from the diffusion parameterization. 
\begin{itemize}
    \item \textbf{Helmholtz-SDE}:
    \begin{itemize}
        \item \texttt{latent\_dim = 38}, \texttt{obs\_dim = 38},  \texttt{time\_dim = 1}, \texttt{hidden\_dim = 100}
        \item 
        Prior drift:\\
        \texttt{(latent\_dim) $\to$ Linear(latent\_dim, hidden\_dim) $\to$ Tanh $\to$ Linear(hidden\_dim, hidden\_dim) $\to$ Tanh $\to$ Linear(hidden\_dim, latent\_dim) $\to$ (latent\_dim)}
        \item Kernel-GLM, with $500$ evenly-spaced temporal inducing points; defined for both the posterior mean and covariance
        \item Diffusion network: learned diagonal $\mbSigma$, state and time independent 
        \item Helmholtz correction: linear polynomial approximation $\ell = 1$ (no parameters)
        \item Observation model: fixed
        \item LR: $10^{-3}$ initial, $10^{-4}$ final (exponential LR decay)
        \item Gradient clipping: None
        \item Iterations: $3 \cdot 10^4$
        \item Number MC samples (per iteration): $N_{\mathrm{time}} = 512$ samples over $[0, T]$ to estimate each of the KL and reconstruction loss, per trial
    \end{itemize}
    \item \textbf{SDE Matching}:
    \begin{itemize}
        \item \texttt{latent\_dim = 38}, \texttt{obs\_dim = 38},  \texttt{time\_dim = 1}, \texttt{hidden\_dim = 100}
        \item 
        Prior drift:\\
        \texttt{(latent\_dim) $\to$ Linear(latent\_dim, hidden\_dim) $\to$ Tanh $\to$ Linear(hidden\_dim, hidden\_dim) $\to$ Tanh $\to$ Linear(hidden\_dim, latent\_dim) $\to$ (latent\_dim)}
        \item Kernel-GLM, with $500$ evenly-spaced temporal inducing points; defined for both the posterior mean and covariance
        \item Diffusion network: learned diagonal $\mbSigma$, state and time independent 
        \item Observation model: fixed
        \item LR: $10^{-3}$ initial, $10^{-4}$ final (exponential LR decay)
        \item Gradient clipping: None
        \item Iterations: $3 \cdot 10^4$
        \item Number MC samples (per iteration): $N_{\mathrm{time}} = 512$ samples over $[0, T]$ to estimate each of the KL and reconstruction loss, per trial
    \end{itemize}
    \item \textbf{SVISE}:
    \begin{itemize}
        \item \texttt{latent\_dim = 38}, \texttt{obs\_dim = 38},  \texttt{time\_dim = 1}, \texttt{hidden\_dim = 100}
        \item 
        Prior drift:\\
        \texttt{(latent\_dim) $\to$ Linear(latent\_dim, hidden\_dim) $\to$ Tanh $\to$ Linear(hidden\_dim, hidden\_dim) $\to$ Tanh $\to$ Linear(hidden\_dim, latent\_dim) $\to$ (latent\_dim)}
        \item Kernel-GLM, with $500$ evenly-spaced temporal inducing points; defined for both the posterior mean and covariance
        \item Diffusion network: learned diagonal $\mbSigma$, state and time independent 
        \item Observation model: fixed
        \item LR: $10^{-3}$ initial, $10^{-4}$ final (exponential LR decay)
        \item Gradient clipping: None
        \item Iterations: $3 \cdot 10^4$
        \item Number MC samples (per iteration): $N_{\mathrm{time}} = 512$ samples over $[0, T]$ to estimate each of the KL and reconstruction loss, per trial
    \end{itemize}
\end{itemize}

\paragraph{Metrics.}

In \Cref{fig:fluid-dynamics}B, we compute the space--time autocorrelation of streamwise velocity fluctuations along the wake centerline. 
We draw samples from the prior starting at the true initial condition $\mbx(0)$ at the beginning of the training window, and simulating forward for $10$ time units using Euler--Maruyama with stepsize $5 \cdot 10^{-3}$. 
We observe that some trajectories begin to numerically diverge beginning at approximately $15-20$ time units, so we only analyze the first $10$ units.

At each sampled trajectory, we reconstruct the velocity field on the original grid by projecting back through the POD modes, and we extract the streamwise velocity component $\mbu_1(\mbz, t)$ along the strip $[1, 10] \times [-0.1, 0.1]$ in the near wake.
We then compute the joint space--time autocorrelation
\begin{equation*}
    C(\Delta \mbz_1, \Delta t) = \frac{\bbE\left[ \tilde{\mbu}_1(\mbz_1, t) \tilde{\mbu}_1(\mbz_1 + \Delta \mbz_1, t + \Delta t) \right]}{\bbE\left[ \tilde{\mbu}_1(\mbz_1, t)^2 \right]},
\end{equation*}
where $\tilde{\mbu}_1 = \mbu_1 - \bbE[\mbu_1]$ denotes the fluctuation about the sample mean and the expectation is approximated by averaging over reference points $(\mbz_1, t)$ along the strip and across prior samples.

There are two apparent patterns in the space--time autocorrelation: (1) oscillation in streamwise lag (from the periodic vortex pattern in the wake) and (2) diagonal banding in time (vortices being carried downstream).
The failure mode of simulation-free VI is specifically a loss of temporal coherence. 
This is because temporal correlations are those that depend on the joint posterior path law which, in turn, degrades the prior path law. The spatial correlations, on the other hand, exist within a single snapshot of time.

We also report the MSE, integrated over the $10$-unit forecasting window. 
We compute this quantity by simulating $128$ trajectories for $10$ time units using Euler--Maruyama with stepsize $10^{-4}$, starting from the ground truth POD modes at the end of the training window. 
We then transform the samples into the physical space by multiplying by the normalization constant and computing the linear combination with the top-$38$ POD vectors. Finally, we compute the MSE between the reconstructed velocity and the projection of the (true) velocity onto the top-$38$ POD modes. We normalize by the total grid size. 
We report results for $10$ training replicates with $\pm 2$ SE.

\begin{table}[!h]
\centering
\setlength{\tabcolsep}{3pt}
\begin{tabular}{lccc}
\toprule
& Helmholtz-SDE & SDE Matching & SVISE \\
\midrule
forecast MSE $\downarrow$ & $\mathbf{3.24 \pm 0.65}$ & $\mathbf{4.07 \pm 0.52}$ & $\mathbf{3.99 \pm 0.76}$\\
nELBO $\downarrow$ & $\mathbf{-2.58 \pm 0.02}$ & $-2.11 \pm 0.01$ & $-2.13 \pm 0.01$\\
\bottomrule
\end{tabular}
\vspace{0.5em}
\caption{Time integrated MSE computed over the $10$ time unit forecasting window. nELBO is reported in units of $10^4$, e.g., Helmholtz-SDE has nELBO $-2.58 \cdot 10^4 \pm 0.02 \cdot 10^4$.}
\label{tab:rom-mse}
\vspace{-1em}
\end{table}

\end{document}